\documentclass[letterpaper]{article} 
\usepackage[preprint]{paperstyle}  
\usepackage[hyphens]{url}  
\usepackage{graphicx} 
\usepackage{natbib}  
\usepackage{caption} 
\usepackage{algorithm}
\usepackage{algorithmic}
\usepackage{booktabs}
\usepackage{amsmath}
\usepackage{amssymb}
\usepackage{amsfonts}
\usepackage{amsthm}
\usepackage{mathtools}
\usepackage{bm}
\usepackage{mathrsfs}
\usepackage{array}
\usepackage{float}
\usepackage{hyperref}
\hypersetup{
  colorlinks=false,
  pdfborder={0 0 0},
  plainpages=false,
  pdfpagelabels=true
}

\newtheorem{Lem}{Lemma}
\newtheorem{Con}{Condition}
\newtheorem{Mth}{Theorem}
\newtheorem{Rem}{Remark}
\newtheorem{Cor}{Corollary}
\newtheorem{Pro}{Proposition}
\newtheorem{assumption}{Assumption}
\newcommand{\method}{\textsc{LADDER}}

\title{Chaos Is a LADDER: Domain Generalization Beyond Invariance via Reweighting}
\author{
Yuhang Jiang\textsuperscript{\rm 1,2},
Fengchuan Zhang\textsuperscript{\rm 1,2},
Sanguo Zhang\textsuperscript{\rm 1,2},
Guojun Zhu\textsuperscript{\rm 1,2}\corresponding
}
\affiliations{
\textsuperscript{\rm 1}School of Mathematical Sciences,
University of Chinese Academy of Sciences, Beijing, China\\
\textsuperscript{\rm 2}Key Laboratory of Big Data Mining and Knowledge Management,
Chinese Academy of Sciences, Beijing, China
}

\begin{document}

\maketitle

\begin{abstract}
Domain generalization (DG) aims to learn from multiple source domains and generalize to unseen target domains. Most DG methods pursue invariance: they seek a causal representation whose prediction rule is invariant across domains. This principle is effective when the causal mechanism is stable, but becomes restrictive when the domain itself modulates how causal content maps to the response. In this case, directly feeding domain style into the predictor can create misleading shortcuts, since style does not by itself cause the response. Yet the apparent chaos of multiple styles can become a ladder: style can locate the unseen target domain among source domains and guide which domain-dependent prediction rules should be trusted. We propose \emph{Latent Adaptive Domain Disentanglement and Environment Reweighting} (LADDER), a fixed-model DG pipeline that learns causal/style representations, freezes the encoders, fits source-specific classifiers, and uses an unlabeled target-domain covariate set only at inference to compute weights over these fixed classifiers, with no target labels or model-state updates. We establish theoretical guarantees for source reweighting and validate LADDER on simulations, FMoW, and a location-grouped iWildCam protocol, with gains in overall and group-averaged accuracy.
\end{abstract}

\section{Introduction}

Domain generalization (DG) studies the problem of learning predictive models from multiple source domains that can generalize to an unseen target domain \citep{blanchard2011domain,zhou2022survey}. This problem is central to many real-world problems, where the deployment environment may differ from the training environments in background, acquisition condition, geographic region, time, sensor, or population composition \citep{rothenhausler2021anchor}. For example, in the iWildCam dataset, camera-trap images from different locations share animal categories but vary in habitat and species prevalence \citep{beery2020iwildcam}. In this setting, empirical risk minimization (ERM) over pooled source data learns a single predictor across domains, which can be suboptimal when the target domain follows a different prediction rule. The main challenge is therefore not only to suppress spurious domain-specific correlations, but also to decide when domain information should be used to guide prediction for an unseen domain \citep{zhu2025twoerms}.

A dominant line of DG research addresses this challenge through invariance. The basic idea is to learn a causal representation whose relationship with the label is invariant across source domains, so that the same relationship can be transferred to unseen domains. In causal language, this line often aims to extract a domain-invariant causal representation, denoted by $\bm{Z}_c$, while removing the domain-specific style representation, denoted by $\bm{Z}_s^{(e)}$ for domain $e$ \citep{peters2016invariant,rothenhausler2021anchor}. Representative approaches include invariant risk minimization and closely related invariance regularizers \citep{arjovsky2019irm,krueger2021rex,ahuja2021iib,rame2022fishr}. Although these methods differ in implementation, they share the same target: style representation is treated mainly as a nuisance factor, and the purpose of modeling it is to learn a better invariant causal representation. This view has been highly influential and is appropriate when the prediction rule is indeed invariant across domains.

However, the invariance assumption can be too restrictive for broader DG tasks. Recent studies have questioned whether invariant methods consistently outperform strong ERM baselines, especially on realistic benchmarks where the source and target domains differ in complex ways \citep{gulrajani2021search,zhu2025twoerms}. One reason is that the usual invariant formulation does not distinguish per-sample style features from domain-level style distributions. Figure~\ref{fig:scm_new_path} illustrates this distinction. The per-sample style representation $\bm{Z}_s^{(e)}$ may not directly cause the label $Y^{(e)}$, and therefore feeding style into a pooled classifier can create shortcuts rather than generalization. At the same time, the conditional prediction rule $\eta_e$, i.e. $Y^{(e)}\sim\eta_e(\cdot\mid \bm{Z}_c)$, may vary across domains, so a single predictor may be insufficient. The covariate $\bm{X}^{(e)}$ is generated from both the causal representation $\bm{Z}_c$ and the style representation $\bm{Z}_s^{(e)}$, while the response $Y^{(e)}$ depends on $\bm{Z}_c$ through a domain-dependent rule. The invariant-rule setting is recovered as a special case when this $e\to Y^{(e)}$ path collapses to a common prediction rule across all domains. In this general case, the goal is not to make per-sample style predictive, but to use domain-level style distributions to navigate among domain-dependent prediction rules. For example, in the iWildCam dataset, the prediction rule for a cloudy-forest target domain may be closer to that of the forest-like source domain, without using style as a direct shortcut for label prediction.

\begin{figure}[H]
    \centering
    \includegraphics[width=0.90\linewidth]{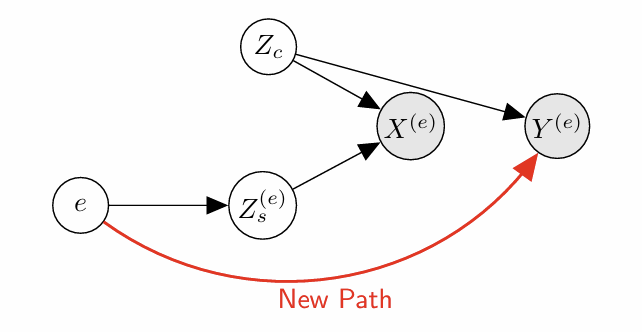}
    \caption{General structural causal model. The edge $e\to Y^{(e)}$ means that the conditional rule $\eta_e(Y\mid \bm{Z}_c)$ may vary across domains, not that $e$ is a per-sample causal feature. Style is used by \method{} at the domain-summary level to choose source weights, not as a per-sample classifier input.}
    \label{fig:scm_new_path}
\end{figure}

This perspective connects to work retaining useful domain-specific information. Environment-adaptive covariate selection studies when apparently spurious correlations may help, while structured DG exploits known domain groups and low-rank parameter structure \citep{zuo2026environment,li2023multidimensional}. Domain adaptation and test-time adaptation also address changing rules, but use target data during training or update target model state, outside our fixed-model protocol \citep{zhang2021udasurvey,xiao2024ttasurvey}, which are different from DG.
Closer to our mechanism, Best Sources Forward uses
per-sample source-membership probabilities to fuse
source-specific classifiers; DAEL collaboratively trains
source heads; Gated Domain Units form observation-dependent
ensembles; and DSNR maps an unlabeled target distribution to
a domain-specific nonparametric predictor
\citep{mancini2018best,zhou2021dael,foll2023gdu, zheng2026dsnr}. None combines target style-distribution
geometry, nonparametric routing, and its
source-coverage and empirical-measure analysis. We propose \emph{Latent Adaptive Domain Disentanglement and Environment Reweighting} (LADDER), which learns
causal/style encoders, fits source classifiers, caches source
style distributions, and computes one Sinkhorn-KNN weight vector from an unlabeled target set. Unlike standard MoE \citep{shazeer2017outrageously}, LADDER uses no learned routing gate. Thus, our contribution is not source-specific fusion alone, but distribution-level nonparametric routing with source-coverage and empirical-measure error analysis.

Our contributions are three-fold. First, we formulate a DG setting beyond the invariance of the prediction rule, where the domain information may modulate how $\bm{Z}_c$ maps to $Y^{(e)}$. This view exposes the limitations of both invariant predictors and pooled ERM with style inputs, and motivates LADDER as a fixed-model DG pipeline for style-guided source reweighting at inference. Second, we provide theoretical guarantees for source reweighting, showing how prediction error depends on source estimation, empirical distribution estimation, and source coverage. Third, we validate LADDER on simulations and two real-world DG benchmarks, showing strong overall and group-averaged accuracy under shared-stack routing controls.

\begin{figure*}[t]
\centering
\includegraphics[width=1.0\textwidth]{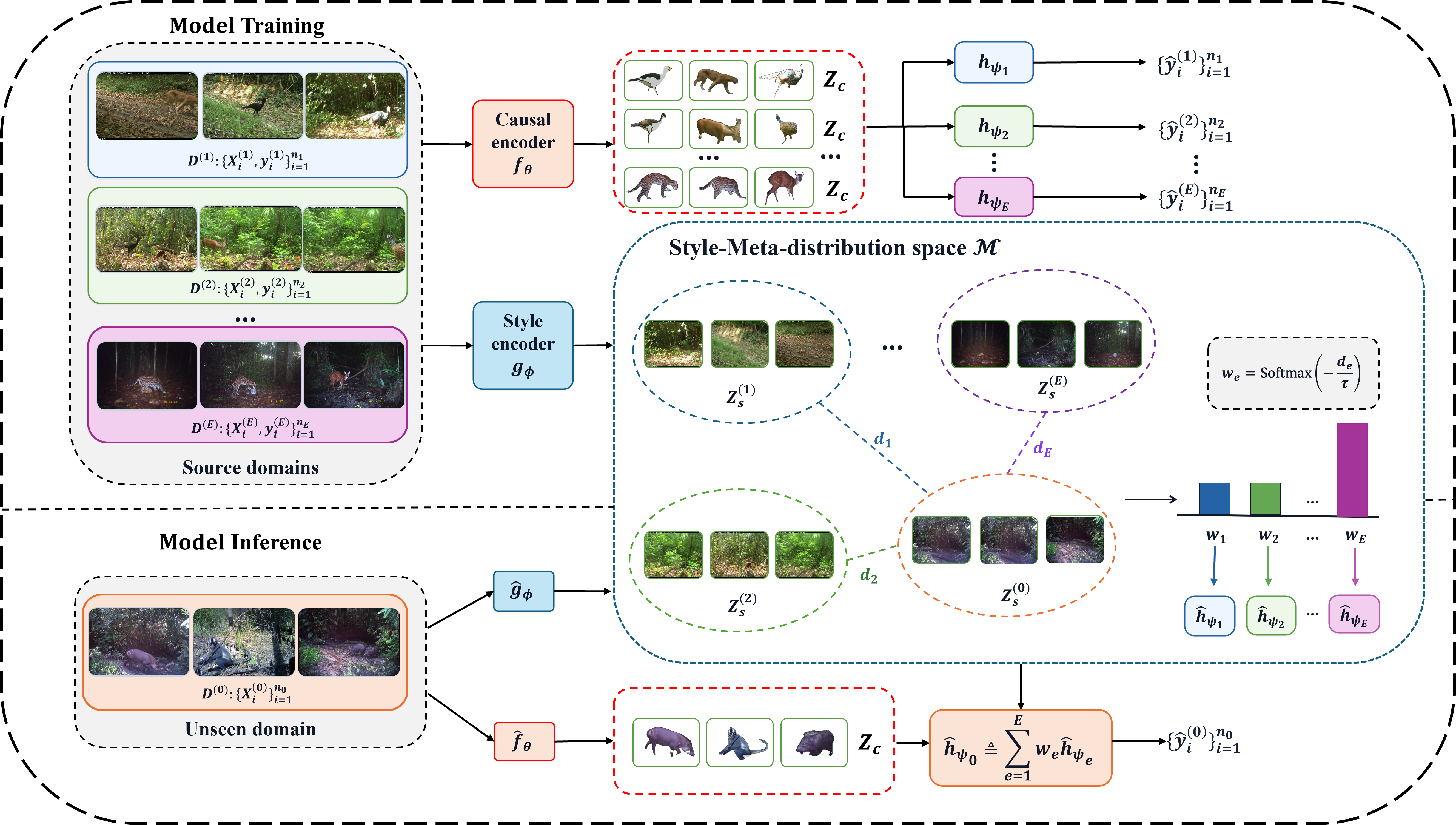}
\caption{Overview of \method{}. Source domains are encoded into causal representations for source-specific classifiers and style representations for source-domain fingerprints. At prediction, the fixed style encoder maps unlabeled target covariates to a style summary, compares it with stored source fingerprints, and changes only weights over fixed source classifiers.}
\label{fig:ladder_pipeline}
\end{figure*}

\section{Methodology}\label{sec:methodology}
\subsection{Problem Set-up}\label{sec:problem_setup}
We consider a DG problem in which source domains are indexed by $e\in\{1,\ldots,E\}$ and an unseen target domain is indexed by $0$. Each domain induces a joint distribution $P_e$ over observations and labels $(\bm{X},Y)\in\mathcal{X}\times\mathcal{Y}$. At training time we observe labeled samples $\mathcal{D}^{(e)}=\{(\bm{x}_i^{(e)},y_i^{(e)})\}_{i=1}^{n_e}$ from the source domains. At inference time, the new target domain is represented by an unlabeled covariate set $\mathcal{D}^{(0)}=\{\bm{x}_i^{(0)}\}_{i=1}^{n_0}$, used only to estimate a style summary and source weights; encoders, classifiers, and normalization statistics remain fixed. Following the meta-distribution view of DG \citep{blanchard2011domain,zhou2022survey}, we treat domains as points in a style-distribution space $\mathcal{M}$ and write $e\sim\Pi_{\mathcal{M}}$ for the distribution over domains.


The key structural assumption is that each observation admits two learned roles, denoted $\bm{Z}_c$ and $\bm{Z}_s^{(e)}$. The causal representation is optimized for label prediction, while the style representation is optimized for source reweighting; this operational division does not require statistical independence between representations or between style and labels. For a domain $e$, let $\nu_e\in\mathcal{P}(\mathcal{Z}_s^{(e)})$ be its domain-level style distribution on the style space $\mathcal{Z}_s$; its empirical estimate $\widehat{\nu}_e$ is the style fingerprint used below. We write observations and labels abstractly as
\[
\bm{X}^{(e)}
=G_e(\bm{Z}_c,\bm{Z}_s^{(e)},\bm{\varepsilon}_x),
\qquad
Y^{(e)}\sim\eta_e(\cdot\mid \bm{Z}_c),
\]
where $\bm{\varepsilon}_x$ denotes noise, $G_e$ is a domain-dependent observation mechanism, and $\eta_e$ is the domain-dependent conditional prediction rule.

The smoothness principle underlying \method{} is that domains close in style distribution should have similar prediction rules:
\begingroup\small
\begin{equation}
    d_{\mathcal{F}}(\eta_e,\eta_{e'})\le L_\eta\,\mathcal{W}_1(\nu_e,\nu_{e'}),
    \label{eq:meta_smoothness_in_method}
\end{equation}
\endgroup
where $d_{\mathcal{F}}$ is a task-level distance between conditional rules, $\mathcal{W}_1$ is the 1-Wasserstein distance, and $L_\eta$ is a smoothness constant. For example, in the parametric model, $d_{\mathcal{F}}$ is instantiated by parameter Frobenius distance.

This differs fundamentally from pooled training. A pooled classifier learns one predictor on $(\bm{Z}_c,\bm{Z}_s^{(e)})$ over all source domains, which permits the style representation to become an additive shortcut inside a single decision rule. In contrast, \method{} trains source-specific classifiers on $\bm{Z}_c$ and uses target-domain summaries of $\bm{Z}_s^{(e)}$ only to reweight them at inference. In short, style determines which domain-level rules should be trusted but is not used as a per-sample classifier input. 
That is, unlike a shortcut predictor $\widehat y=h(f_\theta(x),g_\phi(x))$, LADDER forms $\widehat\nu_0=n_0^{-1}\sum_i\delta_{g_\phi(x_i^{(0)})}$, where $\delta_{\bm{s}}$ denotes a Dirac point mass, and computes $\{w_e\}_{e=1}^{E}$ from its distances to cached source fingerprints. It then averages the selected source logits as in Eq.~\eqref{eq:dynamic_prediction} and applies Softmax. Thus the style representation is a domain-level signal for reweighting, not a nuisance input appended to a pooled classifier or a standard model-averaging ensemble \citep{arpit2022ensemble}.

\subsection{Model Training: Encoders and Classifiers}\label{sec:representation_learning}
Model training learns representations and then fits source classifiers. During representation learning, the causal encoder $f_{\bm{\theta}}$ should retain label-predictive information, while the style encoder $g_{\bm{\phi}}$ should retain domain information for subsequent source reweighting. This is operational disentanglement: the branches receive distinct prediction and routing roles but are not guaranteed to contain mutually exclusive information. This follows disentanglement-based DG and information-bottleneck representation learning \citep{alemi2017deep,zhang2022principled}, but we use style for source reweighting rather than dropping out. For a sample, write $\bm{z}_c=f_{\bm{\theta}}(\bm{x})$ and $\bm{z}_s=g_{\bm{\phi}}(\bm{x})$. We optimize:
\begingroup\small
\begin{equation}
\min_{\Theta}\mathcal{J}=\mathcal{L}_{cls}+\lambda_s\mathcal{L}_{sd}+\lambda_a\mathcal{L}_{cd}^{\mathrm{GRL}}+\lambda_o\mathcal{L}_{orth}+\lambda_r\mathcal{L}_{reg}.
\label{eq:stage1_objective}
\end{equation}
\endgroup
\begingroup\small
Here $\Theta$ collects the representation-stage parameters and $\lambda_s,\lambda_a,\lambda_o,\lambda_r$ are fixed loss weights. Equation~(2) combines auxiliary label classification,
style-domain classification, adversarial suppression of domain information in $\bm Z_c$, cross-covariance orthogonality, and representation
regularization. Let $\rho_y$ denote the shared auxiliary label classifier,
and let $\rho_s$ and $\rho_c$ denote the auxiliary domain
classifiers on $\bm{Z}_s^{(e)}$ and $\bm Z_c$, respectively.
Writing $\widehat{\mathbb E}_e$ for the empirical average
over source domain $e$ and $\mathrm{CE}(\bm u,y)=-\log(\mathrm{Softmax}(\bm u)_y)$ for
cross-entropy, $\mathcal L_{\mathrm{cls}}=\sum_e\widehat{\mathbb E}_e\mathrm{CE}(\rho_y(f_\theta(x)),y)$,
$\mathcal L_{\mathrm{sd}}
=\sum_e\widehat{\mathbb E}_e
\mathrm{CE}(\rho_s(g_\phi(x)),e)$, and
$\mathcal L_{\mathrm{cd}}^{\mathrm{GRL}}
=\sum_e\widehat{\mathbb E}_e
\mathrm{CE}(\rho_c(R_{\lambda_a}(f_\theta(x))),e)$,
where $R_{\lambda_a}$ is a gradient-reversal layer.
With the outer $\lambda_a$ in Eq.~(2), the encoder-side
reversed gradient has scale $\lambda_a^2$.
We use
$\mathcal L_{\mathrm{orth}}
=\|\widehat{\operatorname{Cov}}
(f_\theta(x),g_\phi(x))\|_F^2$
to discourage redundant encoding. The generic regularizer
$\mathcal L_{\mathrm{reg}}$ may, for example, be a
variational-bottleneck Kullback--Leibler (KL) term
$\sum_{r\in\{c,s\}}
D_{\mathrm{KL}}(q_r(z_r\mid x)\|p_r)$,
but LADDER does not depend on this choice.
We then freeze both encoders and fit each final source classifier on fixed causal representations by
\[
\widehat{\bm{\psi}}_e
\in\arg\min_{\bm{\psi}_e}
\widehat{\mathbb E}_e
\mathrm{CE}(h_{\bm{\psi}_e}(f_{\widehat{\bm{\theta}}}(x)),y).
\]
At inference, all modules are fixed and no target-domain
parameters are updated.\par
\endgroup

\subsection{Model Inference: Reweighted Ensemble}\label{sec:environment_reweighted_ensemble}
At inference, the encoders, fitted source classifiers, and cached source fingerprints are fixed. For each source domain $e=1,\ldots,E$, cache the low-dimensional style fingerprint
\begin{equation}
\widehat{\nu}_e=\frac{1}{n_e}\sum_{i=1}^{n_e}\delta_{\bm{s}_i^{(e)}},\quad \bm{s}_i^{(e)}=g_{\widehat{\bm{\phi}}}(\bm{x}_i^{(e)}).
\label{eq:style_empirical_measure}
\end{equation}
After these low-dimensional fingerprints are cached, source covariates need not be stored. For a new target domain, construct only the target style summary $\widehat{\nu}_0$ from target covariates, then compute $d_e$ for each source domain. In theory $d_e=\mathcal{W}_1(\widehat{\nu}_e,\widehat{\nu}_0)$; real-data runs use a balanced debiased Sinkhorn approximation with Euclidean ground cost (supplement), whose entropic and solver errors are not included in the rates. Let $\widehat{\mathcal{N}}_K(0)$ be the $K$ source domains with the smallest distances. The target-adaptive weights are
\begin{equation}
 w_e=
 \frac{\exp(-d_e/\tau)}{\sum_{j\in\widehat{\mathcal{N}}_K(0)}\exp(-d_j/\tau)},
 \quad \text{for every } e\in\widehat{\mathcal{N}}_K(0),
\label{eq:dynamic_weight}
\end{equation}
where $K$ is the neighbor count and $\tau>0$ is the temperature parameter. With few target covariates or a noisy target fingerprint,
larger $\tau$ and source neighborhoods make the rule approach
a global source-weighted average. Each source classifier outputs pre-softmax logits, and the final target logit averages these logits:
\begin{equation}
\widehat{\bm{\ell}}(\bm{x})
=
\sum_{e\in\widehat{\mathcal{N}}_K(0)}
w_e\,h_{\widehat{\bm{\psi}}_e}\!\big(f_{\widehat{\bm{\theta}}}(\bm{x})\big).
\label{eq:dynamic_prediction}
\end{equation}
Labels are predicted from $\mathrm{Softmax}(\widehat{\bm{\ell}}(\bm{x}))$; for generalized linear model (GLM) classifiers, Eq.~\eqref{eq:dynamic_prediction} equals the parameter average analyzed below. The weights $w_e$ are computed once from $\widehat{\nu}_0$ and shared within the target domain. Algorithm~\ref{alg:ladder_inference} summarizes the inference stage. The source-linear extra cost is storing $E$ classifiers and $E$ low-dimensional fingerprints, computing $E$ style distances, and evaluating the selected $K$ classifiers; runtime/storage are reported in the supplement.

\begin{algorithm}[H]
\small
\caption{\method{} Inference Stage}
\label{alg:ladder_inference}
\begin{algorithmic}[1]
\setlength{\itemsep}{0pt}
\REQUIRE Fixed $f_{\widehat{\bm{\theta}}},g_{\widehat{\bm{\phi}}}$; cached source fingerprints $\{\widehat{\nu}_e\}_{e=1}^E$; source classifiers $\{h_{\widehat{\bm{\psi}}_e}\}_{e=1}^E$; target covariates $\mathcal{D}^{(0)}$; neighbor count $K$; temperature $\tau$.
\STATE Encode target styles and form $\widehat{\nu}_0=n_0^{-1}\sum_{i=1}^{n_0}\delta_{g_{\widehat{\bm{\phi}}}(\bm{x}_i^{(0)})}$.
\STATE Compute style-summary distances $d_e$ for $e=1,\ldots,E$.
\STATE Select $\widehat{\mathcal{N}}_K(0)$ as the $K$ nearest source domains.
\STATE Compute weights $\{w_e\}_{e\in\widehat{\mathcal{N}}_K(0)}$ by Eq.~\eqref{eq:dynamic_weight}.
\STATE Predict with fixed source classifiers by Eq.~\eqref{eq:dynamic_prediction}.
\RETURN Target predictions $\{\widehat{y}(\bm{x}):\bm{x}\in\mathcal{D}^{(0)}\}$.
\end{algorithmic}
\end{algorithm}

\section{Theoretical Guarantees}\label{sec:Theory}
Given fixed causal/style encoders and fixed source classifiers, the theory asks how well an unlabeled target style summary can select nearby source rules. The oracle analysis uses oracle representations; the generic analysis uses learned ones. 
Dimensions $d_c,d_s,d_{\mathcal M}$ denote causal, style, and style-manifold dimensions, classes $C=|\mathcal Y|$, $d_\beta=(C-1)d_c$, and  $n_{\min}:=\min_{1\le e\le E}n_e$.

\subsection{Oracle Representation Guided Theory}\label{sec:oracle_representation_theory}

For domain $e$, use the style distribution $\nu_e$ defined in Section~\ref{sec:problem_setup}, and let $\bm{\beta}_*^{(e)}$ be the oracle source parameter. For the oracle analysis, labels obey $\mathbb{P}(Y\mid \bm{Z}_c,e)=\mathrm{Softmax}(\bm{\beta}_*^{(e)}\bm{Z}_c)$ with $\bm{\beta}_*^{(e)}\in\mathbb{R}^{C\times d_c}$. We state compact assumptions here and give their full formal versions in the Supplementary Material.

\begin{assumption}[Oracle latent structure]\label{ass:linear}\label{ass:identifiability}\label{ass:label_completeness}
Observations admit a centered linear oracle decomposition, $\bm{X}^{(e)}=\bm{A}_c\bm{Z}_c^{(e)}+\bm{A}_s\bm{Z}_s^{(e)}+\bm{\epsilon}$. The causal/noise covariances are domain-invariant, while $\bm{\Lambda}_e=\mathrm{Cov}(\bm{Z}_s^{(e)})$ is diagonal with jointly separating normalized cross-domain variance profiles.
\end{assumption}

\begin{assumption}[Smooth geometry]\label{ass:lipschitz_param}\label{ass:style_compactness}\label{ass:manifold_regularity}
There is an $L_\beta$-Lipschitz map $\Psi^*$ from domain-level style distributions to GLM parameters, so that $\bm{\beta}_*^{(e)}=\Psi^*(\nu_e)$ and nearby style distributions have nearby prediction rules under $\mathcal{W}_1$. Style laws are compactly supported; sources are i.i.d. from a regular $d_{\mathcal{M}}$-dimensional space, with the target sampled independently.
\end{assumption}

\begin{assumption}[GLM regularity]\label{ass:boundedness}\label{ass:hessian}\label{ass:centered_logits}
Causal features satisfy $\|Z_c\|_2\le R_c$ almost surely,
and GLM parameters use the centered gauge
$\mathcal B\subset\{\beta:\mathbf 1^\top\beta=0\}$ with
$\|\beta\|_F\le R_\beta$ and relative-interior oracle optima.
For $\mathcal{R}_e(\bm{\beta})=\mathbb{E}[\mathrm{CE}(\bm{\beta}\bm Z_c,Y)\mid e]$,
these population cross-entropy risks are uniformly $\mu$-strongly convex, for some $\mu>0$, on
$\mathcal B$. Single-sample Hessians are uniformly bounded
and $L_H$-Lipschitz in $\bm{\beta}$ over $\mathcal B$; and
centered-logit risks are locally $\mu_\ell$-strongly convex
on bounded comparison logits and globally $L_\ell$-smooth.
\end{assumption}

These compact assumptions summarize style-covariance profile separability, smooth rule variation over style distributions, finite-sample source estimation, and empirical Wasserstein concentration.
\begin{Mth}[Domain-Varying Style Covariance Identifiability]\label{thm:identifiability}
Under Assumption~\ref{ass:linear}, let $\bm{W}_s$ be a population style-extraction matrix, $\widehat{\bm Z}_s^{(e)}=\bm{W}_s\bm X^{(e)}$, and $\bm{M}:=\bm{W}_s\bm{A}_s$ be invertible. Suppose the extractor diagonalizes the domain-varying style covariance, i.e., $\bm{M}\bm{\Lambda}_e\bm{M}^\top$ is diagonal for every source domain $e=1,\ldots,E$. Then $\bm{M}=\bm{P}\bm{D}$ for a permutation matrix $\bm P$ and nonsingular diagonal matrix $\bm D$, and $\widehat{\bm{Z}}_s^{(e)}=\bm{B}\bm{Z}_c^{(e)}+\bm{P}\bm{D}\bm{Z}_s^{(e)}+\widetilde{\bm{\epsilon}}^{(e)}$, where $\bm{B}=\bm{W}_s\bm{A}_c$ and $\widetilde{\bm{\epsilon}}^{(e)}=\bm{W}_s\bm{\epsilon}^{(e)}$. Thus $\bm{W}_s\bm{A}_s$ is identified up to permutation/scaling, while the extracted representation may contain causal leakage. Moreover, under supplemental mean-separation/completeness conditions, $\bm{B}=0$.
\end{Mth}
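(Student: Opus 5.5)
The plan is to reduce the statement to a simultaneous-diagonalization uniqueness argument. I will use the ``jointly separating normalized cross-domain variance profiles'' in Assumption~\ref{ass:linear} in the following concrete form. For each style coordinate $k$, form the profile vector $\bm{v}_k=(\Lambda_{1,kk},\ldots,\Lambda_{E,kk})\in\mathbb{R}^E$. After normalizing, e.g.\ dividing by $\Lambda_{1,kk}$ (or scaling to unit norm), no two coordinates $k\neq l$ have proportional profiles. Equivalently, for every pair $k\neq l$ there exist domains $e,e'$ with $\Lambda_{e,kk}\Lambda_{e',ll}\neq\Lambda_{e,ll}\Lambda_{e',kk}$.

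\textbf{Step 1: generalized eigenproblem.} Fix a reference domain, say $e=1$. Write $\bm{M}\bm{\Lambda}_e\bm{M}^\top=\bm{D}_e$ with $\bm{D}_e$ diagonal. Invertibility of $\bm{M}$ and positive definiteness of $\bm{\Lambda}_1$ make $\bm{D}_1$ invertible, so for each $e$
\[
\bm{D}_e\bm{D}_1^{-1}=\bm{M}\bm{\Lambda}_e\bm{\Lambda}_1^{-1}\bm{M}^{-1}.
\]
Here $\bm{\Lambda}_e\bm{\Lambda}_1^{-1}$ is diagonal with entries $r_{e,k}=\Lambda_{e,kk}/\Lambda_{1,kk}$. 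So every matrix $\bm{M}^{-1}$ (columns) diagonalizes the commuting family $\{\bm{\Lambda}_e\bm{\Lambda}_1^{-1}\}_e$ into the diagonal family $\{\bm{D}_e\bm{D}_1^{-1}\}_e$. Hence the columns of $\bm{M}^{-1}$ are common eigenvectors of all $\bm{\Lambda}_e\bm{\Lambda}_1^{-1}$.

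\textbf{Step 2: common eigenspaces are coordinate axes.} The joint eigenvalue of coordinate axis $\bm{e}_k$ is the vector $(r_{e,k})_{e}$. The separation assumption says these vectors are pairwise distinct across $k$. Therefore every joint eigenspace of the family is one-dimensional and spanned by a standard basis vector. Thus each column of $\bm{M}^{-1}$ is a nonzero multiple of some $\bm{e}_k$. Since $\bm{M}^{-1}$ is invertible, distinct columns hit distinct axes, so $\bm{M}^{-1}=\bm{P}'\bm{D}'$ for a permutation $\bm{P}'$ and nonsingular diagonal $\bm{D}'$. Inverting gives $\bm{M}=\bm{D}'^{-1}\bm{P}'^{\top}=\bm{P}\bm{D}$ with $\bm{P}=\bm{P}'^\top$ and $\bm{D}=\bm{P}'\bm{D}'^{-1}\bm{P}'^\top$, which is still diagonal and nonsingular. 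This is the main step. The delicate point is stating the separation condition exactly so that joint eigenvalues are distinct. If the supplement's version is only ``normalized profiles distinct'', I would check that the normalization by the reference domain is equivalent to it, since proportional profiles are exactly equal ratio vectors.

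\textbf{Step 3: decomposition and leakage.} This step is substitution. From
\[
\widehat{\bm Z}_s^{(e)}=\bm{W}_s(\bm{A}_c\bm{Z}_c^{(e)}+\bm{A}_s\bm{Z}_s^{(e)}+\bm{\epsilon})=\bm{B}\bm{Z}_c^{(e)}+\bm{M}\bm{Z}_s^{(e)}+\widetilde{\bm\epsilon}^{(e)},
\]
the result follows by plugging in $\bm{M}=\bm{P}\bm{D}$. Note that the hypothesis only concerns the $\bm{\Lambda}_e$ part. The causal and noise covariances are domain-invariant, so they cannot constrain $\bm{B}$, which gives the leakage remark. For the final claim, I would invoke the supplemental mean-separation/completeness condition. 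Heuristically, $\bm{B}\bm{Z}_c$ has a domain-invariant law while the target extractor is required to carry only domain-varying (mean- or covariance-separating) information. Completeness of the domain-indexed family then forces any linear functional with domain-invariant contribution to vanish, giving $\bm{B}=0$. Since that condition is only referenced, I would treat this part as a direct application of it rather than re-derive it.
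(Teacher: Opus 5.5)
\textbf{Identifiability step (correct, different route).} Your permutation--scaling argument is correct and reaches the paper's conclusion by a slightly different algebraic route. The paper whitens first. It sets $\bm{Q}=\bm{D}_1^{-1/2}\bm{M}\bm{\Lambda}_1^{1/2}$, checks $\bm{Q}\bm{Q}^\top=\bm{I}$, and notes that the rows of this orthogonal $\bm{Q}$ form a common orthonormal eigenbasis of the symmetric diagonal family $\bm{R}_e=\bm{\Lambda}_1^{-1/2}\bm{\Lambda}_e\bm{\Lambda}_1^{-1/2}$. It then concludes $\bm{Q}=\bm{P}\bm{S}$ with a sign matrix $\bm{S}$ and unwinds the square roots.

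You instead use the non-symmetric similarity $\bm{D}_e\bm{D}_1^{-1}=\bm{M}(\bm{\Lambda}_e\bm{\Lambda}_1^{-1})\bm{M}^{-1}$, so the columns of $\bm{M}^{-1}$ are common eigenvectors of diagonal matrices. The key lemma is the same in both: distinct joint eigenvalue vectors force common eigenvectors onto coordinate axes. Your vector $(r_{e,k})_e$ is exactly the paper's profile $\bm{\rho}_k$, so the worry you flag about the form of the separation condition does not arise.

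Your version avoids square roots, the orthogonality check and the sign matrix, and needs only invertibility of $\bm{D}_1$. The paper's whitened form is what makes its remark on repeated profiles immediate: coordinates are then identified only up to an orthogonal rotation within a joint eigenspace. In your form a repeated block of $\bm{M}^{-1}$ is an arbitrary invertible block until you reimpose diagonality of $\bm{D}_1$.

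\textbf{Leakage removal (genuine gap).} You do not prove $\bm{B}=\bm{0}$, and the mechanism you sketch would not deliver it.
\begin{itemize}
\item The assumption makes only $\operatorname{Cov}(\bm{Z}_c^{(e)})$ domain-invariant, not its law.
\item A requirement that the extractor carry domain-varying information does not exclude an additive domain-invariant component. The term $\bm{B}\bm{\Sigma}_c\bm{B}^\top$ just shifts every $\operatorname{Cov}(\widehat{\bm Z}_s^{(e)})$ by the same matrix. That is exactly why the diagonalization hypothesis leaves $\bm{B}$ unconstrained. Domain means are also removed, since observations are centered.
\end{itemize}

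The paper's condition works through the \emph{label}, not the domain. Within each source domain, both the true and the extracted style vectors have label-invariant conditional means. Meanwhile the causal class means $\bm{\mu}_{e,y}=\mathbb{E}[\bm{Z}_c^{(e)}\mid Y^{(e)}=y]$ vary with $y$ enough that the differences $\bm{\mu}_{e,y}-\bm{\mu}_{e,y_e^0}$ span $\mathbb{R}^{d_c}$. This spanning condition is the ``completeness'', not a property of the domain-indexed family.

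The missing argument is then short:
\begin{itemize}
\item Condition your Step~3 identity on $Y^{(e)}=y$, using that the noise is zero-mean and independent of labels. This gives $\widehat{\bm m}_s^{(e)}=\bm{B}\bm{\mu}_{e,y}+\bm{P}\bm{D}\bm{m}_s^{(e)}$ for every supported $y$.
\item Subtract the same identity at $y_e^0$ to get $\bm{B}(\bm{\mu}_{e,y}-\bm{\mu}_{e,y_e^0})=\bm{0}$.
\item The spanning condition then forces $\bm{B}=\bm{0}$.
\end{itemize}
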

This is conditional population identifiability, not a finite-sample recovery guarantee; proofs and the noise-free $T:=\bm{P}\bm{D}$ note are in the Supplementary Material.

\begin{Mth}[Target Parameter Error]\label{thm:param_bound}
Under Assumptions~\ref{ass:lipschitz_param}--\ref{ass:boundedness}, and $n_{\min}\gtrsim\mu^{-2}\{d_\beta\log(1+R_\beta L_H/\mu)+\log(d_\beta E)\}$ with $d_\beta=(C-1)d_c$, define the source estimation error $\eta_{\mathrm{src}}$, empirical distribution error $\eta_{\mathrm{emp}}$, and manifold interpolation error $\eta_{\mathrm{knn}}$ by:
\begin{equation*}
\begin{gathered}
\eta_{\mathrm{src}}=\frac{R_c}{\mu}\sqrt{(d_\beta+\log E)/n_{\min}},\\
\eta_{\mathrm{emp}}=\gamma(n_0,d_s)+\Gamma(n_{\min},E,d_s),\\
\eta_{\mathrm{knn}}=(K/E)^{1/d_{\mathcal{M}}},\quad
\mathcal{E}_{\mathrm{orc}}=\eta_{\mathrm{src}}+\eta_{\mathrm{emp}}+\eta_{\mathrm{knn}}.
\end{gathered}
\end{equation*}
Here $\gamma(n,d)=n^{-1/(d\vee2)}\ell_d(n)$ and $\Gamma(n,E,d)=((1+\log E)/n)^{1/(d\vee2)}\widetilde{\ell}_d(n)$, where $\ell_2(n)=\widetilde{\ell}_2(n)=\log(1+n)$ and $\ell_d(n)=\widetilde{\ell}_d(n)=1$ for $d\ne2$ \citep{fournier2015rate}. Then the reweighted estimator $\widehat{\bm{\beta}}^{(0)}=\sum_{e\in\widehat{\mathcal{N}}_K(0)}w_e\widehat{\bm{\beta}}^{(e)}$ satisfies:
\begin{equation*}
\mathbb{E}\|\widehat{\bm{\beta}}^{(0)}-\bm{\beta}_*^{(0)}\|_F
\le \mathcal{O}(\mathcal{E}_{\mathrm{orc}}).
\end{equation*}
\end{Mth}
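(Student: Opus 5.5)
The proof splits the error with the triangle inequality, using that the weights $w_e$ are nonnegative and sum to one over $\widehat{\mathcal N}_K(0)$:
\[
\|\widehat{\bm\beta}^{(0)}-\bm\beta_*^{(0)}\|_F\le \sum_{e\in\widehat{\mathcal N}_K(0)} w_e\|\widehat{\bm\beta}^{(e)}-\bm\beta_*^{(e)}\|_F+\sum_{e\in\widehat{\mathcal N}_K(0)} w_e\|\bm\beta_*^{(e)}-\bm\beta_*^{(0)}\|_F .
\]
The first sum is bounded by $\max_e\|\widehat{\bm\beta}^{(e)}-\bm\beta_*^{(e)}\|_F$. This removes the dependence between the data-driven weights and the source estimates, and it yields $\eta_{\mathrm{src}}$.

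By Assumption~\ref{ass:lipschitz_param}, each term of the second sum is at most $L_\beta\,\mathcal W_1(\nu_e,\nu_0)$. So the second sum is at most $L_\beta\max_{e\in\widehat{\mathcal N}_K(0)}\mathcal W_1(\nu_e,\nu_0)$, and this piece yields $\eta_{\mathrm{emp}}+\eta_{\mathrm{knn}}$.

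\emph{Source estimation.} Each $\widehat{\bm\beta}^{(e)}$ is a multinomial-logistic M-estimator on the centered gauge $\mathcal B$, which has dimension $d_\beta$. The argument is the standard localized one.
\begin{itemize}
\item Use uniform $\mu$-strong convexity of $\mathcal R_e$ and the relative-interior optimum to get $\|\widehat{\bm\beta}^{(e)}-\bm\beta_*^{(e)}\|_F\le \frac{2}{\mu}\|\nabla\widehat{\mathcal R}_e(\bm\beta_*^{(e)})\|_F$, once the empirical Hessian is known to be $\gtrsim\mu$ on a neighborhood.
\item Control that empirical Hessian uniformly over $\mathcal B$ with matrix Bernstein and an $\epsilon$-net of $\mathcal B$, using the $L_H$-Lipschitz Hessians and a net of size $(1+R_\beta L_H/\mu)^{d_\beta}$. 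This is exactly where the stated lower bound on $n_{\min}$ enters, after a union bound over the $E$ sources.
\item The score is bounded: the softmax residual has norm at most $\sqrt2$ and $\|Z_c\|\le R_c$. A vector Bernstein or Hoeffding bound plus a union over $E$ then gives $\mathbb E\max_e\|\nabla\widehat{\mathcal R}_e\|_F\lesssim R_c\sqrt{(d_\beta+\log E)/n_{\min}}$.
\item On the complementary failure event, use $\|\cdot\|_F\le 2R_\beta$ times a probability that is absorbed for $n_{\min}$ large.
\end{itemize}

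\emph{Selection geometry.} Let $\mathcal N_K(0)$ be the oracle $K$ nearest sources in population $\mathcal W_1$. Set $\Delta:=\max_{0\le e\le E}\mathcal W_1(\widehat\nu_e,\nu_e)$. For any selected $e$,
\[
\mathcal W_1(\nu_e,\nu_0)\le d_e+2\Delta\le d_{(K)}+2\Delta\le r_K+4\Delta .
\]
Here $d_{(K)}$ is the $K$-th smallest empirical distance and $r_K$ is the oracle $K$-th nearest-neighbor radius. The middle step holds because $e$ is among the $K$ smallest empirical distances. The last step holds because the $K$ oracle neighbors all have empirical distance at most $r_K+2\Delta$.

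For $\Delta$, apply Fournier--Guillin to the compactly supported style laws, which gives $\mathbb E\,\mathcal W_1(\widehat\nu_0,\nu_0)\lesssim\gamma(n_0,d_s)$. For the maximum over $E$ sources, add a McDiarmid concentration with bounded differences $\mathrm{diam}/n_e$ and a union bound, giving $\mathbb E\max_e\mathcal W_1(\widehat\nu_e,\nu_e)\lesssim\Gamma(n_{\min},E,d_s)$. I expect this to need care at $d_s\le2$, where the $\sqrt{\log E/n}$ deviation must be merged into the $(\cdot)^{1/(d\vee2)}$ form.

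\emph{Main obstacle: the $k$-NN radius.} This is the term $\mathbb E\,r_K\lesssim(K/E)^{1/d_{\mathcal M}}$. Condition on the target $\nu_0$. The sources are i.i.d. from $\Pi_{\mathcal M}$ on a regular $d_{\mathcal M}$-dimensional space, so there is a lower mass bound $\Pi_{\mathcal M}(B(\nu_0,r))\ge c\,r^{d_{\mathcal M}}$ for $r\le r_0$. Then
\[
\mathbb P(r_K>r)=\mathbb P\bigl(\mathrm{Bin}(E,c r^{d_{\mathcal M}})<K\bigr).
\]
Integrating this tail with a Chernoff bound gives $\mathbb E r_K\lesssim(K/E)^{1/d_{\mathcal M}}$. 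The regime $r>r_0$ is bounded by the diameter times an exponentially small probability. The regularity assumption must be invoked in exactly this lower-mass form, and compactness is what bounds the far tail.

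Combining the three pieces and taking expectations gives $\mathcal O(\eta_{\mathrm{src}}+\eta_{\mathrm{emp}}+\eta_{\mathrm{knn}})$, with constants depending on $L_\beta$ and the diameter. The bound is uniform in $\tau$, because only the convexity of the weights is used.
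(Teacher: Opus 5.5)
Your proposal is correct and follows essentially the paper's route. It uses the same convex-weight triangle decomposition and bounds the source term by $\max_e\|\widehat{\bm\beta}^{(e)}-\bm\beta_*^{(e)}\|_F$ via a uniform Hessian $\epsilon$-net plus bounded-score concentration with a union bound over $E$. Your chain $\mathcal W_1(\nu_e,\nu_0)\le r_K+4\Delta$ is exactly the paper's Term B plus Term C, which compares the empirical top-$K$ radius with the oracle one.

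The differences are confined to two sub-lemmas:
\begin{itemize}
\item For $\max_e\mathcal W_1(\widehat\nu_e,\nu_e)$ you use McDiarmid plus a union bound, where the paper integrates Fournier--Guillin tails. The resulting deviation $\sqrt{(1+\log E)/n_{\min}}$ is $\lesssim\Gamma(n_{\min},E,d_s)$ for every $d_s$, so the extra care you anticipate at $d_s\le2$ is not needed.
\item For $\mathbb E\,r_K$ you use a binomial Chernoff tail under a local lower mass bound plus compactness. The paper instead uses $\mathbb E F_{\nu_0}(R_K^*)=K/(E+1)$ for a uniform order statistic, which relies on continuity of $F_{\nu_0}$. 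It then inverts $F_{\nu_0}(r)\asymp r^{d_{\mathcal M}}$, a step that tacitly needs a global lower mass bound and Jensen; your version makes this explicit.
\end{itemize}
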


\begin{Rem}[Source coverage trade-off]\label{rem:trade_off}
	Theorem~\ref{thm:param_bound}'s three terms have direct roles: source estimation error $\eta_{\mathrm{src}}$ comes from fitting fixed source classifiers, empirical distribution error $\eta_{\mathrm{emp}}$ comes from estimating target/source fingerprints, and manifold interpolation error $\eta_{\mathrm{knn}}$ is the distance from the target style law to the selected source neighborhood. More sources can shrink $\eta_{\mathrm{knn}}$, but only if $n_{\min}$ is large enough to keep the worst source fingerprint reliable; otherwise $\eta_{\mathrm{emp}}$ dominates. For fixed dimensions, this bound vanishes when $n_0\to\infty$, $E\to\infty$, $K/E\to0$, and $(1+\log E)\widetilde{\ell}_{d_s}(n_{\min})^{d_s\vee2}=o(n_{\min})$.
\end{Rem}

\begin{Mth}[Target Excess Risk]\label{thm:excess_risk}
Under the assumptions of Theorem~\ref{thm:param_bound},
\begin{equation*}
\mathbb{E}\{\mathcal{R}_0(\widehat{\bm{\beta}}^{(0)})
-\mathcal{R}_0(\bm{\beta}_*^{(0)})\}
\le \mathcal{O}(R_c^2\mathcal{E}_{\mathrm{orc}}^2).
\end{equation*}
\end{Mth}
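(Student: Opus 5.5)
The plan is to reduce the excess-risk bound to the parameter bound of Theorem~\ref{thm:param_bound} by a second-order expansion of the target population risk $\mathcal{R}_0$ around its minimizer $\bm{\beta}_*^{(0)}$. Write $\Delta=\widehat{\bm{\beta}}^{(0)}-\bm{\beta}_*^{(0)}$. First, both $\widehat{\bm{\beta}}^{(0)}$ and $\bm{\beta}_*^{(0)}$ lie in the centered gauge $\mathcal B$. The first holds because $\widehat{\bm{\beta}}^{(0)}$ is a convex combination of source estimates in $\mathcal B$, and $\mathcal B$ is convex, being the intersection of a subspace with a Frobenius ball. The second is a relative-interior oracle optimum by Assumption~\ref{ass:boundedness}. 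Since $\bm{\beta}_*^{(0)}$ minimizes $\mathcal{R}_0$ over $\mathcal B$ and lies in its relative interior, the gradient of $\mathcal{R}_0$ at $\bm{\beta}_*^{(0)}$, projected onto the subspace $\{\mathbf 1^\top\beta=0\}$, vanishes. As $\Delta$ lies in that subspace, the first-order term $\langle\nabla\mathcal{R}_0(\bm{\beta}_*^{(0)}),\Delta\rangle$ is zero.

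Next, I bound the remainder by Taylor's theorem with integral remainder along the segment joining the two points, which stays in $\mathcal B$:
\[
\mathcal{R}_0(\widehat{\bm{\beta}}^{(0)})-\mathcal{R}_0(\bm{\beta}_*^{(0)})=\int_0^1(1-t)\,\Delta^\top\nabla^2\mathcal{R}_0(\bm{\beta}_*^{(0)}+t\Delta)\,\Delta\,dt .
\]
For the softmax cross-entropy, the single-sample Hessian with respect to $\bm\beta$ has the form $(\mathrm{diag}(p)-pp^\top)\otimes Z_cZ_c^\top$. Its operator norm is therefore at most $\tfrac12\|Z_c\|_2^2\le R_c^2/2$, using $\|Z_c\|_2\le R_c$. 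Equivalently, one can invoke the global $L_\ell$-smoothness in logits: $\|\Delta Z_c\|_2\le\|\Delta\|_F R_c$. Either route gives the pointwise bound $\mathcal{R}_0(\widehat{\bm{\beta}}^{(0)})-\mathcal{R}_0(\bm{\beta}_*^{(0)})\le \tfrac{L_\ell}{2}R_c^2\|\Delta\|_F^2$, deterministically, for every realization of the data.

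Taking expectations leaves $\mathbb{E}\|\Delta\|_F^2$, whereas Theorem~\ref{thm:param_bound} as stated only controls $\mathbb{E}\|\Delta\|_F$. I expect this mismatch to be the main obstacle. I would not use Jensen's inequality, since it runs the wrong way here. Instead, I would reopen the proof of Theorem~\ref{thm:param_bound} and square its decomposition. That proof should split $\Delta$ into three parts:
\begin{itemize}
\item a source-estimation part, $\sum_e w_e(\widehat{\bm\beta}^{(e)}-\bm\beta_*^{(e)})$;
\item an interpolation part, $\sum_e w_e(\Psi^*(\nu_e)-\Psi^*(\nu_0))$, bounded via $L_\beta\,\mathcal{W}_1(\nu_e,\nu_0)$;
\item an empirical-measure part, entering through the selection of neighbors under $\widehat\nu$ rather than $\nu$.
\end{itemize}
Then $(a+b+c)^2\le3(a^2+b^2+c^2)$ reduces the task to second moments of each part, and each is available at the squared rate.

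For the source part, the M-estimation argument behind $\eta_{\mathrm{src}}$ combines strong convexity $\mu$ with a uniform gradient concentration over $E$ sources. This yields a sub-Gaussian-type tail for $\max_e\|\widehat{\bm\beta}^{(e)}-\bm\beta_*^{(e)}\|_F$, whose second moment is $\lesssim\eta_{\mathrm{src}}^2$. The failure event of the sample-size condition contributes at most $4R_\beta^2$ times a probability that can be made smaller than $\eta_{\mathrm{src}}^2$.

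For the empirical and interpolation parts, all style laws are compactly supported, so every $\mathcal{W}_1$ is bounded by the support diameter. Consequently, second moments are at most a constant times first moments, and the Fournier--Guillin rates give $\mathbb{E}\,\mathcal{W}_1^2\lesssim \gamma$-type rates already squared. For the $k$-NN radius on a regular $d_{\mathcal M}$-dimensional space, the squared radius has expectation $\lesssim (K/E)^{2/d_{\mathcal M}}$ by the standard nearest-neighbor moment bounds.

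Combining these gives $\mathbb{E}\|\Delta\|_F^2\lesssim\mathcal{E}_{\mathrm{orc}}^2$. Inserting this into the quadratic bound yields $\mathbb{E}\{\mathcal{R}_0(\widehat{\bm{\beta}}^{(0)})-\mathcal{R}_0(\bm{\beta}_*^{(0)})\}\le\mathcal{O}(R_c^2\mathcal{E}_{\mathrm{orc}}^2)$, with constants depending on $L_\ell,L_\beta,R_\beta,\mu$.
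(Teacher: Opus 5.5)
Your skeleton is the paper's. The projected gradient at the relative-interior optimum $\bm{\beta}_*^{(0)}$ kills the linear term, and the Kronecker form of the softmax Hessian gives a bound of order $R_c^2\|\Delta\|_F^2$. You then square the three-term decomposition of Theorem~\ref{thm:param_bound}, handling the source part through the tail bound behind Lemma~\ref{lem:estimator_tail} and the radius through $\mathbb{E}(R_K^*)^2\lesssim(K/E)^{2/d_{\mathcal M}}$. The gap is in the empirical-measure part, which is exactly where the squared rate has to be earned. Boundedness of the support only gives $\mathbb{E}\,\mathcal{W}_1^2\le D_s\,\mathbb{E}\,\mathcal{W}_1\lesssim D_s\gamma$, which is the \emph{unsquared} rate: a bounded variable's second moment is not controlled by the square of its mean. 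Even granting the first-moment bounds used for Theorem~\ref{thm:param_bound}, your argument therefore yields only $\mathbb{E}\{\mathcal{R}_0(\widehat{\bm{\beta}}^{(0)})-\mathcal{R}_0(\bm{\beta}_*^{(0)})\}\lesssim R_c^2(\eta_{\mathrm{src}}^2+\eta_{\mathrm{emp}}+\eta_{\mathrm{knn}}^2)$. That is weaker than the claimed $R_c^2\mathcal{E}_{\mathrm{orc}}^2$. Reading ``Fournier--Guillin'' as the mean bound for $\mathbb{E}\,\mathcal{W}_2^2$ combined with $\mathcal{W}_1\le\mathcal{W}_2$ does not rescue this either: for $d_s<4$ that rate is of order $n^{-1/2}$, which is larger than $\gamma(n,d_s)^2$.

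You also skip a second point. The neighbor set and the weights are functions of the empirical fingerprints, so per-domain moments do not suffice. One must bound the weighted sum by $\max_{1\le e\le E}\mathcal{W}_1(\nu_e,\widehat{\nu}_e)+\mathcal{W}_1(\widehat{\nu}_0,\nu_0)$. What is then needed is $\mathbb{E}\max_e\mathcal{W}_1^2(\nu_e,\widehat{\nu}_e)\lesssim\{\gamma(n_{\min},d_s)+\Gamma(n_{\min},E,d_s)\}^2$ together with $\mathbb{E}\,\mathcal{W}_1^2(\widehat{\nu}_0,\nu_0)\lesssim\gamma(n_0,d_s)^2$; the $\Gamma$ term is precisely the cost of that maximum. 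The squared rates come from a deviation inequality, not from a moment bound. The paper uses the Fournier--Guillin concentration tails, a union bound over the $E$ sources, and the layer-cake identity $\mathbb{E}M_E^2=\int_0^\infty 2r\,\mathbb{P}(M_E>r)\,dr$ (Lemma~\ref{lem:wasserstein_concentration}). A shorter alternative is McDiarmid's inequality. Replacing one sample moves $\mathcal{W}_1(\widehat{\nu}_n,\nu)$ by at most $D_s/n$, so each centered deviation is sub-Gaussian with variance proxy of order $D_s^2/n$. A union bound plus tail integration then gives $\mathbb{E}\max_e\mathcal{W}_1^2(\nu_e,\widehat{\nu}_e)\lesssim\gamma(n_{\min},d_s)^2+D_s^2(1+\log E)/n_{\min}$, and the last term is $\lesssim\Gamma(n_{\min},E,d_s)^2$ when $1+\log E\le n_{\min}$. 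If you replace that step with either argument, and apply the same bound to the empirical terms inside Term C, the rest of your proof goes through.
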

Theorem~\ref{thm:excess_risk} turns the parameter bound into excess risk: bounded causal features scale logit error by $R_c$, and local smoothness gives the quadratic $O(R_c^2\mathcal{E}_{\mathrm{orc}}^2)$ rate. 

To compare \method{} with PooledJoint and IRM,we consider a special case that denote the latent domain position of meta-distribution by a scalar value $c\sim\pi_c$, which details in supplementary. Define
\[
\begin{gathered}
f^*(\bm{Z}_c,c)
=(\bm{\beta}_{\mathrm{base}}+c\bm{\beta}_{\mathrm{shift}})\bm{Z}_c,\\
\bm{\Pi}_C=\bm{I}_C-C^{-1}\bm{1}\bm{1}^\top,
\end{gathered}
\]
and for any predictor $f$,$\mathcal{R}_c(f)
=\mathbb{E}[\ell_{\mathrm{CE}}(f(\bm Z_c,\bm Z_s),Y_c)]$,
where $Y_c\sim\mathrm{Softmax}(f^*(\bm Z_c,c))$.The bounded additive pooled class is
\[
\begin{aligned}
\mathcal{H}_{\mathrm{pool}}(R_W)
=\{\,f&=\bm{W}_c\bm{Z}_c+\bm{W}_s\bm{Z}_s:\\
&\|\bm{W}_c\|_F,\|\bm{W}_s\|_F\le R_W\,\}.
\end{aligned}
\]
For target coordinate $c_0$, write $f_{\mathrm{inv}}=f^*(\cdot,c_{\mathrm{inv}})$ for fixed invariant prediction, where $c_{\mathrm{inv}}$ does not depend on the target domain
and satisfies $c_{\mathrm{inv}}\ne c_0$. Write $f_{\mathrm L}=f^*(\cdot,\widehat c_{\mathrm L})$ for oracle \method{}, where $\widehat c_{\mathrm L}=\sum_e w_e c_e$.

\begin{assumption}[Comparison shift model]\label{ass:multiplicative_shift}\label{ass:static_anchor}
Let $c$ be a bounded scalar domain coordinate with $\operatorname{Var}_{\pi_c}(c)>0$. The causal representation is centered with $\mathbb{E}\bm{Z}_c\bm{Z}_c^\top=\bm{\Sigma}_c\succ0$ and is independent of $(c,\bm{Z}_s)$. For the target coordinate, $c_{\mathrm{inv}}\ne c_0$.
\end{assumption}
The comparison model is intentionally narrower: it is a diagnostic setting used to separate source reweighting from pooled or invariant prediction.
\begin{Cor}[Pooled-joint misspecification]\label{cor:pooled_misspec}
Under Assumptions~\ref{ass:boundedness} and~\ref{ass:multiplicative_shift},
\begin{align*}
&\inf_{f\in\mathcal{H}_{\mathrm{pool}}(R_W)}\mathbb{E}_{c\sim\pi_c}
\{\mathcal{R}_c(f)-\mathcal{R}_c(f^*)\}\\[-0.1em]
&\quad\ge
\frac{\mu_\ell}{2}\operatorname{Var}_{c\sim\pi_c}(c)
\|\bm{\Pi}_C\bm{\beta}_{\mathrm{shift}}\bm{\Sigma}_c^{1/2}\|_F^2 .
\end{align*}
\end{Cor}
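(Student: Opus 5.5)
Fix any $f=\bm W_c\bm Z_c+\bm W_s\bm Z_s\in\mathcal H_{\mathrm{pool}}(R_W)$ and a domain coordinate $c$. Both $f(\bm Z_c,\bm Z_s)$ and $f^*(\bm Z_c,c)$ are bounded logits: $\|\bm Z_c\|_2\le R_c$, $\bm Z_s$ has compact support, and $c$ is bounded. The plan is to lower bound the excess risk pointwise in $c$ by a centered-logit quadratic, integrate over $\pi_c$, and then use independence and the additive structure to show that no choice of $(\bm W_c,\bm W_s)$ can reproduce the $c$-dependent slope $c\bm\beta_{\mathrm{shift}}$ multiplying $\bm Z_c$.

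\textbf{Step 1 (local strong convexity).} Condition on $(\bm Z_c,\bm Z_s,c)$. The conditional excess cross-entropy $\mathbb E[\ell_{\mathrm{CE}}(u,Y_c)-\ell_{\mathrm{CE}}(u^*,Y_c)\mid\cdot]$ with $u^*=f^*(\bm Z_c,c)$ equals $\mathrm{KL}(\mathrm{Softmax}(u^*)\,\|\,\mathrm{Softmax}(u))$. Since $u^*$ is the minimizer of this conditional risk, the gradient term vanishes. The local $\mu_\ell$-strong convexity of centered-logit risks on bounded comparison logits (Assumption~\ref{ass:boundedness}) then gives
\[
\mathcal R_c(f)-\mathcal R_c(f^*)\ \ge\ \frac{\mu_\ell}{2}\,\mathbb E\big\|\bm\Pi_C\{f(\bm Z_c,\bm Z_s)-f^*(\bm Z_c,c)\}\big\|_2^2 .
\]
The projection $\bm\Pi_C$ appears because softmax is invariant to adding constants to all logits.

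\textbf{Step 2 (orthogonal decomposition).} Write $\bm\Pi_C(f-f^*)=\bm\Pi_C(\bm W_c-\bm\beta_{\mathrm{base}}-c\bm\beta_{\mathrm{shift}})\bm Z_c+\bm\Pi_C\bm W_s\bm Z_s$. By Assumption~\ref{ass:multiplicative_shift}, $\bm Z_c$ is centered and independent of $(c,\bm Z_s)$, so the cross term has zero expectation. Dropping the nonnegative $\bm Z_s$ term, the expected squared norm is at least
\[
\mathbb E_c\big\|\bm\Pi_C(\bm W_c-\bm\beta_{\mathrm{base}}-c\bm\beta_{\mathrm{shift}})\bm\Sigma_c^{1/2}\big\|_F^2 .
\]

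\textbf{Step 3 (variance lower bound).} This expression is a quadratic in the scalar $c$ with a $c$-independent matrix $\bm W_c$. Its minimum over $\bm W_c$, even unconstrained, is attained when $\bm\Pi_C\bm W_c=\bm\Pi_C(\bm\beta_{\mathrm{base}}+\mathbb E c\,\bm\beta_{\mathrm{shift}})$. The minimum value is $\operatorname{Var}(c)\|\bm\Pi_C\bm\beta_{\mathrm{shift}}\bm\Sigma_c^{1/2}\|_F^2$ (bias--variance). Taking the infimum over $\mathcal H_{\mathrm{pool}}(R_W)$ can only increase the bound, which yields the corollary.

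\textbf{Main obstacle.} The delicate point is Step 1. The strong-convexity constant $\mu_\ell$ must apply uniformly to both the competitor logits and $f^*$ as $c$ varies. Here I would use the boundedness of $\|\bm W\|_F\le R_W$, of $R_c$, of the support of $\bm Z_s$, and of $c$ to place all logits in the bounded comparison set of Assumption~\ref{ass:boundedness}. I would apply strong convexity along the segment between $u^*$ and $u$, which stays in the convex bounded set.
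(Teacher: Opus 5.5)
Your proposal is correct and follows the paper's route. The paper applies the lower half of its centered-logit lemma (your Step~1), then its additive-projection lemma, whose proof is exactly your independence/cross-term decomposition followed by the bias--variance minimization in $c$. The differences are only bookkeeping. Because you need only an inequality, you avoid the paper's requirement that $R_W$ be large enough to contain the unconstrained minimizer. Conversely, you try to derive that all pooled and oracle centered logits lie in the bounded comparison set, partly from compact support of $\bm Z_s$, which is not among the assumptions the corollary cites; the paper simply assumes this membership in its full comparison-shift assumption.
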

Thus, under nontrivial rule variation, every additive pooled predictor incurs positive excess risk: it can add a style-dependent offset but cannot modulate the causal decision matrix.

\begin{Cor}[Comparison with fixed invariant prediction]\label{cor:ladder_irm_comparison}
Under Assumptions~\ref{ass:boundedness} and~\ref{ass:multiplicative_shift}, if $\|\bm{\Pi}_C\bm{\beta}_{\mathrm{shift}}\bm{\Sigma}_c^{1/2}\|_F>0$, then for target coordinate $c_0$,
\begin{equation*}
\frac{\mathcal{R}_{c_0}(f_{\mathrm L})-\mathcal{R}_{c_0}(f^*)}
{\mathcal{R}_{c_0}(f_{\mathrm{inv}})-\mathcal{R}_{c_0}(f^*)}
\le
\frac{L_\ell}{\mu_\ell}
\frac{(c_0-\widehat c_{\mathrm L})^2}{(c_0-c_{\mathrm{inv}})^2}.
\end{equation*}
Consequently, within this comparison model, \method{} has
lower target excess risk than the fixed invariant predictor
whenever
$|c_0-\widehat c_{\mathrm L}|
<\sqrt{\mu_\ell/L_\ell}|c_0-c_{\mathrm{inv}}|$.
\end{Cor}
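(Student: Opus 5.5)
The plan is to reduce both excess risks to quadratic forms in the logit discrepancy and then sandwich them between the local strong convexity constant $\mu_\ell$ and the global smoothness constant $L_\ell$ from Assumption~\ref{ass:boundedness}.

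\textbf{Step 1: write both excess risks as logit discrepancies.} For any $c'$, the predictor $f^*(\cdot,c')$ differs from $f^*(\cdot,c_0)$ only through the causal logits:
\[
f^*(\bm Z_c,c')-f^*(\bm Z_c,c_0)=(c'-c_0)\bm{\beta}_{\mathrm{shift}}\bm Z_c .
\]
Cross-entropy is invariant to adding a common constant to all logits. So only the centered part $(c'-c_0)\bm{\Pi}_C\bm{\beta}_{\mathrm{shift}}\bm Z_c$ matters, which matches the centered gauge in Assumption~\ref{ass:boundedness}.

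\textbf{Step 2: sandwich the excess risk.} Because $Y_{c_0}\sim\mathrm{Softmax}(f^*(\bm Z_c,c_0))$, the conditional risk in the logit $u$ is $\mathbb{E}[\ell_{\mathrm{CE}}(u,Y)\mid\bm Z_c]$, and it is minimized at the centered true logit. Its gradient there vanishes, since $\mathrm{Softmax}(u)-p=0$ at $u=f^*$. A second-order argument on the centered logits then gives
\[
\frac{\mu_\ell}{2}\|\bm\Pi_C\Delta\|_2^2\le \mathbb{E}[\ell(f^*+\Delta,Y)-\ell(f^*,Y)\mid\bm Z_c]\le \frac{L_\ell}{2}\|\bm\Pi_C\Delta\|_2^2 .
\]
The lower bound uses local strong convexity on bounded comparison logits. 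Boundedness holds because $c$ is bounded, $\|\bm Z_c\|\le R_c$, and $\widehat c_{\mathrm L}$, being a convex combination of the $c_e$, lies in the same bounded range. The upper bound uses global $L_\ell$-smoothness.

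\textbf{Step 3: take expectations over $\bm Z_c$.} Using $\mathbb{E}\bm Z_c\bm Z_c^\top=\bm\Sigma_c$,
\[
\mathbb{E}\|\bm\Pi_C\bm\beta_{\mathrm{shift}}\bm Z_c\|_2^2=\|\bm\Pi_C\bm\beta_{\mathrm{shift}}\bm\Sigma_c^{1/2}\|_F^2=:S .
\]
Applying Step 2 to each predictor yields
\[
\mathcal R_{c_0}(f_{\mathrm L})-\mathcal R_{c_0}(f^*)\le\tfrac{L_\ell}{2}(c_0-\widehat c_{\mathrm L})^2S,\qquad
\mathcal R_{c_0}(f_{\mathrm{inv}})-\mathcal R_{c_0}(f^*)\ge\tfrac{\mu_\ell}{2}(c_0-c_{\mathrm{inv}})^2S .
\]

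\textbf{Step 4: divide and conclude.} The denominator is strictly positive because $S>0$ by hypothesis and $c_{\mathrm{inv}}\ne c_0$ by Assumption~\ref{ass:multiplicative_shift}. Dividing the two bounds cancels $S$ and gives the claimed ratio bound. The consequence follows immediately: if $|c_0-\widehat c_{\mathrm L}|<\sqrt{\mu_\ell/L_\ell}\,|c_0-c_{\mathrm{inv}}|$, the right-hand side of the ratio is strictly less than $1$.

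\textbf{Main obstacle.} The delicate point is the lower bound in Step 2. Strong convexity of cross-entropy holds only on centered logits within a bounded set, so I must check that all comparison logits $f^*(\bm Z_c,c')$ stay in the region where $\mu_\ell$ applies, and that projecting with $\bm\Pi_C$ loses nothing. I would first establish this for the $\bm\Pi_C$-projected difference, using the fact that the Hessian of the softmax cross-entropy annihilates $\mathbf 1$. I would then invoke the stated local strong convexity directly, as an assumption on the centered risk.
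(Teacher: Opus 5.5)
Your proposal is correct and follows the paper's proof essentially step for step. The paper packages your Step 2 as Lemma~\ref{lem:centered_logit_excess}, applies its smoothness half to $f_{\mathrm L}$ and its strong-convexity half to $f_{\mathrm{inv}}$, uses the same identity $\mathbb{E}\|a\,\bm{\Pi}_C\bm{\beta}_{\mathrm{shift}}\bm{Z}_c\|_2^2=a^2\|\bm{\Pi}_C\bm{\beta}_{\mathrm{shift}}\bm{\Sigma}_c^{1/2}\|_F^2$, and divides.

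One small correction: the $\mu_\ell$ bound is used only for the denominator, so the $\mathcal{U}_R$ membership that matters is that of $f^*(\cdot,c_0)$ and $f_{\mathrm{inv}}$, not $f_{\mathrm L}$. The paper takes this membership as part of its full comparison-shift assumption rather than deriving it from boundedness of $c$. That is the cleaner route, since $c_{\mathrm{inv}}$ need not lie in the support of $\pi_c$.
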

This comparison reflects known limits of invariant prediction rules \citep{peters2016invariant,arjovsky2019irm,rosenfeld2021risks}: a fixed invariant predictor uses one target-independent coordinate, whereas \method{} uses target style distributions to weight domain-dependent source rules.

\subsection{Generic Representation Guided Theory}\label{sec:generic_representation_theory}
We next quantify how representation error affects the
parameter and excess risk bounds.

For any representation learner and every domain $e=0,\ldots,E$, we define learned representations as
$\widetilde{\bm{z}}_{c,i}^{(e)}=\bm{z}_{c,i}^{(e)}+\bm{\xi}_{c,i}^{(e)}$ and
$\widetilde{\bm{z}}_{s,i}^{(e)}=\bm{z}_{s,i}^{(e)}+\bm{\xi}_{s,i}^{(e)}$.

\begin{assumption}[Sub-Gaussian representation error]\label{ass:generic_representation_error}
Conditional on $\bm{z}_{c,i}^{(e)}$ and $\bm{z}_{s,i}^{(e)}$, the perturbations
are mean-zero, conditionally independent of $Y$, and uniformly
sub-Gaussian, with
$\|\langle u,\xi_{c,i}^{(e)}\rangle\|_{\psi_2}
\vee\|\langle v,\xi_{s,i}^{(e)}\rangle\|_{\psi_2}
\le\epsilon_{\mathrm{rep}}/\sqrt{d_c\vee d_s}$
for all $e,i$ and unit $u,v$, where
$\epsilon_{\mathrm{rep}}\ge0$ is the representation-error scale.
For each source domain, the perturbed empirical risk is $\mu_{\mathrm{rep}}$-strongly convex and its single-sample gradient is uniformly $L_{\nabla z}$-Lipschitz in $\bm{z}_c$ over $\bm{\beta}\in\mathcal{B}$.
\end{assumption}

This assumption is a modular robustness interface rather than a theorem that a particular training objective recovers the oracle representation: any representation learner satisfying this perturbation model can be plugged into the inference-stage analysis.

\begin{Cor}[Generic Representation Bound]\label{cor:generic_representation}
Under the assumptions of Theorem~\ref{thm:param_bound} and Assumption~\ref{ass:generic_representation_error}, for fixed $E$, let $\widetilde{\bm{\beta}}^{(e)}$ be the perturbed source estimator and $\widetilde{\nu}_e=\frac{1}{n_e}\sum_{i=1}^{n_e}\delta_{\widetilde{\bm{z}}_{s,i}^{(e)}}$, compute $\widetilde{\mathcal{N}}_K(0)$ and weights $\widetilde w_e$ by Eq.~\eqref{eq:dynamic_weight}, and set
$\widetilde{\bm{\beta}}^{(0)}=\sum_{e\in\widetilde{\mathcal{N}}_K(0)}\widetilde w_e\widetilde{\bm{\beta}}^{(e)}$.
With $\mathcal{E}_{\mathrm{rep}}=(L_{\nabla z}/\mu_{\mathrm{rep}}+L_\beta)\epsilon_{\mathrm{rep}}$,
\begin{equation*}
\mathbb{E}\|\widetilde{\bm{\beta}}^{(0)}-\bm{\beta}_*^{(0)}\|_F
\le \mathcal{O}(\mathcal{E}_{\mathrm{orc}}+\mathcal{E}_{\mathrm{rep}}),
\end{equation*}
and the target excess risk satisfies
\begin{equation*}
\mathbb{E}\{\mathcal{R}_0(\widetilde{\bm{\beta}}^{(0)})
-\mathcal{R}_0(\bm{\beta}_*^{(0)})\}
\le
\mathcal{O}(R_c^2\mathcal{E}_{\mathrm{orc}}^2+
R_c^2\mathcal{E}_{\mathrm{rep}}^2+
R_\beta^2\epsilon_{\mathrm{rep}}^2).
\end{equation*}
\end{Cor}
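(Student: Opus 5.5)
The plan is to decompose the perturbed error against the oracle-representation estimator of Theorem~\ref{thm:param_bound} and to control the two new sources of error, perturbed source parameters and perturbed routing, separately. Write
\[
\widetilde{\bm\beta}^{(0)}-\bm\beta_*^{(0)}
=\sum_{e\in\widetilde{\mathcal N}_K(0)}\widetilde w_e\bigl(\widetilde{\bm\beta}^{(e)}-\widehat{\bm\beta}^{(e)}\bigr)
+\Bigl(\sum_{e\in\widetilde{\mathcal N}_K(0)}\widetilde w_e\widehat{\bm\beta}^{(e)}-\bm\beta_*^{(0)}\Bigr).
\]
Since the weights $\widetilde w_e$ form a convex combination, the first term is bounded by $\max_e\|\widetilde{\bm\beta}^{(e)}-\widehat{\bm\beta}^{(e)}\|_F$. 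For the second term, the proof of Theorem~\ref{thm:param_bound} is rerun with the perturbed fingerprints $\widetilde\nu_e$ in place of $\widehat\nu_e$.

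\textbf{Step 1 (source parameter perturbation).} Both $\widetilde{\bm\beta}^{(e)}$ and $\widehat{\bm\beta}^{(e)}$ are empirical minimisers. Assumption~\ref{ass:generic_representation_error} makes the perturbed empirical risk $\mu_{\mathrm{rep}}$-strongly convex, so the standard perturbation-of-minimiser argument gives
\[
\|\widetilde{\bm\beta}^{(e)}-\widehat{\bm\beta}^{(e)}\|_F\le \mu_{\mathrm{rep}}^{-1}\Bigl\|\tfrac1{n_e}\sum_i\bigl[\nabla\ell(\widehat{\bm\beta}^{(e)};\widetilde z_{c,i})-\nabla\ell(\widehat{\bm\beta}^{(e)};z_{c,i})\bigr]\Bigr\|_F .
\]
The $L_{\nabla z}$-Lipschitz property in $\bm z_c$ bounds the right-hand side by $(L_{\nabla z}/\mu_{\mathrm{rep}})\,n_e^{-1}\sum_i\|\xi_{c,i}^{(e)}\|_2$. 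The sub-Gaussian scaling $\epsilon_{\mathrm{rep}}/\sqrt{d_c\vee d_s}$ per direction gives $\mathbb E\|\xi_{c,i}\|_2\lesssim\epsilon_{\mathrm{rep}}$. Taking a maximum over the fixed number $E$ of sources costs only constants, so this term is $O(L_{\nabla z}\epsilon_{\mathrm{rep}}/\mu_{\mathrm{rep}})$.

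\textbf{Step 2 (routing perturbation).} Using the coupling $\delta_{z_{s,i}}\mapsto\delta_{\widetilde z_{s,i}}$,
\[
\mathcal W_1(\widetilde\nu_e,\widehat\nu_e)\le n_e^{-1}\sum_i\|\xi_{s,i}^{(e)}\|_2 ,
\]
whose expectation is $\lesssim\epsilon_{\mathrm{rep}}$; the same holds for $e=0$. The estimated distances $\widetilde d_e$ therefore differ from $\widehat d_e$ by $O(\epsilon_{\mathrm{rep}})$ in expectation. Theorem~\ref{thm:param_bound}'s argument bounds $\|\bm\beta_*^{(e)}-\bm\beta_*^{(0)}\|_F\le L_\beta\mathcal W_1(\nu_e,\nu_0)$ for each selected neighbour, and controls $\mathcal W_1(\nu_e,\nu_0)$ by $\widehat d_e$ plus empirical errors. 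A selected neighbour under perturbed distances satisfies $\widetilde d_e\le$ the $K$-th smallest $\widetilde d_j$, which is within $2\max_j|\widetilde d_j-\widehat d_j|$ of the $K$-th smallest unperturbed distance. The oracle bound thus transfers with an additive $L_\beta\epsilon_{\mathrm{rep}}$, plus the oracle $\eta_{\mathrm{src}}$ term for $\|\widehat{\bm\beta}^{(e)}-\bm\beta_*^{(e)}\|_F$. Summing Steps 1 and 2 gives $\mathcal O(\mathcal E_{\mathrm{orc}}+\mathcal E_{\mathrm{rep}})$.

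\textbf{Step 3 (excess risk).} I would proceed as in Theorem~\ref{thm:excess_risk}. The target risk is evaluated on oracle $\bm Z_c$, so $L_\ell$-smoothness in centered logits, together with the vanishing gradient at the relative-interior optimum $\bm\beta_*^{(0)}$, gives
\[
\mathcal R_0(\widetilde{\bm\beta}^{(0)})-\mathcal R_0(\bm\beta_*^{(0)})\lesssim R_c^2\|\widetilde{\bm\beta}^{(0)}-\bm\beta_*^{(0)}\|_F^2 .
\]
If instead the predictor is applied to learned features $\widetilde z_c$, the extra logit error $\widetilde{\bm\beta}^{(0)}\xi_c$ has second moment $\le R_\beta^2\epsilon_{\mathrm{rep}}^2$. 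Expanding the risk around the oracle logit $\bm\beta_*^{(0)}z_c$ and using $(a+b)^2\le2a^2+2b^2$ yields the additional $R_\beta^2\epsilon_{\mathrm{rep}}^2$ term.

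\textbf{Main obstacle.} The squared bound needs $\mathbb E\|\cdot\|_F^2$, not the square of $\mathbb E\|\cdot\|_F$. I would therefore redo Steps 1--2 in $L^2$, which sub-Gaussianity supports.

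The real difficulty is the discontinuity of the top-$K$ selection and the softmax weights under distance perturbation. I would avoid any stability argument for the weights. Instead I would argue that any source receiving weight has perturbed distance near the $K$-th order statistic, so its parameter is close to $\bm\beta_*^{(0)}$ regardless of which neighbours are chosen. This makes the bound insensitive to exactly how the weights shift.
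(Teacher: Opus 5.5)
Your proposal is correct and follows essentially the same route as the paper. Both arguments use the coupling bound for $\mathcal W_1(\widetilde\nu_e,\widehat\nu_e)$, the constrained strong-convexity bound $\|\widetilde{\bm\beta}^{(e)}-\widehat{\bm\beta}^{(e)}\|_F\lesssim (L_{\nabla z}/\mu_{\mathrm{rep}})\,n_e^{-1}\sum_i\|\bm\xi_{c,i}^{(e)}\|_2$, an order-statistic comparison that needs no stability of the top-$K$ set, second-moment versions of each term, and the $R_c\|\widetilde{\bm\beta}^{(0)}-\bm\beta_*^{(0)}\|_F+R_\beta\|\bm\xi_c\|_2$ logit split under $L_\ell$-smoothness. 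The only cosmetic difference is in the neighbour comparison: you route it through the unperturbed empirical $K$-th distance and pay an extra $2\max_j|\widetilde d_j-\widehat d_j|$, whereas the paper compares $\widetilde{\mathcal N}_K(0)$ directly with the true neighbour set $\mathcal N^*_K(0)$ via the generic empirical measures.
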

For fixed $E$, the bound separates $\mathcal E_{\mathrm{orc}}$ from representation
error, the latter enters linearly in parameter error and
quadratically in risk, and vanishes as
$\epsilon_{\mathrm{rep}}\to0$.

\section{Experiments}\label{sec:experiments}

\begin{figure*}[t]
	\begin{center}
		\includegraphics[width=0.6\textwidth]{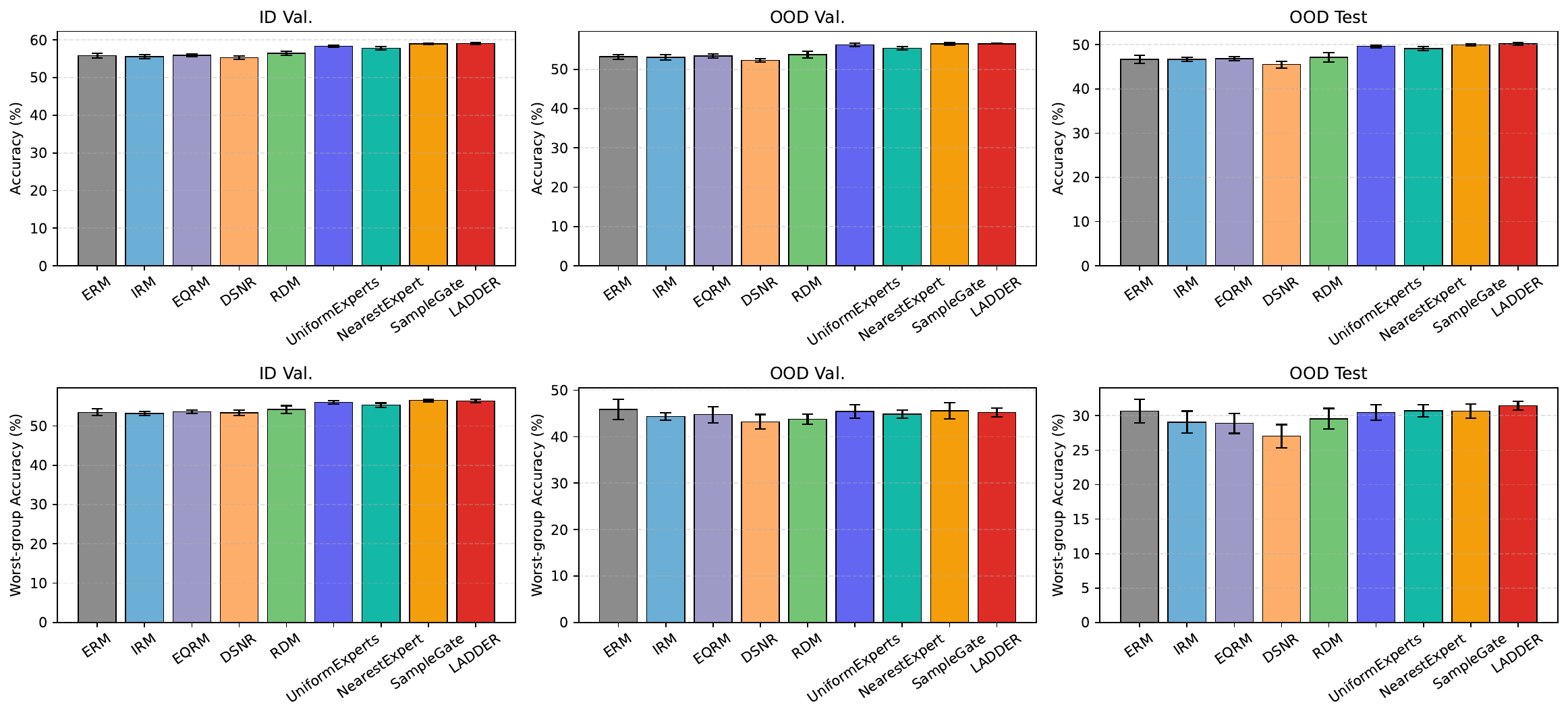}
		\refstepcounter{figure}
		\label{fig:fmow_accuracy}
		\parbox{0.92\textwidth}{\small Figure~\thefigure: FMoW-WILDS accuracy and worst-region accuracy on the ID validation, OOD validation, and OOD test splits. Error bars denote standard deviations over five random seeds.}
		
		\setlength{\tabcolsep}{2pt}
		\renewcommand{\arraystretch}{0.76}
		\scriptsize
		\begin{tabular}{ccc}
			\includegraphics[width=0.25\textwidth]{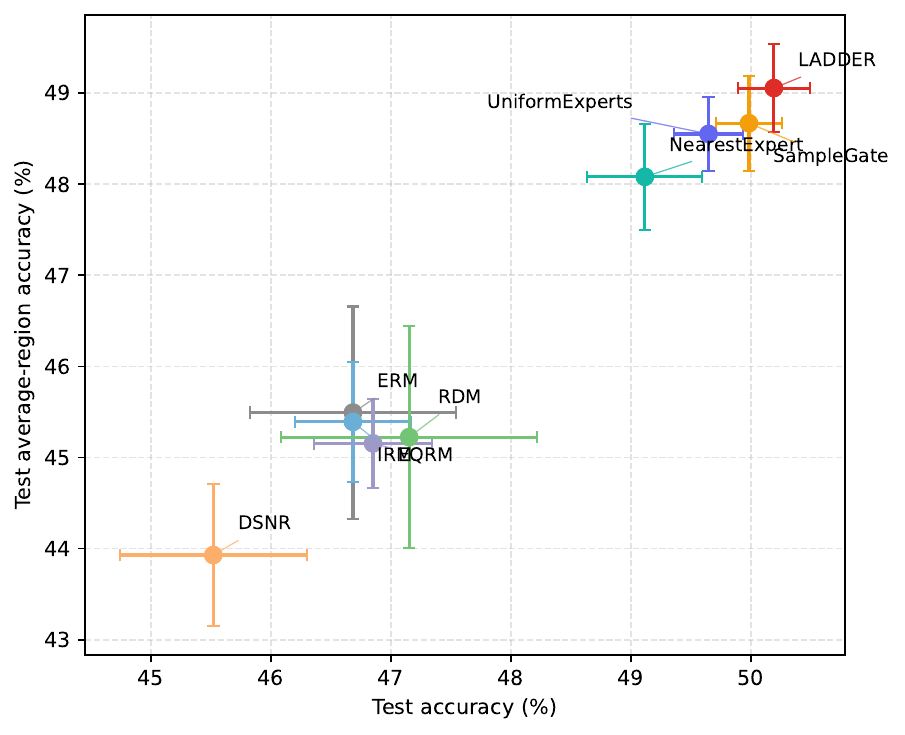} &
			\includegraphics[width=0.25\textwidth]{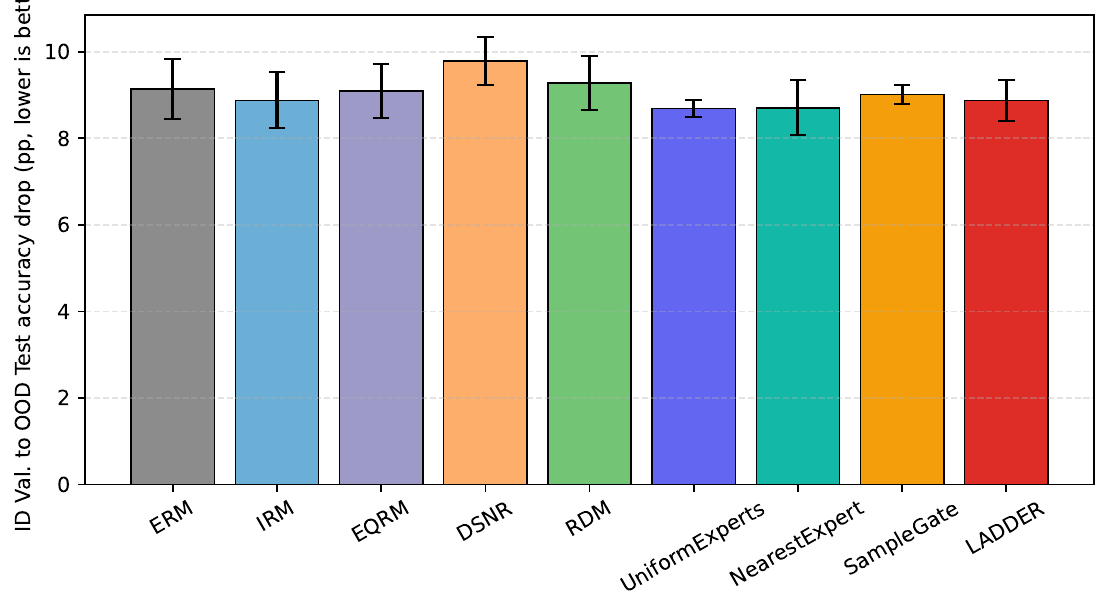} &
			\includegraphics[width=0.25\textwidth]{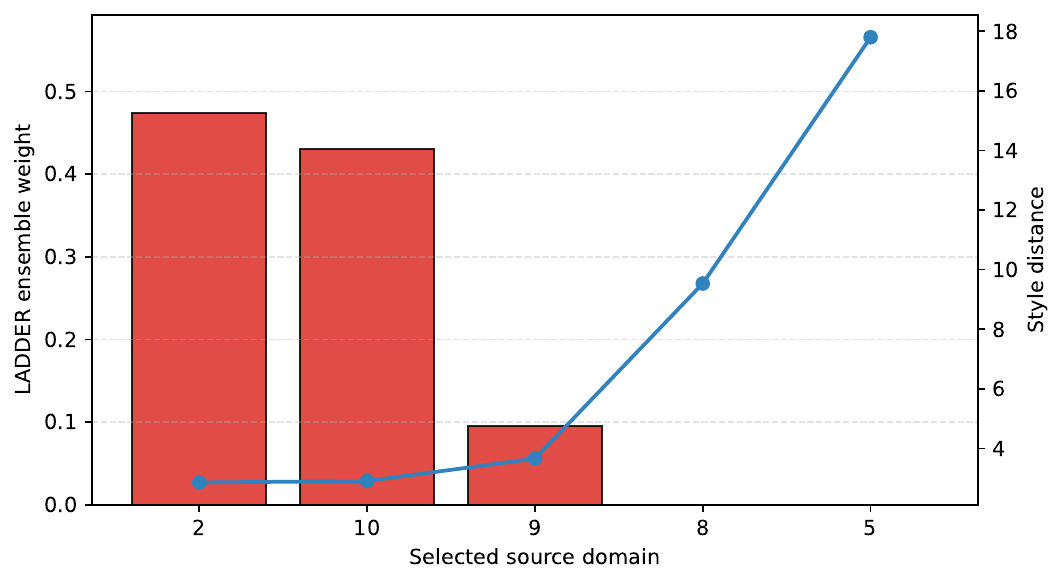} \\
			(a) Test/avg & (b) Drop & (c) Weights
		\end{tabular}
		\refstepcounter{figure}
		\label{fig:fmow_tradeoff_weights}
		\parbox{0.92\textwidth}{\small Figure~\thefigure: FMoW-WILDS accuracy/average-region trade-off, OOD degradation, and representative \method{} source-classifier weights.}
		
		\setlength{\tabcolsep}{2pt}
		\renewcommand{\arraystretch}{0.76}
		\scriptsize
		\begin{tabular}{cc}
			\includegraphics[width=0.49\textwidth]{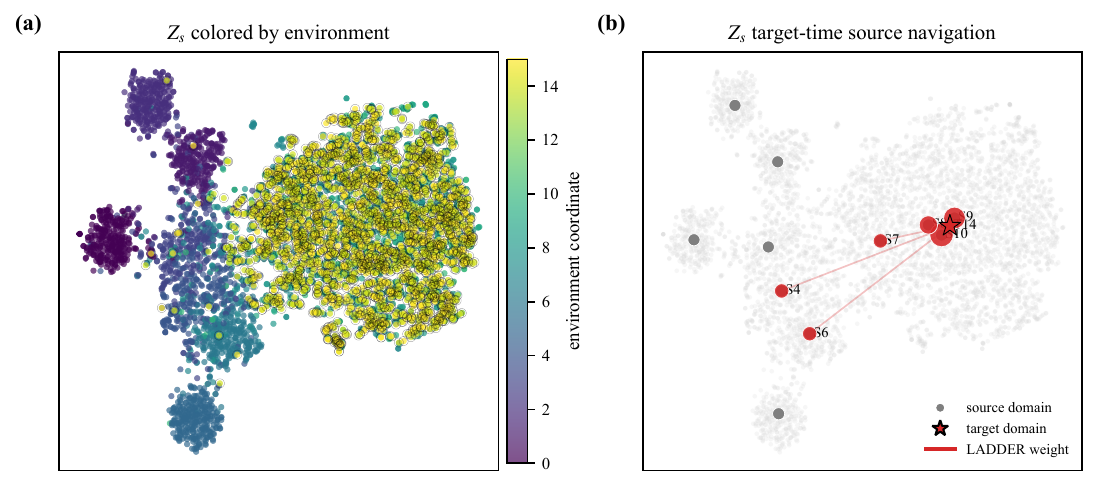} &
			\includegraphics[width=0.27\textwidth]{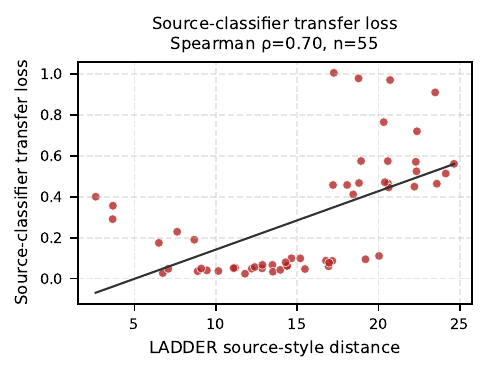} \\
			(a,b) Representation and weighting & (c) Source geometry
		\end{tabular}
		\refstepcounter{figure}
		\label{fig:fmow_tsne_mechanism}
		\parbox{0.92\textwidth}{\small Figure~\thefigure: FMoW-WILDS representation mechanism and source-geometry diagnostic. Panels (a,b) visualize with t-SNE \citep{vandermaaten2008visualizing} the source-domain structure in $\bm{Z}_s$ and inference-stage weights. Panel (c), for seed $0$, compares source-style distance with bidirectional classifier transfer loss. Their positive Spearman correlation ($\rho=0.70$ over $55$ source pairs) indicates that style-near domains tend to have more transferable classifiers.}
	\end{center}
\end{figure*}

We evaluate \method{} in three experimental settings: simulations, FMoW-WILDS, and iWildCam-WILDS. The simulation results and ablations are reported in the Supplementary Material, while the main text focuses on real-data evidence.
We compare with target-agnostic baselines ERM, IRM~\citep{arjovsky2019irm}, EQRM~\citep{eastwood2022qrm}, and RDM~\citep{nguyen2024rdm}; the fixed-model DSNR baseline~\citep{zheng2026dsnr}, which estimates a domain index from unlabeled target covariates; source-routing controls UniformExperts, NearestExpert, and SampleGate; and, in the simulations, PooledJoint and an oracle target classifier. Full baseline and metric details are in the Supplementary Material.

\subsection{Real Data 1: FMoW-WILDS}\label{sec:fmow_wilds}
FMoW-WILDS \citep{koh2021wilds,christie2018functional}  is a $62$-way satellite land-use classification benchmark built from Functional Map of the World images, where distribution shifts arise across time and geographic regions. We use year metadata to index source domains and evaluate whether style-representation-based reweighting improves both overall OOD accuracy and regional robustness. Metrics are test accuracy, Worst/Avg., and ID-to-test drop.

\begin{table}[h!]
	\centering
	\caption{FMoW-WILDS results over five random seeds. Accuracy, worst-region accuracy, average-region accuracy, and ID-to-test accuracy drop are reported in percentage points.}
	\label{tab:fmow_results}
	\scriptsize
	\setlength{\tabcolsep}{2.2pt}
	\renewcommand{\arraystretch}{0.78}
	\resizebox{\columnwidth}{!}{%
		\begin{tabular}{lcccc}
			\toprule
			Method & Test $\uparrow$ & Worst $\uparrow$ & Avg. $\uparrow$ & Drop $\downarrow$ \\
			\midrule
			ERM & $46.68 \pm 0.86$ & $30.68 \pm 1.72$ & $45.49 \pm 1.16$ & $9.14 \pm 0.69$ \\
			IRM & $46.68 \pm 0.49$ & $29.06 \pm 1.60$ & $45.39 \pm 0.66$ & $8.88 \pm 0.65$ \\
			EQRM & $46.85 \pm 0.49$ & $28.89 \pm 1.44$ & $45.15 \pm 0.49$ & $9.09 \pm 0.63$ \\
			DSNR & $45.52 \pm 0.78$ & $27.03 \pm 1.69$ & $43.93 \pm 0.78$ & $9.78 \pm 0.55$ \\
			RDM & $47.15 \pm 1.07$ & $29.56 \pm 1.50$ & $45.22 \pm 1.22$ & $9.27 \pm 0.62$ \\
			UniformExperts & $49.65 \pm 0.29$ & $30.46 \pm 1.11$ & $48.55 \pm 0.40$ & $\textbf{8.69} \pm \textbf{0.20}$ \\
			NearestExpert & $49.12 \pm 0.48$ & $30.71 \pm 0.86$ & $48.08 \pm 0.58$ & $8.70 \pm 0.64$ \\
			SampleGate & $49.98 \pm 0.27$ & $30.67 \pm 1.06$ & $48.67 \pm 0.52$ & $9.01 \pm 0.22$ \\
			\textbf{LADDER} & $\textbf{50.19} \pm \textbf{0.30}$ & $\textbf{31.46} \pm \textbf{0.66}$ & $\textbf{49.05} \pm \textbf{0.48}$ & $8.87 \pm 0.47$ \\
			\bottomrule
	\end{tabular}}
\end{table}
All methods use ImageNet-pretrained ResNet-50
\citep{he2016deep,deng2009imagenet} and five seeds. ERM, IRM, EQRM, and RDM are target-agnostic; DSNR estimates a split-level domain index from unlabeled target covariates with fixed parameters. UniformExperts, NearestExpert, SampleGate, and \method{} use the same frozen-encoder/source-classifier protocol. UniformExperts uses equal weights, NearestExpert uses hard $K=1$ Sinkhorn routing, and SampleGate uses the frozen style-domain head for per-sample source-membership weighting. Region metrics omit the tiny Other region; results are not leaderboard entries.

Together, Figures~\ref{fig:fmow_accuracy}--\ref{fig:fmow_tsne_mechanism} and Table~\ref{tab:fmow_results} show the frozen representation and source-classifier stack improve FMoW over DG baselines. UniformExperts, SampleGate, and \method{} are numerically close; \method{} is slightly ahead on test and worst-region accuracy, while FMoW weakly separates routing rules. NearestExpert is weaker, suggesting hard single-source routing is less robust than multi-source aggregation.

\subsection{Real Data 2: iWildCam-WILDS}\label{sec:iwildcam_wilds}
iWildCam-WILDS \citep{koh2021wilds,beery2020iwildcam} stresses a different failure mode: the target split is not a single homogeneous domain, but a collection of camera-trap locations. It is a $182$-way wildlife species classification benchmark collected from camera traps, with severe long-tailed class and location frequencies. This dataset is particularly aligned with our source-reweighting view because each location may have a distinct background, illumination pattern, species composition, and sensor style.

\begin{figure}[h!]
	\centering
	\includegraphics[width=\columnwidth]{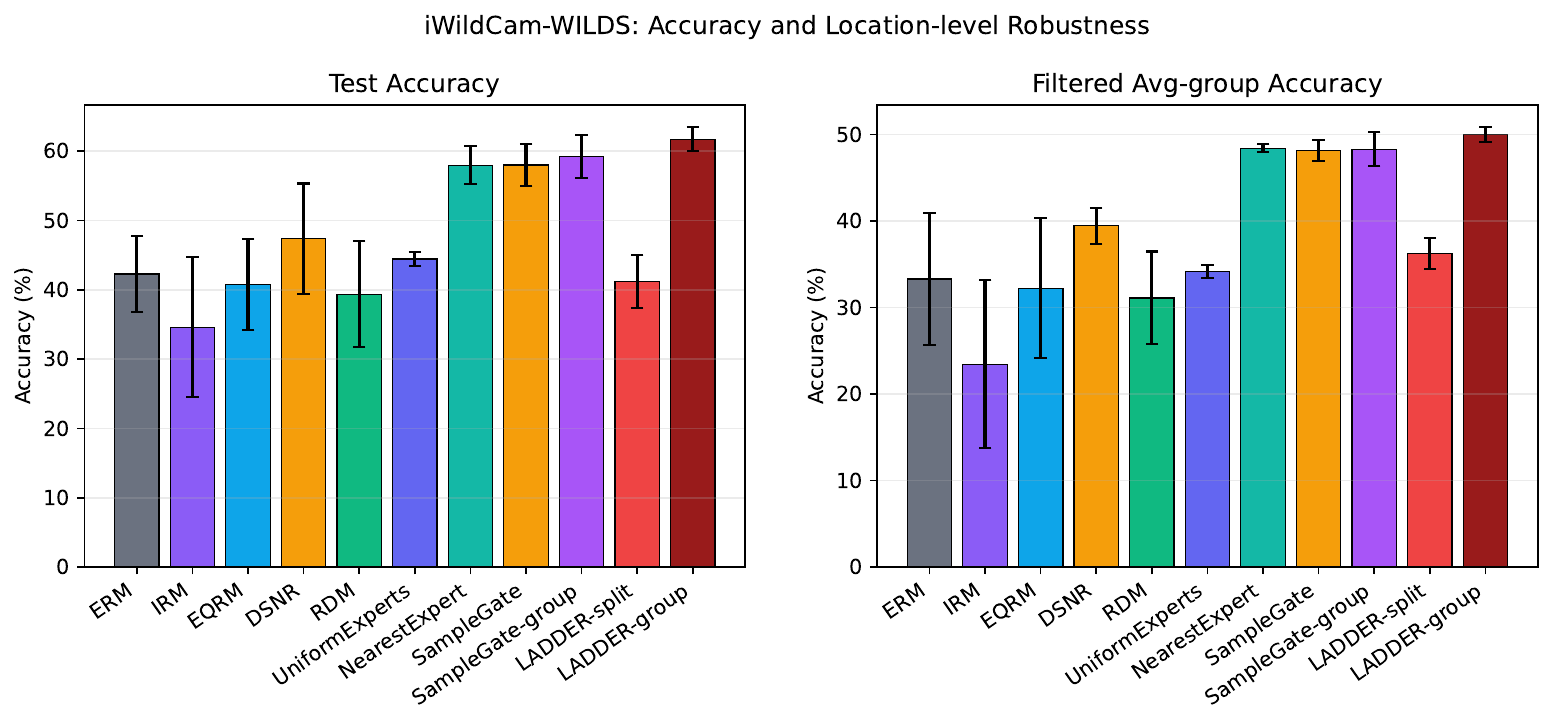}
	\refstepcounter{figure}
	\label{fig:iwildcam_accuracy}
	\parbox{\columnwidth}{\small Figure~\thefigure: iWildCam-WILDS test accuracy and filtered average-group accuracy. Error bars denote standard deviations over five random seeds.}
	
	\setlength{\tabcolsep}{0.5pt} 
	\renewcommand{\arraystretch}{0.72}
	\scriptsize
	\begin{tabular}{cc}
		\includegraphics[width=0.5\columnwidth]{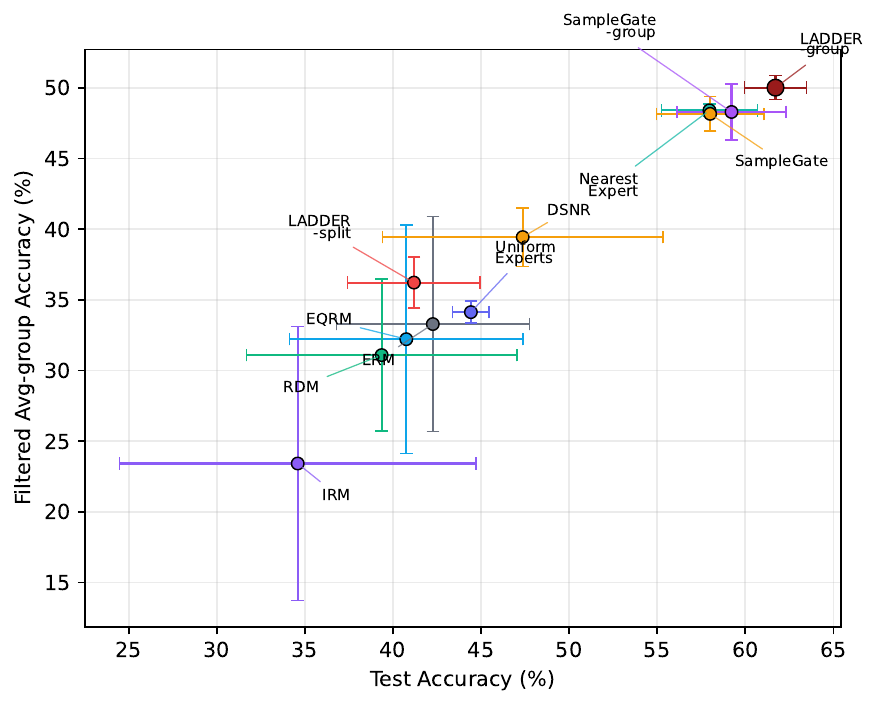} &
		\includegraphics[width=0.5\columnwidth]{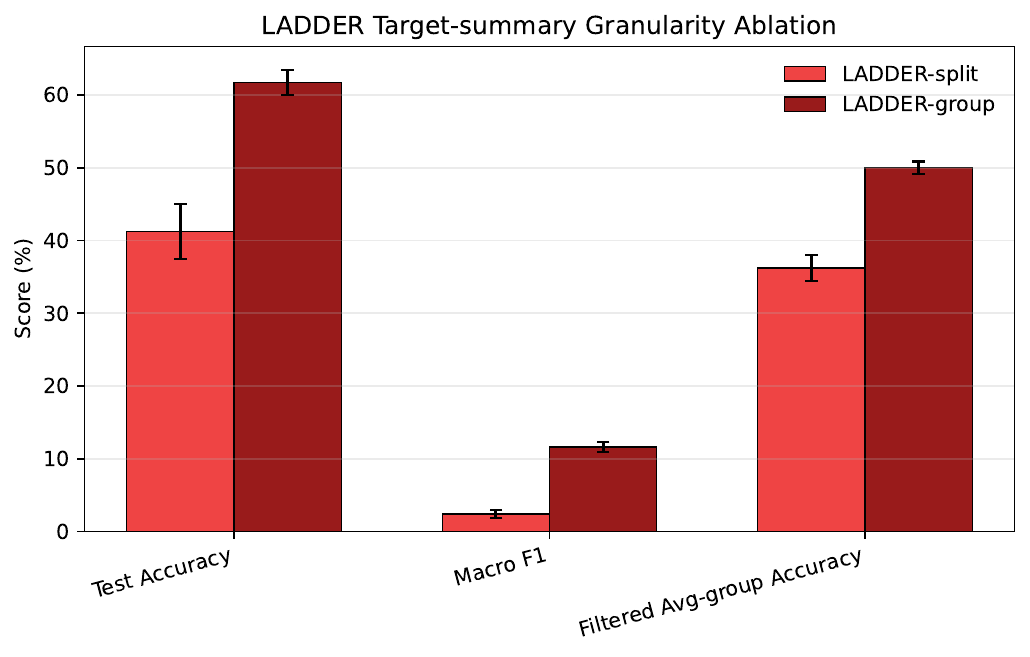} \\
		(a) Accuracy/robustness trade-off & (b) Split vs. location summaries
	\end{tabular}
	\refstepcounter{figure}
	\label{fig:iwildcam_tradeoff_ablation}
	\parbox{\columnwidth}{\small Figure~\thefigure: iWildCam-WILDS trade-off and \method{} target-summary ablation. \method{}-group lies in the high-accuracy, high-robustness region, and the split-to-group ablation shows that camera-location-level summaries matter for this benchmark.}
\end{figure}

All methods use ImageNet-pretrained ResNet-50
\citep{he2016deep,deng2009imagenet}, $64$ source locations,
class-balanced cross-entropy with parameter $0.999$
\citep{cui2019classbalanced}, and five seeds.
We report test accuracy, macro F1, and filtered average-group
accuracy over the $35$ locations with at least $50$ samples;
raw and filtered worst-location metrics remain nearly degenerate.
ERM, IRM, EQRM, and RDM are target-agnostic references; DSNR
estimates a split-level domain index from unlabeled target covariates
with fixed parameters.
NearestExpert and \method{}-group use location-level Sinkhorn summaries;
SampleGate-group uses the same location groups but averages per-sample
source-membership weights within each group. UniformExperts uses no
target covariates, and SampleGate uses per-sample weights without grouping.
\method{}-split forms one summary for the full split,
whereas \method{}-group forms one per camera location; location
IDs only partition unlabeled covariates and are never classifier
inputs or optimization signals.

\begin{table}[h]
	\centering
	\caption{iWildCam-WILDS results over five random seeds. Test accuracy, macro F1, and filtered average-group accuracy are reported in percentage points.}
	\label{tab:iwildcam_results}
	\small 
	\setlength{\tabcolsep}{3.0pt}
	\renewcommand{\arraystretch}{0.82} 
	\begin{tabular}{lccc}
		\toprule
		Method & Test $\uparrow$ & Macro F1 $\uparrow$ & Filt. Avg. $\uparrow$ \\
		\midrule
		ERM & $42.26 \pm 5.48$ & $11.23 \pm 1.35$ & $33.29 \pm 7.61$ \\
		IRM & $34.59 \pm 10.11$ & $6.20 \pm 1.08$ & $23.42 \pm 9.71$ \\
		EQRM & $40.75 \pm 6.62$ & $10.45 \pm 1.37$ & $32.22 \pm 8.07$ \\
		DSNR & $47.36 \pm 7.96$ & $12.34 \pm 1.12$ & $39.44 \pm 2.07$ \\
		RDM & $39.36 \pm 7.67$ & $8.97 \pm 1.78$ & $31.09 \pm 5.37$ \\
		UniformExperts & $44.42 \pm 1.03$ & $4.53 \pm 0.25$ & $34.13 \pm 0.78$ \\
		NearestExpert & $57.96 \pm 2.73$ & $11.62 \pm 0.90$ & $48.41 \pm 0.44$ \\
		SampleGate & $58.00 \pm 3.05$ & $\textbf{14.40} \pm \textbf{0.70}$ & $48.16 \pm 1.23$ \\
		SampleGate-group & $59.22 \pm 3.10$ & $12.00 \pm 0.86$ & $48.29 \pm 1.98$ \\
		\method{}-split & $41.19 \pm 3.76$ & $2.42 \pm 0.55$ & $36.22 \pm 1.80$ \\
		\textbf{\method{}-group} & $\textbf{61.71} \pm \textbf{1.76}$ & $11.61 \pm 0.65$ & $\textbf{50.01} \pm \textbf{0.84}$ \\
		\bottomrule
	\end{tabular}
\end{table}

Table~\ref{tab:iwildcam_results} and Figures~\ref{fig:iwildcam_accuracy}--\ref{fig:iwildcam_tradeoff_ablation} show that \method{}-group obtains the highest test and filtered average-group accuracy, numerically exceeding the protocol-matched SampleGate-group by about $2.5$ and $1.7$ points, respectively, while per-sample SampleGate gives the highest macro F1. The split-to-group gap shows that domain-level routing requires a coherent target-summary unit. Together, the results indicate that location sharing alone does not explain the full gain: distribution-level style routing improves aggregate and location-averaged accuracy over averaged per-sample source-membership weights, although macro F1 remains a SampleGate advantage.

\section{Discussion}\label{sec:discussion}

LADDER is most useful when the deployment domain is not well represented by an invariant prediction rule, but remains related to source domains through style information. Source coverage is therefore important: if the target lies far outside the source style range, reweighting can reduce but may not eliminate approximation error. Our empirical decomposition is therefore operational rather than statistically independent: $\bm Z_c$ is used for prediction and $\bm Z_s^{(e)}$ for domain-level routing. Source-geometry diagnostics nevertheless show that style distance is informative about source-classifier transferability. Future work could combine LADDER with uncertainty-aware source weighting, extrapolative mechanisms beyond convex source combinations, and foundation-model representations for richer visual and multimodal DG settings.

\FloatBarrier

\clearpage

\makeatletter
\input{size12.clo}
\makeatother
\renewcommand{\encodingdefault}{OT1}
\renewcommand{\rmdefault}{cmr}
\renewcommand{\sfdefault}{cmss}
\renewcommand{\ttdefault}{cmtt}
\normalfont
\nonfrenchspacing
\makeatletter
\renewcommand\section{\@startsection {section}{1}{\z@}%
  {-3.5ex \@plus -1ex \@minus -.2ex}%
  {2.3ex \@plus .2ex}%
  {\normalfont\Large\bfseries}}
\renewcommand\subsection{\@startsection{subsection}{2}{\z@}%
  {-3.25ex\@plus -1ex \@minus -.2ex}%
  {1.5ex \@plus .2ex}%
  {\normalfont\large\bfseries}}
\renewcommand\subsubsection{\@startsection{subsubsection}{3}{\z@}%
  {-3.25ex\@plus -1ex \@minus -.2ex}%
  {1.5ex \@plus .2ex}%
  {\normalfont\normalsize\bfseries}}
\renewcommand\paragraph{\@startsection{paragraph}{4}{\z@}%
  {3.25ex \@plus 1ex \@minus .2ex}%
  {-1em}%
  {\normalfont\normalsize\bfseries}}
\renewcommand\subparagraph{\@startsection{subparagraph}{5}{\parindent}%
  {3.25ex \@plus 1ex \@minus .2ex}%
  {-1em}%
  {\normalfont\normalsize\bfseries}}
\makeatother
\setlength{\headheight}{0pt}
\setlength{\headsep}{0pt}
\setlength{\footskip}{30pt}
\setlength{\textwidth}{\paperwidth}
\addtolength{\textwidth}{-2in}
\setlength{\oddsidemargin}{0pt}
\setlength{\evensidemargin}{0pt}
\setlength{\textheight}{\paperheight}
\addtolength{\textheight}{-\headheight}
\addtolength{\textheight}{-\headsep}
\addtolength{\textheight}{-\footskip}
\addtolength{\textheight}{-2in}
\setlength{\topmargin}{0pt}
\onecolumn
\pagestyle{plain}
\raggedbottom
\fussy

\setlength{\textfloatsep}{10pt plus 2pt minus 2pt}
\setlength{\floatsep}{8pt plus 2pt minus 2pt}
\setlength{\intextsep}{8pt plus 2pt minus 2pt}
\renewcommand{\topfraction}{0.9}
\renewcommand{\bottomfraction}{0.8}
\renewcommand{\textfraction}{0.07}
\renewcommand{\floatpagefraction}{0.75}

\setcounter{page}{1}
\renewcommand{\thepage}{S\arabic{page}}
\setcounter{equation}{0}
\setcounter{figure}{0}
\setcounter{table}{0}
\setcounter{Lem}{0}
\setcounter{Con}{0}
\setcounter{Mth}{0}
\setcounter{Rem}{0}
\setcounter{Cor}{0}
\setcounter{Pro}{0}
\setcounter{assumption}{0}
\makeatletter
\renewcommand*{\theHequation}{supp.\arabic{equation}}
\renewcommand*{\theHfigure}{supp.\arabic{figure}}
\renewcommand*{\theHtable}{supp.\arabic{table}}
\providecommand*{\theHLem}{}
\providecommand*{\theHCon}{}
\providecommand*{\theHMth}{}
\providecommand*{\theHRem}{}
\providecommand*{\theHCor}{}
\providecommand*{\theHPro}{}
\providecommand*{\theHassumption}{}
\renewcommand*{\theHLem}{supp.\arabic{Lem}}
\renewcommand*{\theHCon}{supp.\arabic{Con}}
\renewcommand*{\theHMth}{supp.\arabic{Mth}}
\renewcommand*{\theHRem}{supp.\arabic{Rem}}
\renewcommand*{\theHCor}{supp.\arabic{Cor}}
\renewcommand*{\theHPro}{supp.\arabic{Pro}}
\renewcommand*{\theHassumption}{supp.\arabic{assumption}}
\makeatother

\pdfbookmark[0]{Supplementary Material}{supplementary-material}
\begin{center}
{\Large\bfseries Supplementary Material for
``Chaos Is a LADDER: Domain Generalization Beyond Invariance via Reweighting''\par}
\vspace{1em}
{\large Yuhang Jiang\textsuperscript{1,2}, Fengchuan Zhang\textsuperscript{1,2},
Sanguo Zhang\textsuperscript{1,2}, and Guojun Zhu\textsuperscript{1,2,*}\par}
\vspace{0.35em}
{\normalsize \textsuperscript{1}School of Mathematical Sciences,
University of Chinese Academy of Sciences, Beijing, China\par}
{\normalsize \textsuperscript{2}Key Laboratory of Big Data Mining and Knowledge Management,
Chinese Academy of Sciences, Beijing, China\par}
\vspace{0.25em}
{\small \textsuperscript{*}Corresponding author.\par}
\end{center}
\vspace{1em}


This supplementary document contains the experimental details, additional diagnostics, auxiliary theoretical statements, and full proofs omitted from the main paper. It is organized to support selective reading. Section~\ref{app:experimental_details} collects dataset details, complete numerical results, visual diagnostics, runtime and storage analyses, and the iWildCam worst-location discussion. Section~\ref{app:auxiliary_statements} states the full formal assumptions and auxiliary propositions and lemmas for the theoretical analysis. Section~\ref{app:proofs} gives the detailed proofs, following the order in which the statements are used in the main paper.

\appendix

\section{Additional Experimental Details and Results}\label{app:experimental_details}

This section expands the empirical part of the paper. It first clarifies how the two WILDS benchmarks instantiate domain shift, then presents the simulation, FMoW-WILDS, and iWildCam-WILDS results as dataset-specific blocks, and finally reports implementation details, computational cost, and storage cost.

\subsection{Dataset Details}
This subsection defines the datasets, metrics, information-use boundary, and target summaries used below.

FMoW-WILDS is a $62$-class satellite land-use classification benchmark. The images come from the Functional Map of the World collection, and the WILDS split induces distribution shifts across acquisition years and geographic regions. In our experiments, years define source domains for source reweighting, and regions define the group metric used for robustness evaluation. The FMoW region metadata also contains a small ``Other'' region, which is a catch-all bucket for geographically rare or miscellaneous examples. We exclude this bucket only from region-aware robustness metrics, so that worst-region and average-region scores are not dominated by a tiny heterogeneous group whose semantics do not correspond to a coherent geographic target domain.

For an evaluation split $\mathcal S$, test accuracy is the fraction of correctly classified examples. Let $\mathcal R$ be the retained FMoW regions after excluding the heterogeneous ``Other' region, and let $\operatorname{Acc}_r$ be the accuracy within region $r$. The region-aware metrics and distribution-shift drop are
\[
\operatorname{Worst}=\min_{r\in\mathcal R}\operatorname{Acc}_r,\qquad
\operatorname{Avg.}=\frac{1}{|\mathcal R|}\sum_{r\in\mathcal R}\operatorname{Acc}_r,\qquad
\operatorname{Drop}=\operatorname{Acc}_{\mathrm{ID\ val}}-\operatorname{Acc}_{\mathrm{OOD\ test}}.
\]
Thus, Worst and Avg. are respectively the minimum and unweighted mean of the retained region accuracies, while ID-to-test drop measures the decrease from ID validation accuracy to OOD test accuracy.

iWildCam-WILDS is a $182$-class wildlife species classification benchmark collected by camera traps. Domains are camera locations, and both class frequencies and location frequencies are highly long-tailed. This makes raw worst-location accuracy extremely brittle: a few target locations contain only a handful of examples, so a single error can dominate the raw worst-domain metric. Moreover, naturally tiny target locations are entangled with other effects, including unseen camera locations, severe class imbalance, and incomplete class support within a location. We therefore use filtered average-group accuracy for the main real-data robustness evaluation and reserve target-sample-size sensitivity for synthetic simulations where the target-summary sample size can be varied without changing the label support or the environment mechanism.

For iWildCam-WILDS, test accuracy is computed over the full evaluation split, macro F1 is the unweighted mean of the per-class F1 scores, and filtered average-group accuracy is
\[
\operatorname{FiltAvg}
=\frac{1}{|\mathcal G_{50}|}\sum_{g\in\mathcal G_{50}}\operatorname{Acc}_g,\qquad
\mathcal G_{50}=\{g:n_g\ge50\},
\]
where $g$ indexes target camera locations, $n_g$ is the number of evaluation examples at location $g$, and $\operatorname{Acc}_g$ is its accuracy.

\paragraph{Information-use boundary.}
At inference, unlabeled target covariates $\{\bm{x}_i^{(0)}\}_{i=1}^{n_0}$ are encoded by the fixed style encoder to form a domain-level style fingerprint $\widehat{\nu}_0$. Distances from $\widehat{\nu}_0$ to cached source fingerprints determine the source-neighbor set and source-classifier weights. For benchmark-defined domain variants, target metadata may be used only to partition unlabeled covariates before estimating separate style fingerprints. The protocol does not use target labels, target losses, gradients on target data, target-domain optimization, parameter or encoder/classifier updates, normalization-statistics updates, target-data training, or target-label-based hyperparameter selection. Target metadata used for grouping in the iWildCam location-level variant is not input to the classifier and is not treated as label information.

\paragraph{Real-data domain construction and target summaries.}
On FMoW-WILDS, year metadata defines $11$ retained source domains after source-domain filtering, and one unlabeled style summary is formed for each evaluation split without using target-domain metadata. Overall accuracy is split-level; region metadata is used only for the region-aware metrics, which exclude the heterogeneous ``Other' region.

On iWildCam-WILDS, camera-location metadata defines $64$ retained source domains. \method{}-split forms one unlabeled style summary for the whole validation or test split without using target-location metadata; location metadata is used only for evaluation. \method{}-group instead forms one unlabeled style summary per target camera location. The location id only partitions unlabeled covariates before summary estimation and is not input to the classifier. Both variants use no target labels and make no parameter, normalization-statistics, or model-state updates. Filtered average-group accuracy averages the target locations with at least $50$ examples.

\subsection{Baselines and Dataset-Specific Results}
This subsection first defines the source-classifier routing controls and then presents each experiment as a self-contained block: its complete numerical results are followed by the corresponding visual diagnostics and ablations.

\paragraph{Reference baselines.}
ERM, IRM, EQRM, and RDM are evaluated without target covariates. DSNR estimates a domain index from the unlabeled covariates in each evaluation split while keeping its trained model and normalization statistics fixed; it uses neither target labels nor target-data optimization.

\paragraph{Source-classifier routing controls.}
We use four source-classifier routing controls in the real-data tables. UniformExperts reuses the same frozen representation and source-classifier stack as \method{} but assigns equal weights,
\[
\widehat{\bm{\ell}}_{\mathrm{uni}}(\bm{x})=
\frac{1}{E}\sum_{e=1}^{E}
h_{\widehat{\bm{\psi}}_e}\!\left(f_{\widehat{\bm{\theta}}}(\bm{x})\right).
\]
NearestExpert uses the same target summary and distance geometry as \method{} but performs hard $K=1$ routing,
\[
e^*=\arg\min_e d_e,\qquad
\widehat{\bm{\ell}}_{\mathrm{near}}(\bm{x})=
h_{\widehat{\bm{\psi}}_{e^*}}\!\left(f_{\widehat{\bm{\theta}}}(\bm{x})\right).
\]
SampleGate is a BSF-style per-sample control that uses the
frozen auxiliary style-domain classifier $\rho_s$ to form
source-membership weights,
\[
\widehat{\bm{\ell}}_{\mathrm{sg}}(\bm{x})
=
\sum_{e=1}^{E}
\pi_e(\bm{x})
h_{\widehat{\bm{\psi}}_e}
\!\left(f_{\widehat{\bm{\theta}}}(\bm{x})\right),
\qquad
\bm{\pi}(\bm{x})
=
\operatorname{softmax}
\!\left(
\rho_s(g_{\widehat{\bm{\phi}}}(\bm{x}))
/T_{\mathrm{sg}}
\right).
\]
We fix $T_{\rm sg}=1$ for all datasets and seeds. On iWildCam, SampleGate-group uses the same target grouping metadata as \method{}-group but averages SampleGate weights within each target group,
\[
\bar\pi_{g,e}=\frac{1}{|\mathcal D_g|}\sum_{\bm{x}\in\mathcal D_g}\pi_e(\bm{x}),\qquad
\widehat{\bm{\ell}}_{\mathrm{sg\text{-}group}}(\bm{x})=
\sum_{e=1}^{E}\bar\pi_{g,e}
h_{\widehat{\bm{\psi}}_e}\!\left(f_{\widehat{\bm{\theta}}}(\bm{x})\right),\quad \bm{x}\in\mathcal D_g.
\]
On iWildCam, NearestExpert and \method{}-group use location-level Sinkhorn summaries, while SampleGate-group uses location-shared source-membership weights. On FMoW, NearestExpert uses the same split-level summary as \method{}. UniformExperts uses no target covariates, and SampleGate uses per-sample covariates without target grouping metadata. A separately learned domain-level gate is not included because it requires an extra pseudo-target meta-training objective and additional architecture, capacity, and optimization choices; learned target-summary-conditioned alternatives are examined in the controlled SCM.

\subsubsection{Synthetic Simulation}
\begin{table}[H]
\centering
\caption{Simulation results over $100$ repetitions (mean $\pm$ standard deviation; accuracy in percentage points).}
\label{tab:supp_simulation_results}
\scriptsize
\setlength{\tabcolsep}{2.2pt}
\renewcommand{\arraystretch}{0.78}
\resizebox{0.70\textwidth}{!}{%
\begin{tabular}{lcccccc}
	\toprule
 & \multicolumn{3}{c}{Target A: interpolation, $c=2$} & \multicolumn{3}{c}{Target B: extrapolation, $c=6$} \\
\cmidrule(lr){2-4}\cmidrule(lr){5-7}
Method & Acc. $\uparrow$ & Brier $\downarrow$ & Param. $\downarrow$
& Acc. $\uparrow$ & Brier $\downarrow$ & Param. $\downarrow$ \\
\midrule
ERM & $55.55{\pm}5.51$ & $.555{\pm}.042$ & $5.08{\pm}.69$ & $42.34{\pm}5.40$ & $.664{\pm}.040$ & $13.44{\pm}1.81$ \\
IRM & $45.86{\pm}3.53$ & $.626{\pm}.021$ & $5.34{\pm}.71$ & $38.81{\pm}2.38$ & $.667{\pm}.010$ & $13.54{\pm}1.83$ \\
EQRM & $53.52{\pm}6.10$ & $.574{\pm}.043$ & $5.19{\pm}.73$ & $41.46{\pm}5.38$ & $.658{\pm}.036$ & $13.48{\pm}1.83$ \\
DSNR & $55.24{\pm}5.50$ & $.557{\pm}.042$ & $5.07{\pm}.70$ & $42.08{\pm}5.59$ & $.667{\pm}.043$ & $13.43{\pm}1.81$ \\
RDM & $55.17{\pm}4.80$ & $.560{\pm}.037$ & $5.12{\pm}.71$ & $42.03{\pm}3.77$ & $.660{\pm}.023$ & $13.45{\pm}1.81$ \\
Pooled & $55.55{\pm}5.51$ & $.555{\pm}.042$ & $5.09{\pm}.69$ & $42.35{\pm}5.40$ & $.664{\pm}.040$ & $13.44{\pm}1.80$ \\
	\textbf{LADDER} & $\mathbf{72.25{\pm}12.89}$ & $\mathbf{.390{\pm}.151}$ & $\mathbf{4.27{\pm}.83}$ & $\mathbf{85.76{\pm}3.54}$ & $\mathbf{.239{\pm}.034}$ & $\mathbf{11.70{\pm}1.73}$ \\
Oracle & $81.57{\pm}2.86$ & $.259{\pm}.037$ & $0.00{\pm}0.00$ & $92.02{\pm}1.68$ & $.115{\pm}.023$ & $0.00{\pm}0.00$ \\
\bottomrule
\end{tabular}}
\end{table}

\begin{figure}[H]
\centering
\includegraphics[width=0.68\textwidth]{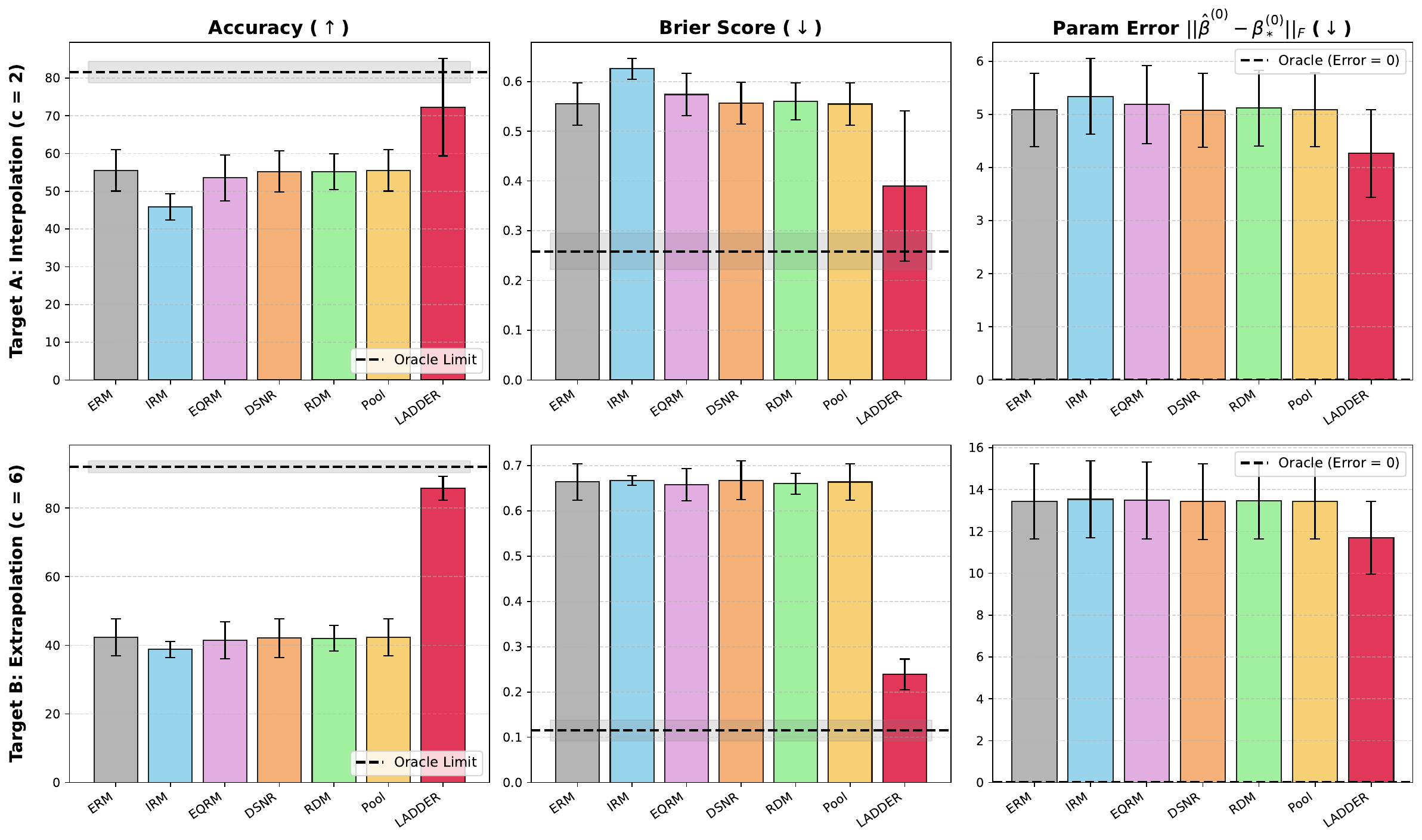}
\caption{Simulation results over $100$ independent repetitions.}
\label{fig:supp_simulation_results}
\end{figure}

Figure~\ref{fig:supp_simulation_results} reports the controlled simulation results. The top and bottom rows separate the interpolation and extrapolation targets. This controlled SCM isolates the source-reweighting mechanism rather than the sufficient identifiability conditions in Section~\ref{app:auxiliary_statements}. The style representation is $8$-dimensional, there are $9$ source domains, and each domain contains $1000$ samples; the main \method{} setting uses $K=4$ source neighbors. Two coordinates encode the environment coordinate through opposite covariance shifts, while the remaining coordinates are high-variance nuisance variation. Under this more conservative protocol, \method{} still gains substantially when the target decision rule cannot be represented by a single pooled or invariant classifier.

\begin{figure}[H]
\centering
\includegraphics[width=0.82\textwidth]{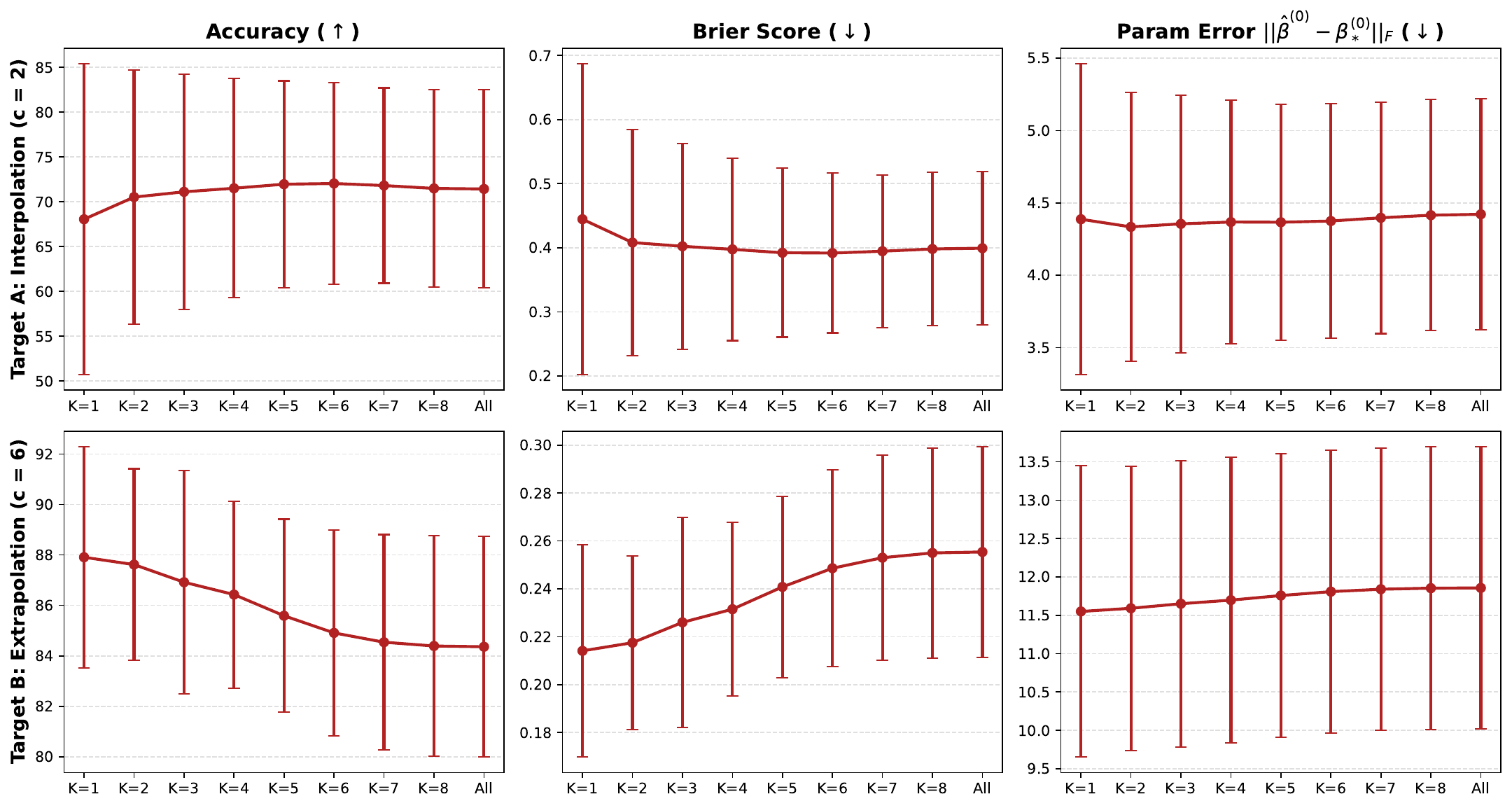}
\caption{Synthetic simulation KNN source-selection ablation over $K\in\{1,\ldots,8\}$ and ``All''. The main setting uses $K=4$, while ``All'' removes the KNN truncation and reweights all available source classifiers.}
\label{fig:supp_simulation_knn_ablation}
\end{figure}

Figure~\ref{fig:supp_simulation_knn_ablation} studies the local source-selection radius under the same covariance-shift style protocol. For the interpolation target, performance is stable across moderate neighborhoods and the ``All'' variant, while for the extrapolation target local source selection is more important because small neighborhoods focus on the boundary source domains closest to the target mechanism. We use $K=4$ as the pre-specified main setting: it lies in the stable interpolation region and preserves a local extrapolation bias without using every source domain.

\begin{figure}[H]
\centering
\includegraphics[width=0.65\textwidth]{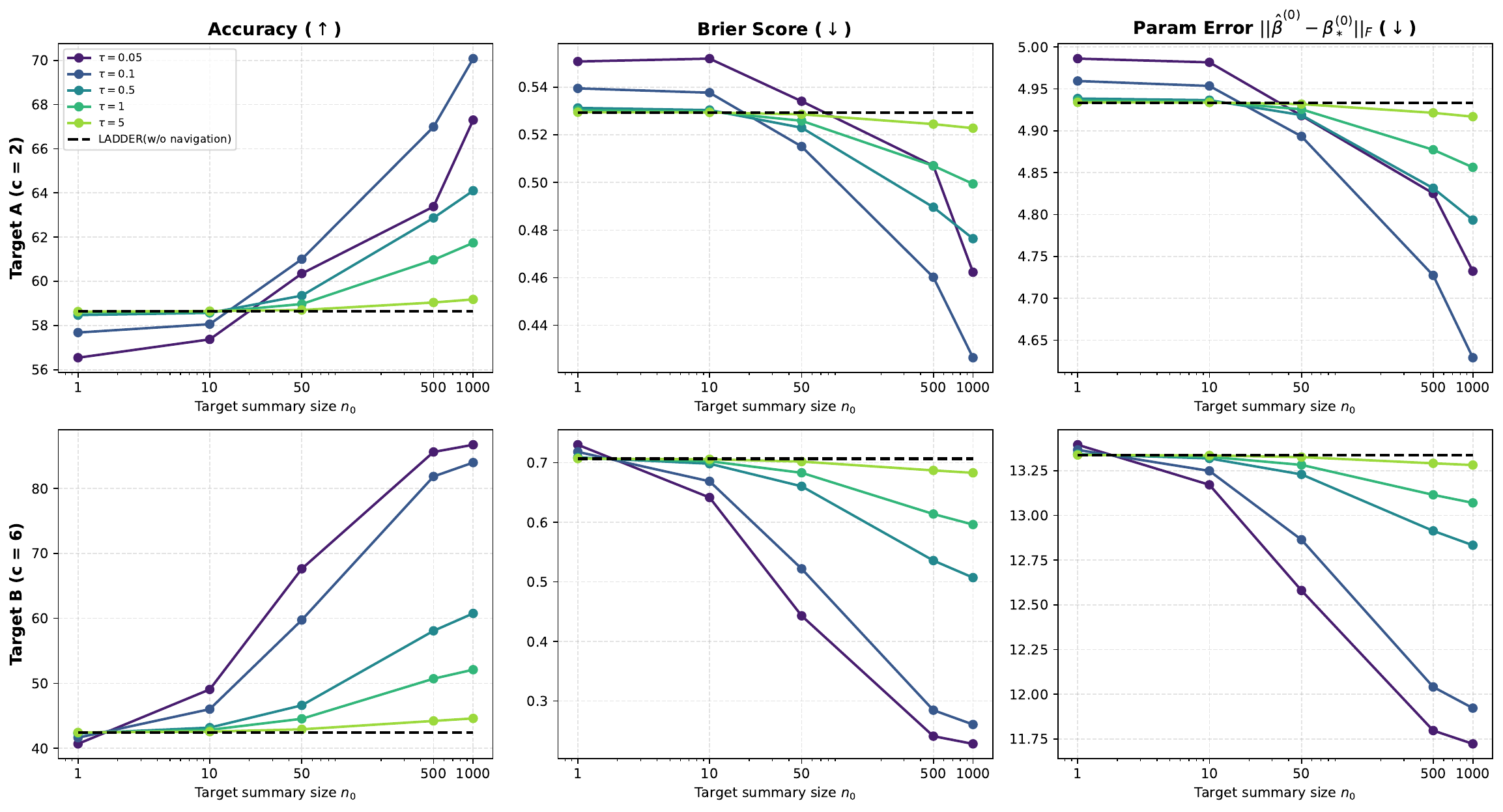}
\caption{Synthetic simulation ablation for target-summary sample size and the temperature parameter $\tau\in\{0.05,0.1,0.5,1,5\}$. The rows correspond to interpolation and extrapolation targets. The columns report accuracy, Brier score, and parameter error as the number of target-summary samples $n_0$ varies. \method{}(w/o reweighting) removes the target-adaptive distance computation and uniformly averages all source classifiers.}
\label{fig:supp_simulation_navigation_ablation}
\end{figure}

Figure~\ref{fig:supp_simulation_navigation_ablation} studies target-summary size in an idealized latent setting rather than claiming that a single real image is sufficient to estimate a target-domain distribution. We do not use naturally tiny real-data domains for this ablation because their sample size is confounded with benchmark-specific effects. In FMoW-WILDS, the small ``Other'' region is a heterogeneous metadata bucket rather than a coherent target environment. In iWildCam-WILDS, single-digit camera locations are also entangled with unseen locations, long-tailed species frequencies, and incomplete class support. Accuracy on those groups would therefore mix target-summary uncertainty with label-support and domain-coverage failures.

The controlled SCM keeps the source and target mechanisms fixed while varying only $n_0$ and $\tau$. Unlike an overly clean latent setup, the style representation here contains only two weak environment-carrying coordinates inside an $8$-dimensional style vector, with the remaining dimensions acting as high-variance nuisance variation. Consequently, low-shot target summaries are intentionally noisy: at $n_0=1$, temperature-based navigation has high variance and does not provide a reliable gain over uniform source averaging, indicating that a single target point should not be over-interpreted as a reliable style-distribution estimate.

As $n_0$ increases, the target style summary becomes reliable enough for navigation. The curves then move away from the uniform limit, and smaller temperatures recover the relevant source classifiers, especially for the extrapolation target. Thus $\tau$ should be interpreted as a temperature parameter controlling how strongly \method{} trusts empirical style-distribution distances: a large temperature such as $\tau=5$ is a conservative fallback when target summaries are tiny, whereas smaller temperatures become useful once the target distribution is estimated with enough samples.

\paragraph{Additional style-conditioned pooled baselines.}
The main simulation includes PooledJoint as an additive linear classifier on $(\bm{Z}_c,\bm{Z}_s^{(e)})$. This is the direct pooled baseline for Corollary~1, but it is not the strongest possible style-conditioned pooled predictor. To test whether the main simulation gap is caused only by this additive restriction, Table~\ref{tab:supp_style_pooled_ablation} and Figure~\ref{fig:supp_style_pooled_ablation} add three stronger pooled variants. Pooled-Interaction appends a domain-level style summary and explicit products between the style summary and $\bm{Z}_c$; Pooled-MLP uses a nonlinear pooled network on $\bm{Z}_c$, $\bm{Z}_s^{(e)}$, and the style summary; and StyleHyperNet maps the style summary to classifier parameters. All three baselines use the same source data and the same unlabeled target style summary used by \method{}.

\begin{table}[H]
\centering
\caption{Supplementary style-conditioned pooled ablation in the controlled SCM over $100$ repetitions. Accuracy is in percentage points. Pooled-MLP has no single linear target classifier, so parameter error is not reported.}
\label{tab:supp_style_pooled_ablation}
\scriptsize
\setlength{\tabcolsep}{3.0pt}
\renewcommand{\arraystretch}{0.88}
\resizebox{0.8\textwidth}{!}{\begin{tabular}{llccc}
\toprule
Target & Method & Acc. $\uparrow$ & Brier $\downarrow$ & Param. $\downarrow$ \\
\midrule
Interpolation, $c=2$ & PooledJoint & $54.84\pm4.49$ & $0.561\pm0.037$ & $4.941\pm0.699$ \\
 & Pooled-Interaction & $61.74\pm19.60$ & $0.534\pm0.288$ & $4.461\pm1.281$ \\
 & Pooled-MLP & $58.21\pm18.52$ & $0.646\pm0.331$ & \textemdash \\
 & StyleHyperNet & $64.46\pm19.60$ & $0.505\pm0.300$ & $4.340\pm1.346$ \\
 & LADDER & $72.57\pm11.01$ & $0.383\pm0.119$ & $4.086\pm0.826$ \\
 & Oracle & $80.86\pm3.10$ & $0.267\pm0.041$ & $0.000\pm0.000$ \\
\midrule
Extrapolation, $c=6$ & PooledJoint & $41.94\pm4.36$ & $0.669\pm0.034$ & $13.149\pm1.929$ \\
 & Pooled-Interaction & $85.01\pm10.43$ & $0.214\pm0.138$ & $9.836\pm2.222$ \\
 & Pooled-MLP & $78.57\pm11.01$ & $0.322\pm0.179$ & \textemdash \\
 & StyleHyperNet & $84.05\pm11.60$ & $0.230\pm0.175$ & $9.128\pm2.084$ \\
 & LADDER & $85.74\pm3.75$ & $0.239\pm0.036$ & $11.387\pm1.850$ \\
 & Oracle & $91.88\pm1.66$ & $0.117\pm0.023$ & $0.000\pm0.000$ \\
\bottomrule
\end{tabular}
}
\end{table}

\begin{figure}[H]
\centering
\includegraphics[width=0.88\textwidth]{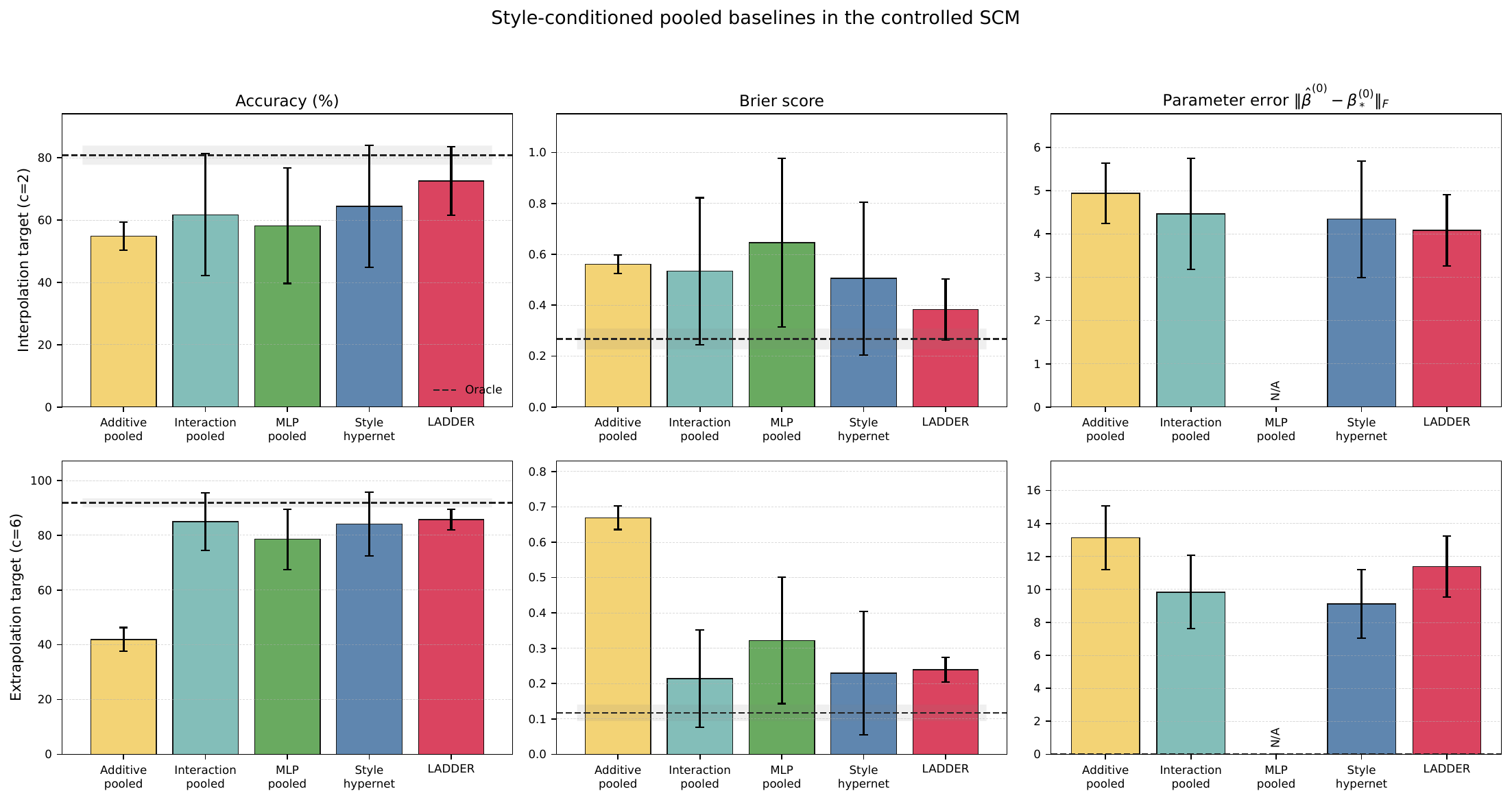}
\caption{Supplementary style-conditioned pooled ablation in the controlled SCM. Stronger pooled predictors close much of the additive PooledJoint gap, especially in extrapolation, but they also have substantially larger variance across repetitions. \method{} remains competitive in extrapolation and gives the strongest interpolation performance among non-oracle methods.}
\label{fig:supp_style_pooled_ablation}
\end{figure}

Table~\ref{tab:supp_style_pooled_ablation} and Figure~\ref{fig:supp_style_pooled_ablation} clarify the scope of the pooled-model comparison. The additive pooled baseline is indeed limited: adding explicit interactions or a style-conditioned hypernetwork greatly improves extrapolation accuracy. However, these more expressive pooled baselines are also much less stable across repetitions. For example, on the extrapolation target, Pooled-Interaction and StyleHyperNet reach accuracy comparable to \method{}, but their accuracy standard deviations are $10.43$ and $11.60$, respectively, compared with $3.75$ for \method{}. Their Brier-score variability is also substantially larger. In interpolation, \method{} gives the best non-oracle accuracy, Brier score, and parameter recovery. Thus the appropriate conclusion is not that source reweighting universally dominates every nonlinear style-conditioned predictor, but that additive pooled use of style is insufficient and that \method{} provides a stable, interpretable inference-stage reweighting mechanism that remains competitive against stronger style-conditioned pooled alternatives.

\subsubsection{FMoW-WILDS}
\begin{table}[H]
\centering
\caption{FMoW-WILDS results over five random seeds. Accuracy, worst-region accuracy, average-region accuracy, and ID-to-test accuracy drop are reported in percentage points.}
\label{tab:supp_fmow_results}
\scriptsize
\setlength{\tabcolsep}{3.0pt}
\renewcommand{\arraystretch}{0.88}
\resizebox{0.65\textwidth}{!}{%
\begin{tabular}{lcccc}
	\toprule
Method & Test $\uparrow$ & Worst $\uparrow$ & Avg. $\uparrow$ & Drop $\downarrow$ \\
\midrule
ERM & $46.68 \pm 0.86$ & $30.68 \pm 1.72$ & $45.49 \pm 1.16$ & $9.14 \pm 0.69$ \\
IRM & $46.68 \pm 0.49$ & $29.06 \pm 1.60$ & $45.39 \pm 0.66$ & $8.88 \pm 0.65$ \\
EQRM & $46.85 \pm 0.49$ & $28.89 \pm 1.44$ & $45.15 \pm 0.49$ & $9.09 \pm 0.63$ \\
DSNR & $45.52 \pm 0.78$ & $27.03 \pm 1.69$ & $43.93 \pm 0.78$ & $9.78 \pm 0.55$ \\
RDM & $47.15 \pm 1.07$ & $29.56 \pm 1.50$ & $45.22 \pm 1.22$ & $9.27 \pm 0.62$ \\
UniformExperts & $49.65 \pm 0.29$ & $30.46 \pm 1.11$ & $48.55 \pm 0.40$ & $\textbf{8.69} \pm \textbf{0.20}$ \\
NearestExpert & $49.12 \pm 0.48$ & $30.71 \pm 0.86$ & $48.08 \pm 0.58$ & $8.70 \pm 0.64$ \\
SampleGate & $49.98 \pm 0.27$ & $30.67 \pm 1.06$ & $48.67 \pm 0.52$ & $9.01 \pm 0.22$ \\
	\textbf{LADDER} & $\textbf{50.19} \pm \textbf{0.30}$ & $\textbf{31.46} \pm \textbf{0.66}$ & $\textbf{49.05} \pm \textbf{0.48}$ & $8.87 \pm 0.47$ \\
\bottomrule
\end{tabular}}
\end{table}

\begin{figure}[H]
\centering
\includegraphics[width=0.74\textwidth]{Figures/ladder/fmow_accuracy_and_worst_group.pdf}
\caption{FMoW-WILDS accuracy and worst-region accuracy on the ID validation, OOD validation, and OOD test splits.}
\label{fig:supp_fmow_accuracy}
\end{figure}

Figures~\ref{fig:supp_fmow_accuracy} and \ref{fig:supp_fmow_tradeoff_weights} provide the FMoW-WILDS diagnostic view. They report both aggregate accuracy and region-aware robustness, together with the source-classifier weights used by \method{} for the representative target split. Figure~\ref{fig:supp_fmow_knn_ablation} further isolates the effect of the KNN source-selection radius.

\begin{figure}[H]
\centering
\begin{tabular}{ccc}
\includegraphics[width=0.28\textwidth]{Figures/ladder/fmow_test_acc_avg_tradeoff.pdf} &
\includegraphics[width=0.28\textwidth]{Figures/ladder/fmow_ood_accuracy_drop.pdf} &
\includegraphics[width=0.28\textwidth]{Figures/ladder/ladder_source_weights_seed_0_test.pdf}
\end{tabular}
\caption{FMoW-WILDS accuracy/average-region trade-off, OOD degradation, and representative \method{} source-classifier weights.}
\label{fig:supp_fmow_tradeoff_weights}
\end{figure}

\begin{figure}[H]
\centering
\includegraphics[width=0.70\textwidth]{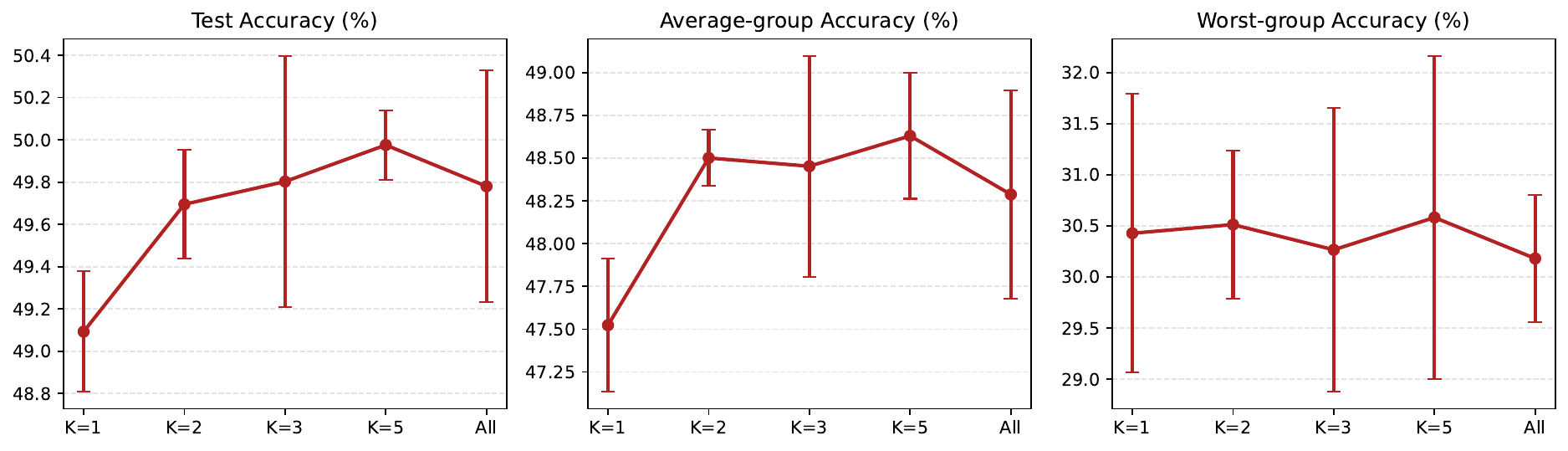}
\caption{FMoW-WILDS KNN source-selection ablation over five random seeds. The main setting uses $K=5$, while ``All'' removes the KNN truncation and reweights all retained source domains. The sweep includes $K\in\{1,2,3,5\}$ and shows that local source selection is stable across nearby neighborhood sizes.}
\label{fig:supp_fmow_knn_ablation}
\end{figure}

Figure~\ref{fig:supp_fmow_knn_ablation} supports the inference-stage reweighting mechanism used by \method{}; its $K=1$ endpoint corresponds to hard nearest-source classifier routing, while the navigation ablation in simulation includes the uniform source-classifier average. Moving from the very local $K=1$ setting to moderate neighborhoods improves both test accuracy and average-region accuracy, and the pre-specified main setting $K=5$ gives the strongest mean performance. Removing the neighborhood restriction remains competitive but slightly dilutes the nearest source classifiers, especially on the region-aware metrics.

\begin{figure}[H]
\centering
\includegraphics[width=0.65\textwidth]{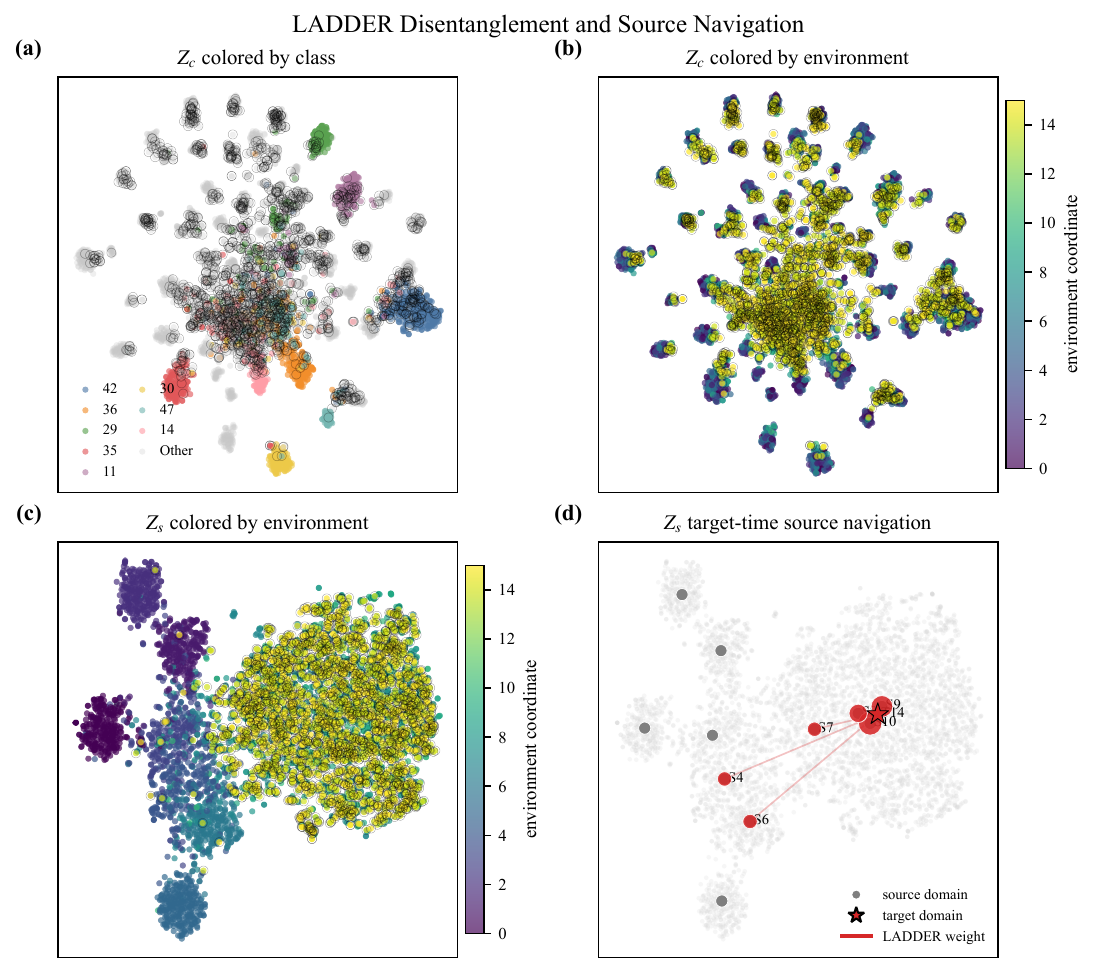}
\caption{Full FMoW-WILDS t-SNE visualization. Panels (a,b) show the causal representation $\bm{Z}_c$ colored by class and domain, respectively; panels (c,d) show the style representation $\bm{Z}_s^{(e)}$ colored by domain and used for source reweighting.}
\label{fig:supp_fmow_tsne_full}
\end{figure}

Figure~\ref{fig:supp_fmow_tsne_full} visualizes why the style representation is useful for reweighting rather than direct prediction. The style representation $\bm{Z}_s^{(e)}$ preserves domain geometry, while the causal representation $\bm{Z}_c$ remains the input used by the source-specific classifiers.

\begin{figure}[h]
\centering
\includegraphics[width=0.88\textwidth]{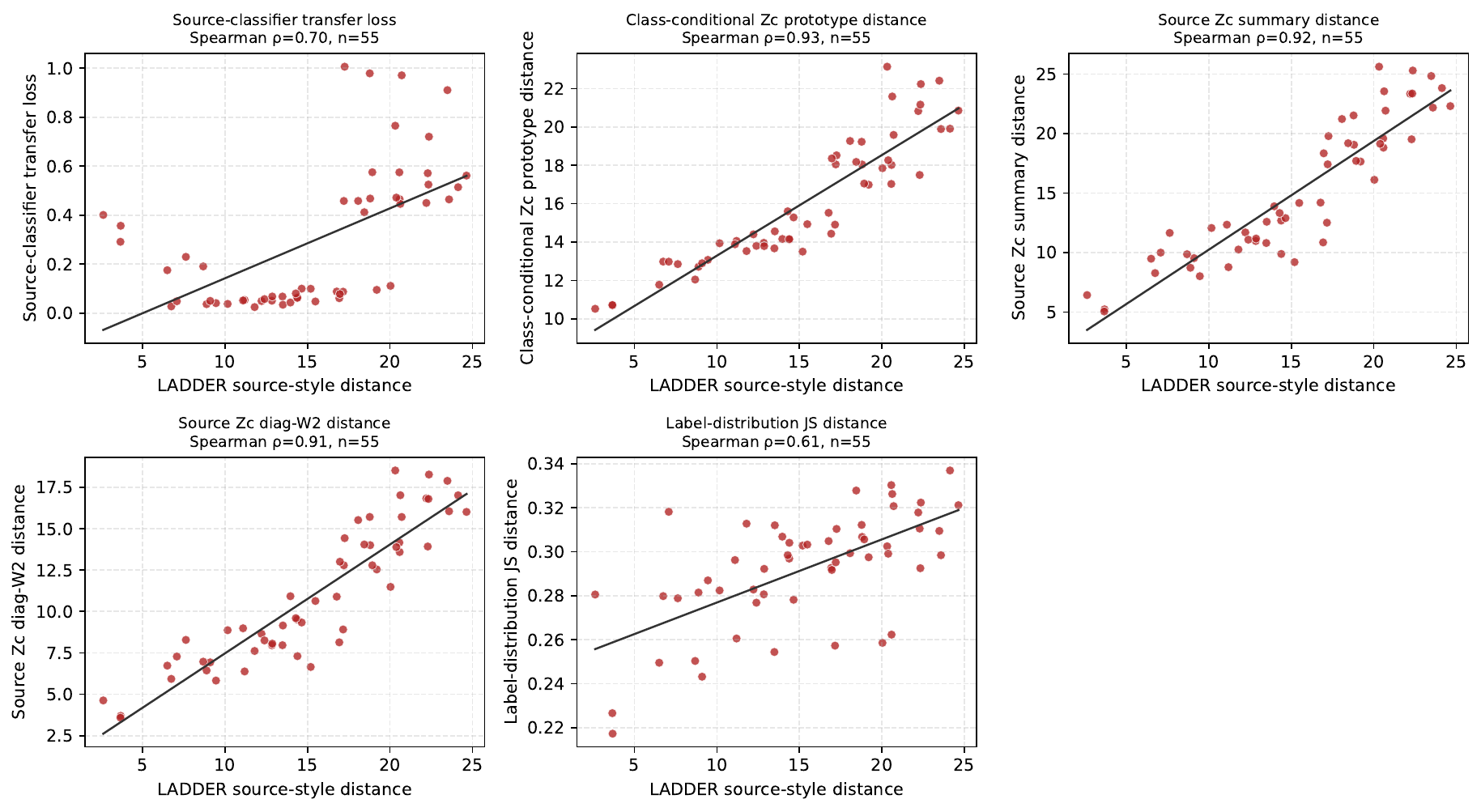}
\caption{FMoW-WILDS source-geometry diagnostic over all $55$ source-domain pairs. The horizontal axis is the \method{} source style distance computed from learned source style distributions. The vertical axes report bidirectional source-classifier transfer loss, class-conditional $\bm{Z}_c$ prototype distance, two $\bm{Z}_c$ summary distances, and label-distribution distance.}
\label{fig:supp_fmow_source_geometry}
\end{figure}

Figure~\ref{fig:supp_fmow_source_geometry} tests whether the learned style distribution is only a domain identifier or also a quantitative signal for classifier reweighting. The first panel, also shown as Figure~6(c) in the main text, relates source-style distance to the symmetric transfer loss $d^{\mathrm{tr}}_{e,e'}=\frac{1}{2}(L_{e\rightarrow e'}+L_{e'\rightarrow e})$, where $L_{e\rightarrow e'}$ is the cross-entropy of the classifier trained on source $e$ when evaluated on source-domain $e'$ examples. The representative visualization uses seed $0$, selected independently of the observed correlation value. Across all $55$ unordered retained source-domain pairs, \method{}'s style-summary distance correlates with bidirectional source-classifier transfer loss (Spearman $\rho=0.70$) and class-conditional $\bm Z_c$ prototype distance ($\rho=0.93$). The pairwise correlations in Figure~\ref{fig:supp_fmow_source_geometry} are descriptive because domain pairs share source domains and should not be read as independent-pair significance tests. Thus, source domains that are closer in the learned style distribution tend to have lower cross-domain classifier transfer loss, supporting style-summary distance as a proxy for classifier transferability during inference-stage reweighting, without target labels, parameter updates, or model-state updates.

A source-only probe diagnostic gives a complementary view. Probe features are seed-controlled subsamples with at most $4096$ examples per source year, standardized from the probe training split, and split into an $80/20$ source-only train/held-out split. Each probe is a linear Softmax head trained for $50$ epochs with AdamW (learning rate $10^{-2}$, weight decay $10^{-4}$, batch size $1024$). On held-out FMoW source examples, a linear probe predicts the year domain more accurately from $\bm{Z}_s^{(e)}$ than from $\bm Z_c$ ($63.6\pm0.4\%$ vs. $56.2\pm0.7\%$), while label probes remain high for both branches ($\bm Z_c$: $96.5\pm0.3\%$; $\bm{Z}_s^{(e)}$: $95.9\pm0.3\%$). We therefore do not interpret $\bm{Z}_s^{(e)}$ as a label-free nuisance. To control for class composition, we also train class-conditional domain probes on classes represented in at least $3$ source years with at least $20$ sampled examples per class-year pair ($46.8\pm0.8$ eligible classes; majority baseline $18.8\pm0.5\%$). Conditioned on class, $\bm{Z}_s^{(e)}$ still predicts the source year better than $\bm Z_c$ ($56.8\pm0.6\%$ vs. $41.3\pm1.0\%$), a $15.5\pm1.4$ point gap. These diagnostics support the operational claim that $\bm{Z}_s^{(e)}$ is useful for domain-level routing, but they do not imply statistical independence from labels.

\begingroup
\setlength{\intextsep}{4pt plus 1pt minus 1pt}
\setlength{\abovecaptionskip}{5pt}
\subsubsection{iWildCam-WILDS}
\begin{table}[H]
\centering
\caption{iWildCam-WILDS results over five random seeds. Test accuracy, macro F1, and filtered average-group accuracy are reported in percentage points.}
\label{tab:supp_iwildcam_results}
\scriptsize
\setlength{\tabcolsep}{3.0pt}
\renewcommand{\arraystretch}{0.88}
\resizebox{0.55\textwidth}{!}{%
\begin{tabular}{lccc}
	\toprule
Method & Test $\uparrow$ & Macro F1 $\uparrow$ & Filt. Avg. $\uparrow$ \\
\midrule
ERM & $42.26 \pm 5.48$ & $11.23 \pm 1.35$ & $33.29 \pm 7.61$ \\
IRM & $34.59 \pm 10.11$ & $6.20 \pm 1.08$ & $23.42 \pm 9.71$ \\
EQRM & $40.75 \pm 6.62$ & $10.45 \pm 1.37$ & $32.22 \pm 8.07$ \\
DSNR & $47.36 \pm 7.96$ & $12.34 \pm 1.12$ & $39.44 \pm 2.07$ \\
RDM & $39.36 \pm 7.67$ & $8.97 \pm 1.78$ & $31.09 \pm 5.37$ \\
UniformExperts & $44.42 \pm 1.03$ & $4.53 \pm 0.25$ & $34.13 \pm 0.78$ \\
NearestExpert & $57.96 \pm 2.73$ & $11.62 \pm 0.90$ & $48.41 \pm 0.44$ \\
SampleGate & $58.00 \pm 3.05$ & $\textbf{14.40} \pm \textbf{0.70}$ & $48.16 \pm 1.23$ \\
SampleGate-group & $59.22 \pm 3.10$ & $12.00 \pm 0.86$ & $48.29 \pm 1.98$ \\
\method{}-split & $41.19 \pm 3.76$ & $2.42 \pm 0.55$ & $36.22 \pm 1.80$ \\
	\textbf{\method{}-group} & $\textbf{61.71} \pm \textbf{1.76}$ & $11.61 \pm 0.65$ & $\textbf{50.01} \pm \textbf{0.84}$ \\
\bottomrule
\end{tabular}}
\end{table}

\begin{figure}[H]
\centering
\includegraphics[width=0.72\textwidth]{Figures/ladder/iwildcam_accuracy_filtered_group.pdf}
\caption{iWildCam-WILDS test accuracy and filtered average-group accuracy.}
\label{fig:supp_iwildcam_accuracy}
\end{figure}

Figures~\ref{fig:supp_iwildcam_accuracy}--\ref{fig:supp_iwildcam_knn_ablation} study split-level versus location-level target summaries and KNN sensitivity. Because the test split contains many camera locations, a single split-level target summary can blur distinct locations; the group-level version instead computes target summaries at the location level.

\begin{figure}[H]
\centering
\begin{tabular}{cc}
\includegraphics[width=0.36\textwidth]{Figures/ladder/iwildcam_acc_filtered_group_tradeoff.pdf} &
\includegraphics[width=0.36\textwidth]{Figures/ladder/iwildcam_ladder_split_vs_group_ablation.pdf}
\end{tabular}
\caption{iWildCam-WILDS accuracy/robustness trade-off and \method{} target-summary ablation.}
\label{fig:supp_iwildcam_tradeoff_ablation}
\end{figure}
\endgroup

\begin{figure}[H]
\centering
\includegraphics[width=0.84\textwidth]{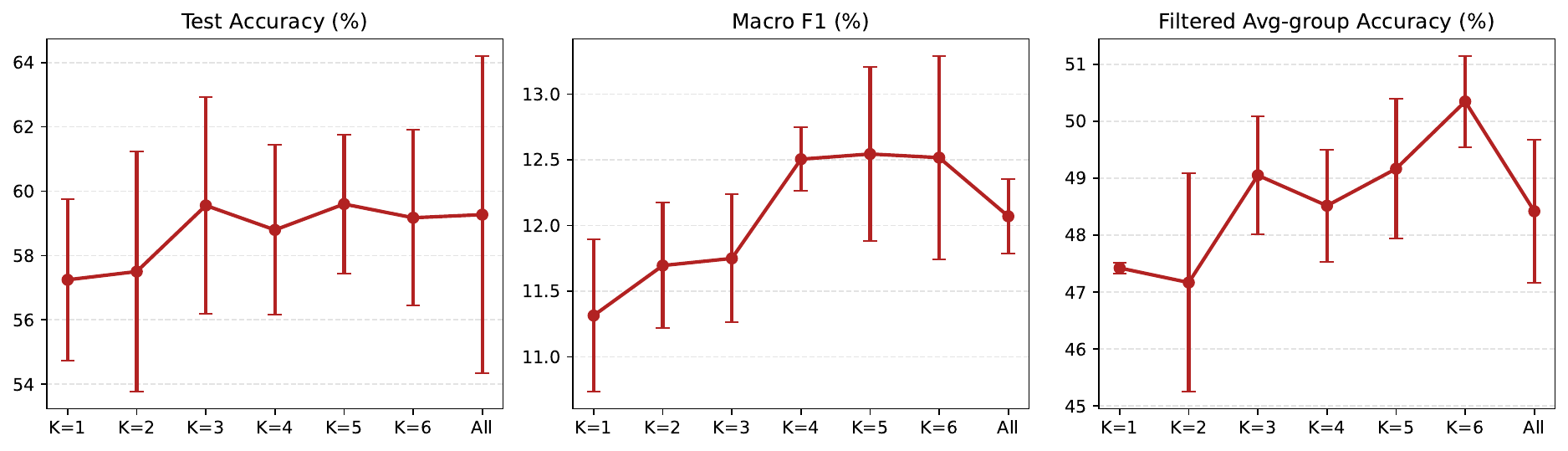}
\caption{iWildCam-WILDS KNN source-selection ablation over three random seeds. The main setting fixes $K=6$, while ``All'' removes the KNN truncation and reweights all retained source domains. Across $K\in\{1,2,3,4,5,6\}$, $K=5$ gives the highest test accuracy and macro F1, whereas $K=6$ gives the strongest filtered average-group accuracy, matching the robustness-oriented protocol used in the main experiments.}
\label{fig:supp_iwildcam_knn_ablation}
\end{figure}

Figure~\ref{fig:supp_iwildcam_knn_ablation} should be interpreted as a sensitivity analysis rather than a test-set search for the globally best neighborhood size. We fix $K=6$ as the main protocol before reporting final test results, and the ablation shows that local source selection remains competitive across nearby choices while avoiding the robustness degradation caused by using all source classifiers.

\paragraph{iWildCam worst-location diagnostics.}
This subsection explains why the main text does not emphasize raw worst-location accuracy on iWildCam-WILDS. The target locations are extremely long-tailed, so the minimum over locations is often determined by locations with very few examples. We therefore report the five-seed diagnostic table below to make this choice explicit.

\begin{table}[H]
\centering
\caption{iWildCam-WILDS worst-location diagnostics over five random seeds. Raw worst-location accuracy takes the minimum over all target locations; filtered worst-location accuracy and zero-location counts use locations with at least $50$ samples.}
\label{tab:supp_iwildcam_worst}
\scriptsize
\setlength{\tabcolsep}{3.5pt}
\renewcommand{\arraystretch}{0.88}
\resizebox{0.72\textwidth}{!}{%
\begin{tabular}{lccc}
	\toprule
Method & Raw worst $\uparrow$ & Filtered worst $\uparrow$ & Zero filt. locs. $\downarrow$ \\
\midrule
ERM & $0.00 \pm 0.00$ & $0.20 \pm 0.44$ & $1.4 \pm 1.5$ \\
IRM & $0.00 \pm 0.00$ & $0.00 \pm 0.00$ & $4.0 \pm 5.0$ \\
EQRM & $0.00 \pm 0.00$ & $0.98 \pm 1.39$ & $0.6 \pm 0.5$ \\
DSNR & $0.00 \pm 0.00$ & $1.96 \pm 1.55$ & $0.4 \pm 0.9$ \\
RDM & $0.00 \pm 0.00$ & $0.24 \pm 0.54$ & $1.4 \pm 0.9$ \\
UniformExperts & $0.00 \pm 0.00$ & $0.00 \pm 0.00$ & $2.8 \pm 0.8$ \\
NearestExpert & $0.00 \pm 0.00$ & $0.00 \pm 0.00$ & $1.0 \pm 0.0$ \\
SampleGate & $0.00 \pm 0.00$ & $1.96 \pm 1.83$ & $0.2 \pm 0.4$ \\
SampleGate-group & $0.00 \pm 0.00$ & $0.00 \pm 0.00$ & $1.8 \pm 1.3$ \\
\method{}-split & $0.00 \pm 0.00$ & $0.00 \pm 0.00$ & $5.2 \pm 0.4$ \\
\method{}-group & $0.00 \pm 0.00$ & $0.95 \pm 1.70$ & $0.6 \pm 0.5$ \\
\bottomrule
\end{tabular}}
\end{table}

Table~\ref{tab:supp_iwildcam_worst} explains why the main paper emphasizes filtered average-group accuracy rather than worst-location accuracy. Even after filtering locations with fewer than $50$ examples, the worst-location metric remains close to zero for almost all methods, while the average over filtered locations provides a more stable view of target-location robustness. At the $50$-sample threshold, the test split retains $35$ locations and $42{,}477$ examples. As a threshold sensitivity check for \method{}-group, using thresholds $m_g\in\{25,50,100,200\}$ gives filtered average-group accuracies of $51.49\%$, $50.01\%$, $47.82\%$, and $51.57\%$, respectively, so the main conclusion is not tied to the exact cutoff.

\subsubsection{Paired Routing Comparisons}
\begin{table}[H]
\centering
\caption{Paired-seed differences for protocol-matched routing controls. Each entry is the first method minus the second in percentage points; intervals are descriptive bootstrap percentile $95\%$ CIs over the five matched seeds. Positive values favor the first method except for ID-to-test drop, where negative values are better.}
\label{tab:supp_paired_routing}
\scriptsize
\setlength{\tabcolsep}{4pt}
\renewcommand{\arraystretch}{0.88}
\resizebox{0.72\textwidth}{!}{%
\begin{tabular}{lllr}
	\toprule
Dataset & Comparison & Metric & Mean $\Delta$ [95\% CI] \\
\midrule
FMoW & \method{} -- SampleGate & Test & $+0.21\ [-0.21,0.55]$ \\
FMoW & \method{} -- SampleGate & Worst region & $+0.79\ [-0.07,1.85]$ \\
FMoW & \method{} -- SampleGate & Average region & $+0.39\ [-0.29,0.91]$ \\
FMoW & \method{} -- SampleGate & ID-to-test drop & $-0.14\ [-0.69,0.41]$ \\
\midrule
iWildCam & \method{}-group -- SampleGate-group & Test & $+2.50\ [0.50,4.49]$ \\
iWildCam & \method{}-group -- SampleGate-group & Macro F1 & $-0.38\ [-1.69,0.54]$ \\
iWildCam & \method{}-group -- SampleGate-group & Filtered average & $+1.72\ [-0.16,4.03]$ \\
\bottomrule
\end{tabular}}
\end{table}

\subsection{Implementation Details and Hyperparameters}\label{app:implementation_details}
The theoretical analysis uses $\mathcal{W}_1$ on style distributions as a clean population metric. In the real-data experiments, we approximate the empirical $W_1$ geometry using the balanced debiased Sinkhorn divergence with Euclidean ground cost $c(\bm z,\bm z')=\|\bm z-\bm z'\|_2$. For each source domain and each target summary unit, we encode at most $m=1024$ covariates with the fixed style encoder using seed-controlled uniform random subsampling. Style samples are normalized by source-training statistics,
\[
\widetilde{\bm z}_s=(\bm z_s-\widehat\mu_{\rm src})/(\widehat\sigma_{\rm src}+10^{-6}),
\]
where $\widehat\mu_{\rm src}$ and $\widehat\sigma_{\rm src}$ are computed only from cached source-training style samples and then applied unchanged to target covariates. The implementation uses GeomLoss
\[
\begin{gathered}
	exttt{SamplesLoss(loss="sinkhorn", p=1, blur=0.05, scaling=0.8,}\\
	exttt{debias=True, backend="tensorized")}
\end{gathered}
\]
We use balanced OT because each fingerprint is a probability measure, debiasing to remove entropic self-bias, and $p=1$ so the ground cost matches the $W_1$ geometry. The tensorized backend is used for the reported runs with $m\le1024$; the online KeOps backend is supported on compatible systems for lower memory use. The displayed theory is for exact empirical $W_1$ and does not include entropic regularization or numerical solver error. The simulation instead uses a separable empirical quantile proxy, summing coordinate-wise one-dimensional $W_2^2$ estimates on a grid of at most $256$ quantiles, followed by the normalization used in the released script. In all cases, the distance estimator is fixed before target evaluation and is computed only from unlabeled target covariates.

\begin{table}[H]
\centering
\caption{Main implementation settings used for the reported runs. KNN ablations in the supplement vary only $K$ while keeping the remaining training protocol fixed.}
\label{tab:supp_impl_hparams}
\scriptsize
\setlength{\tabcolsep}{3pt}
\renewcommand{\arraystretch}{0.92}
\begin{tabular}{p{0.18\textwidth}p{0.21\textwidth}p{0.25\textwidth}p{0.25\textwidth}}
	\toprule
Setting & Simulation & FMoW-WILDS & iWildCam-WILDS \\
\midrule
Backbone & Linear latent SCM & ImageNet ResNet-50 & ImageNet ResNet-50 \\
Optimizer / scheduler & Adam / none & AdamW / none & AdamW / none \\
Learning rates & $10^{-2}$ & encoder $10^{-4}$, heads $10^{-3}$ & encoder $10^{-4}$, heads $10^{-3}$ \\
Weight decay & $10^{-2}$ & $10^{-4}$ & $10^{-4}$ \\
Epochs / steps & $150$ epochs & $25$ rep. epochs, $8$ head epochs, $800$ steps/epoch & $20$ rep. epochs, $6$ head epochs, $700$ steps/epoch \\
Batching & $1000$ samples/domain & batch $32$, eval batch $64$, $4$ domains/step & batch $32$, eval batch $64$, $4$ domains/step \\
Dimensions & $d_c=10,d_s=8$ & $d_c=512,d_s=128$ & $d_c=512,d_s=128$ \\
Loss weights & -- & $(\lambda_s,\lambda_a,\lambda_o,\lambda_r)=(0.15,0.03,5\!\times\!10^{-4},10^{-5})$ & same as FMoW \\
Source classifiers & source GLMs & frozen-encoder linear classifiers, CE & frozen-encoder linear classifiers, class-balanced CE \\
Source filtering & $E=9$ fixed & year domains, min size $512$ & largest $64$ locations, min size $512$ \\
Main $(K,\tau)$ & $(4,0.1)$ & $(5,0.5)$ & $(6,1.0)$ \\
Target summary & target latent samples & split-level style summary & location-level style summary \\
Style distance & coord. quantile $W_2^2$ proxy & debiased Sinkhorn, $p=1$, $m\le1024$ & debiased Sinkhorn, $p=1$, $m\le1024$ \\
\bottomrule
\end{tabular}
\end{table}

The reported implementation uses $\lambda_a$ both as the outer adversarial-loss weight and inside the gradient-reversal layer, so the encoder-side reversed gradient is scaled by $\lambda_a^2=9\times10^{-4}$ for $\lambda_a=0.03$; this fixed protocol is used across all reported seeds and datasets. For iWildCam, class-balanced CE uses effective-number class weight $a_y=(1-\beta_{\rm cb})/(1-\beta_{\rm cb}^{n_y})$ with $\beta_{\rm cb}=0.999$, multiplying the ordinary CE for class $y$ \citep{supp-cui2019classbalanced}.

The main values of $K$, $\tau$, and the Sinkhorn blur are run-level protocol hyperparameters recorded in the run configurations and frozen before final OOD test evaluation. We use $(K,\tau)=(4,0.1)$ in simulation, $(5,0.5)$ on FMoW, and $(6,1.0)$ on iWildCam. These values are not selected from the reported OOD test curves. Simulation uses the pre-specified source-neighborhood setting $K=4$ and reports $\tau\in\{0.05,0.1,0.5,1,5\}$ only as a controlled sensitivity analysis over target-summary reliability. For real data, the reported KNN sensitivity runs vary $K$ with the remaining routing protocol fixed: FMoW uses $K\in\{1,2,3,5,\mathrm{All}\}$ with $\tau=0.5$, and iWildCam uses $K\in\{1,2,3,4,5,6,\mathrm{All}\}$ with $\tau=1.0$. The real-data values were fixed before inspecting either OOD validation or test labels, based on the stated neighborhood-size and target-summary reliability considerations. The FMoW setting uses a moderately local split-level summary, whereas iWildCam uses a larger location-level neighborhood and a softer temperature because target locations vary substantially in sample size and class support. The real-data Sinkhorn blur is fixed globally to $0.05$ after source-only style normalization. No OOD validation or test labels are used for model or hyperparameter selection.

\subsection{Runtime and Storage}
This subsection reports the computational overhead of \method{} relative to the shared-backbone baselines. We separate training-stage precomputation from inference-stage routing. The training stage includes representation learning, frozen-encoder source-classifier fitting, and source style-fingerprint construction/cache. The inference stage encodes the unlabeled target covariates into a target style summary, computes source-summary distances, applies KNN and softmax weighting, and evaluates the selected fixed source classifiers. The storage table quantifies the persistent memory needed for inference-stage reweighting.

\begin{table}[H]
\centering
\caption{Wall-clock runtime in minutes per seed from available runtime logs. Ordinary baseline rows use the retained original runtime logs; source-classifier routing rows use the Sinkhorn reruns over five seeds.}
\label{tab:supp_runtime}
\scriptsize
\setlength{\tabcolsep}{4pt}
\renewcommand{\arraystretch}{0.9}
\begin{tabular}{lcc}
	\toprule
Method & FMoW-WILDS & iWildCam-WILDS \\
\midrule
ERM & $59.0 \pm 1.4$ & $54.7 \pm 1.2$ \\
IRM & $60.7 \pm 1.2$ & $57.0 \pm 1.1$ \\
EQRM & $58.3 \pm 1.0$ & $54.1 \pm 0.7$ \\
DSNR & $60.6 \pm 1.3$ & $56.9 \pm 1.1$ \\
RDM & $60.6 \pm 1.1$ & $57.8 \pm 3.8$ \\
UniformExperts & $85.2 \pm 1.4$ & $102.4 \pm 5.4$ \\
NearestExpert & $85.2 \pm 1.0$ & $104.2 \pm 3.3$ \\
SampleGate & $85.0 \pm 1.0$ & $101.3 \pm 3.1$ \\
SampleGate-group & -- & $101.2 \pm 3.1$ \\
\method{} & $85.2 \pm 1.0$ & -- \\
\method{}-split & -- & $115.5 \pm 5.1$ \\
\method{}-group & -- & $104.1 \pm 3.2$ \\
\bottomrule
\end{tabular}
\end{table}

Table~\ref{tab:supp_runtime} reports the total wall-clock times. The additional training-stage cost of \method{} mainly comes from fitting source-specific classifiers after freezing the encoders, followed by source style-fingerprint construction/cache. The true inference-stage routing cost consists of target-summary encoding, Sinkhorn source-summary distance computation, KNN/softmax weighting, and selected fixed-classifier evaluation; these components are included in the target-evaluation time. For FMoW-WILDS, the Sinkhorn rerun decomposes the \method{} runtime into $57.3$ minutes for representation learning, $24.2$ minutes for frozen-encoder source-classifier fitting, $0.6$ minutes for source style summaries, and $3.1$ minutes for evaluating the fixed routed model on the evaluation splits. For iWildCam-WILDS group-level reweighting, the corresponding values are $49.8$, $36.7$, $4.6$, and $13.1$ minutes; SampleGate-group uses the same training-stage cache and takes $10.2$ minutes for target evaluation.

\begin{table}[H]
\centering
\caption{Approximate additional persistent storage of \method{} beyond a shared ResNet-50 backbone, assuming float32 storage, linear source classifiers, $d_c=512$, $d_s=128$, and cached style samples with $m=1024$ per source domain.}
\label{tab:supp_storage}
\scriptsize
\setlength{\tabcolsep}{4pt}
\renewcommand{\arraystretch}{0.9}
\begin{tabular}{lccc}
	\toprule
Dataset & Source classifiers & Style fingerprints & Total extra storage \\
\midrule
FMoW-WILDS ($E=11,C=62$) & $1.33$ MB & $5.50$ MB & $6.83$ MB \\
iWildCam-WILDS ($E=64,C=182$) & $22.75$ MB & $32.00$ MB & $54.75$ MB \\
\bottomrule
\end{tabular}
\end{table}

In general, if source classifiers are linear heads and each source domain caches $m$ style samples, the extra persistent storage is
$4E C d_c+4E m d_s$ bytes. The target-domain style summary is a temporary prediction buffer and is not part of persistent model storage. Source covariates are not stored after the cached style samples are constructed.

\section{Theoretical Assumptions and Auxiliary Statements}\label{app:auxiliary_statements}

This section records the formal assumptions, auxiliary statements, and diagnostic corollaries used in the theoretical analysis. The first subsection expands the compact assumptions from the main paper; the next subsection collects the proposition and lemmas used in the proofs; the final subsection states the comparison corollaries.
\subsection{Full Formal Assumptions and Conditions}\label{app:full_assumptions}
The main paper states compact versions of the assumptions to keep the theoretical presentation readable. We restate the full formal assumptions and conditions used by the proofs below. Throughout this section, $n_{\min}:=\min_{1\le e\le E}n_e$.

\begin{assumption}[Full oracle latent structure]\label{app:ass:linear_full}
For each domain $e$, the centered observation follows
\[
\bm{X}^{(e)}=\bm{A}_c\bm{Z}_c^{(e)}+\bm{A}_s\bm{Z}_s^{(e)}+\bm{\epsilon}^{(e)},
\]
where $\bm{A}=[\bm{A}_c,\bm{A}_s]$ has full column rank. The causal and style components satisfy
\[
\mathrm{Cov}(\bm{Z}_c^{(e)},\bm{Z}_s^{(e)})=\bm{0},\quad
\mathrm{Cov}(\bm{Z}_c^{(e)})=\bm{\Sigma}_c,\quad
\mathrm{Cov}(\bm{Z}_s^{(e)})=\bm{\Lambda}_e.
\]
Here $\bm{\Sigma}_c$ is domain-invariant and $\bm{\Lambda}_e=\mathrm{diag}(\lambda_{e1},\ldots,\lambda_{ed_s})$ is diagonal and domain-dependent. The noise satisfies $\mathbb{E}\bm{\epsilon}^{(e)}=\bm{0}$, is independent of the latents and labels, and has domain-invariant covariance $\mathrm{Cov}(\bm{\epsilon}^{(e)})=\bm{\Sigma}_{\epsilon}$.
Assume $\bm{\Lambda}_1\succ0$. For each style coordinate $m$, define its normalized cross-domain variance profile
\[
\bm{\rho}_m=(\lambda_{2m}/\lambda_{1m},\ldots,\lambda_{Em}/\lambda_{1m}).
\]
The profiles are jointly separating: $\bm{\rho}_m\ne\bm{\rho}_{m'}$ for all $m\ne m'$.
\end{assumption}
\begin{Rem}[Repeated variance profiles]
If several style coordinates share the same normalized variance profile, they are not separately identifiable from second-order domain variation. The corresponding joint eigenspace is identifiable, but coordinates within that eigenspace are recoverable only up to an orthogonal transformation after whitening. Pairwise profile separation is the special case in which every block is one-dimensional.
\end{Rem}

\begin{Con}[Sufficient condition for leakage removal]\label{app:cond:mean_separation}
For leakage removal, the candidate extractor $\bm{W}_s$ additionally satisfies label-invariant conditional means for the extracted style representation, while the true style representation has label-invariant conditional means within each source domain:
$\mathbb{E}[\bm{Z}_s^{(e)}\mid Y^{(e)}=y]=\bm{m}_s^{(e)}$ and
$\mathbb{E}[\widehat{\bm{Z}}_s^{(e)}\mid Y^{(e)}=y]=\widehat{\bm{m}}_s^{(e)}$ for all $y\in\mathrm{supp}(Y^{(e)})$.
Let $\bm{\mu}_{e,y}=\mathbb{E}[\bm{Z}_c^{(e)}\mid Y^{(e)}=y]$ and choose one reference label $y_e^0\in\mathrm{supp}(Y^{(e)})$ per source domain. The causal conditional means span the causal space:
$\mathrm{span}\{\bm{\mu}_{e,y}-\bm{\mu}_{e,y_e^0}:e=1,\ldots,E,\ y\in\mathrm{supp}(Y^{(e)})\}=\mathbb{R}^{d_c}$.
\end{Con}

\begin{assumption}[Full smooth domain geometry]\label{app:ass:smooth_geometry_full}
There is a map $\Psi^*$ with $\bm{\beta}_*^{(e)}=\Psi^*(\nu_e)$ and
$\|\Psi^*(\nu)-\Psi^*(\nu')\|_F\le L_\beta\mathcal{W}_1(\nu,\nu')$.
All style distributions are supported on a compact $\mathcal{Z}_s\subset\mathbb{R}^{d_s}$ of diameter at most $D_s$. Moreover, $\Pi_{\mathcal{M}}$ is $d_{\mathcal{M}}$-dimensional Ahlfors-regular on $\mathcal{M}$, with $d_{\mathcal{M}}\in\mathbb{N}$. The source style laws $\nu_1,\ldots,\nu_E$ are independently sampled from $\Pi_{\mathcal{M}}$; $\nu_0$ is either an independent draw or fixed independently of the source sample, with the Ahlfors-regular bounds holding uniformly around $\nu_0$. For the fixed target law $\nu_0$, the distance distribution function
\[
F_{\nu_0}(r)=\Pi_{\mathcal{M}}\bigl(B_{\mathcal{W}_1}(\nu_0,r)\bigr)
\]
is continuous in $r$ on $[0,\operatorname{diam}(\mathcal{M})]$.
\end{assumption}

\begin{assumption}[Full GLM regularity]\label{app:ass:glm_regularity_full}
Let
\[
\mathcal{S}_\beta=\{\bm{\beta}\in\mathbb{R}^{C\times d_c}:\bm{1}^\top\bm{\beta}=\bm{0}\},
\]
and let $\mathcal{B}\subset\mathcal{S}_\beta$ be a compact convex parameter set with $\|\bm{\beta}\|_F\le R_\beta$. For all samples, $\|\bm{Z}_c\|_2\le R_c$. For $\ell_{\mathrm{CE}}(\bm u,y)=-\log(\mathrm{Softmax}(\bm u)_y)$, define
\[
\mathcal{R}_e(\bm{\beta})=\mathbb{E}[\ell_{\mathrm{CE}}(\bm{\beta}\bm Z_c,Y)\mid e],\qquad
\widehat{\mathcal{R}}_e(\bm{\beta})=\frac{1}{n_e}\sum_{i=1}^{n_e}\ell_{\mathrm{CE}}(\bm{\beta}\bm z_{c,i}^{(e)},y_i^{(e)}).
\]
For every domain considered in the oracle analysis, $\bm{\beta}_*^{(e)}\in\operatorname{relint}(\mathcal{B})$, $\nabla_{\mathcal{S}_\beta}\mathcal{R}_e(\bm{\beta}_*^{(e)})=\bm{0}$, and $\mathcal{R}_e$ is uniformly $\mu$-strongly convex on $\mathcal{B}$ with respect to the Frobenius norm restricted to $\mathcal{S}_\beta$. For every source domain and sample, the single-sample Hessian is uniformly bounded and $L_H$-Lipschitz in $\bm{\beta}$ over $\mathcal{B}$ as an operator on $\mathcal{S}_\beta$. With $\bm{\Pi}_C=\bm{I}_C-C^{-1}\bm{1}\bm{1}^\top$, let
\[
\mathcal{S}_C=\{\bm{u}\in\mathbb{R}^C:\bm{1}^\top\bm{u}=0\},\qquad
\mathcal{U}_R=\{\bm{u}\in\mathcal{S}_C:\|\bm{u}\|_2\le R_{\rm logit}\}.
\]
The oracle and comparison centered logits used in the lower-bound results lie in $\mathcal{U}_R$ almost surely. On $\mathcal{U}_R$, the conditional cross-entropy risk is $\mu_\ell$-strongly convex; on $\mathcal{S}_C$, it is globally $L_\ell$-smooth. Bounded centered logits imply that the Softmax probabilities are uniformly bounded away from zero, yielding a uniform positive-curvature lower bound on $\mathcal{U}_R$ along the centered-logit subspace $\mathcal{S}_C$.
\end{assumption}

\paragraph{Comparison notation.}
We compare \method{}, PooledJoint, and IRM using a scalar coordinate $c$ for latent domain position and define
\[
f^*(\bm Z_c,c)
=(\bm\beta_{\mathrm{base}}+c\bm\beta_{\mathrm{shift}})\bm Z_c.
\]
For any predictor $f$, write
\[
\mathcal R_c(f)
=\mathbb E[\ell_{\mathrm{CE}}(f(\bm Z_c,\bm Z_s),Y_c)],
\qquad
Y_c\sim\operatorname{Softmax}(f^*(\bm Z_c,c)),
\]
and define the bounded additive pooled class
\[
\mathcal H_{\mathrm{pool}}(R_W)
=\{f=\bm W_c\bm Z_c+\bm W_s\bm Z_s:
\|\bm W_c\|_F,\|\bm W_s\|_F\le R_W\}.
\]
For target coordinate $c_0$, define
\[
f_{\mathrm L}=f^*(\cdot,\widehat c_{\mathrm L}),\qquad
\widehat c_{\mathrm L}=\sum_e w_e c_e,\qquad
f_{\mathrm{inv}}=f^*(\cdot,c_{\mathrm{inv}}),
\]
where $c_e$ is source $e$'s coordinate and $c_{\mathrm{inv}}$ does not depend on the target domain.

\begin{assumption}[Full comparison shift model]\label{app:ass:comparison_shift_full}
The coordinate satisfies $c\sim\pi_c$, is bounded, and has $\operatorname{Var}_{\pi_c}(c)>0$. Moreover, $\mathbb E\bm Z_c=\bm0$, $\mathbb E\bm Z_c\bm Z_c^\top=\bm\Sigma_c\succ0$, and $\bm Z_c\perp(c,\bm Z_s)$. For the target coordinate, $c_{\mathrm{inv}}\ne c_0$. The radius $R_W$ is large enough to contain one optimizer of the unrestricted additive projection used in Lemma~\ref{lem:pooled_projection}. For the comparison lower bounds, $\bm\Pi_C f^*(\bm Z_c,c)$ and the centered logits of the fixed invariant predictor lie in $\mathcal U_R$ almost surely; moreover, $\bm\Pi_C f(\bm Z_c,\bm Z_s)\in\mathcal U_R$ almost surely for every $f\in\mathcal H_{\mathrm{pool}}(R_W)$.
\end{assumption}
\begin{assumption}[Full generic sub-Gaussian representation error]\label{app:ass:generic_representation_error_full}
 For any upstream representation learner and every domain $e=0,\ldots,E$, after a fixed representation alignment,
$\widetilde{\bm{z}}_{c,i}^{(e)}=\bm{z}_{c,i}^{(e)}+\bm{\xi}_{c,i}^{(e)}$ and
$\widetilde{\bm{z}}_{s,i}^{(e)}=\bm{z}_{s,i}^{(e)}+\bm{\xi}_{s,i}^{(e)}$.
Conditional on the oracle representations, the perturbations
are mean-zero, conditionally independent of $Y$, and uniformly
sub-Gaussian, with
$\|\langle u,\xi_{c,i}^{(e)}\rangle\|_{\psi_2}
\vee\|\langle v,\xi_{s,i}^{(e)}\rangle\|_{\psi_2}
\le\epsilon_{\mathrm{rep}}/\sqrt{d_c\vee d_s}$
for all $e,i$ and unit $u,v$, where
$\epsilon_{\mathrm{rep}}\ge0$ is the representation-error scale.
For each source domain, the perturbed empirical risk is $\mu_{\mathrm{rep}}$-strongly convex and its single-sample gradient is uniformly $L_{\nabla z}$-Lipschitz in $\bm{z}_c$ over $\bm{\beta}\in\mathcal{B}$.
\end{assumption}
\subsection{Auxiliary Propositions and Lemmas}\label{app:auxiliary_lemmas}
The proposition and lemmas below are the technical tools invoked by the proof section.

\begin{Pro}[Unique Structure of the Global Linear Transform]\label{pro:diagonal}
	Let $\bm{W}_s \in \mathbb{R}^{d_s \times p}$ be a candidate population style-extraction matrix, and suppose the induced style transform $\bm{M}=\bm{W}_s\bm{A}_s \in \mathbb{R}^{d_s \times d_s}$ is invertible. Under Assumption~\ref{app:ass:linear_full}, if $\bm{M}\bm{\Lambda}_e\bm{M}^\top$ is diagonal for every source domain $e=1,\ldots,E$, then $\bm{M}=\bm{P}\bm{D}$, where $\bm{P}$ is a permutation matrix and $\bm{D}$ is a non-singular diagonal scaling matrix.
\end{Pro}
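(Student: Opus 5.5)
The plan is to reduce the statement to a simultaneous-diagonalization uniqueness argument, using domain $1$ as a whitening reference. Set $\bm{D}_e:=\bm{M}\bm{\Lambda}_e\bm{M}^\top$, which is diagonal for every $e$ by hypothesis. Since $\bm{\Lambda}_1\succ0$ and $\bm{M}$ is invertible, $\bm{D}_1\succ0$. Write $\bm{\Lambda}_1^{1/2}$ and $\bm{D}_1^{1/2}$ for the positive diagonal square roots and define
\[
\bm{Q}:=\bm{D}_1^{-1/2}\bm{M}\bm{\Lambda}_1^{1/2}.
\]
Then $\bm{Q}\bm{Q}^\top=\bm{D}_1^{-1/2}\bm{D}_1\bm{D}_1^{-1/2}=\bm{I}$, so $\bm{Q}$ is orthogonal.

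Next, rewrite the remaining domains in normalized form. Let $\bm{R}_e:=\bm{\Lambda}_e\bm{\Lambda}_1^{-1}=\mathrm{diag}(\lambda_{e1}/\lambda_{11},\ldots,\lambda_{ed_s}/\lambda_{1d_s})$. A short computation gives
\[
\bm{Q}\bm{R}_e\bm{Q}^\top=\bm{D}_1^{-1/2}\bm{M}\bm{\Lambda}_e\bm{M}^\top\bm{D}_1^{-1/2}=\bm{D}_1^{-1/2}\bm{D}_e\bm{D}_1^{-1/2}=:\bm{\Delta}_e,
\]
which is diagonal. So for every $e=2,\ldots,E$ the orthogonal matrix $\bm{Q}$ conjugates the diagonal $\bm{R}_e$ to a diagonal $\bm{\Delta}_e$. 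Equivalently, $\bm{Q}\bm{R}_e=\bm{\Delta}_e\bm{Q}$, and entrywise
\[
Q_{jm}\,(\bm{\rho}_m)_{e-1}=(\bm{\Delta}_e)_{jj}\,Q_{jm}\qquad\text{for all } j,m,e.
\]

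The key step follows from this entrywise identity. Fix a row $j$. If $Q_{jm}\ne0$ and $Q_{jm'}\ne0$, then for every $e$ both $(\bm{\rho}_m)_{e-1}$ and $(\bm{\rho}_{m'})_{e-1}$ equal $(\bm{\Delta}_e)_{jj}$, so $\bm{\rho}_m=\bm{\rho}_{m'}$. Joint separation then forces $m=m'$. Hence each row of $\bm{Q}$ has at most one nonzero entry. Because $\bm{Q}$ is invertible, each row has exactly one, and these lie in distinct columns, so $\bm{Q}=\bm{P}\bm{S}$ with $\bm{P}$ a permutation and $\bm{S}$ diagonal with entries $\pm1$ (by orthogonality).

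Finally, unwind the normalization: $\bm{M}=\bm{D}_1^{1/2}\bm{P}\bm{S}\bm{\Lambda}_1^{-1/2}$. Moving the diagonal factor through the permutation, $\bm{D}_1^{1/2}\bm{P}=\bm{P}\bigl(\bm{P}^\top\bm{D}_1^{1/2}\bm{P}\bigr)$, where the bracket is still diagonal. This yields $\bm{M}=\bm{P}\bm{D}$ with $\bm{D}=(\bm{P}^\top\bm{D}_1^{1/2}\bm{P})\bm{S}\bm{\Lambda}_1^{-1/2}$, which is diagonal and nonsingular.

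The main obstacle is the step that converts the diagonalization hypothesis into the statement that each row of $\bm{Q}$ picks a single profile. This needs care in two places:
\begin{itemize}
\item Joint separation must be invoked across all domains $e=2,\ldots,E$ at once, since no single $\bm{R}_e$ need have distinct diagonal entries.
\item The whitening by domain $1$ must produce exactly the normalized profiles $\bm{\rho}_m$ used in Assumption~\ref{app:ass:linear_full}.
\end{itemize}
The rest is bookkeeping.
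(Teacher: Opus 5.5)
Your proposal is correct and follows essentially the same route as the paper. You whiten by domain $1$ via $\bm{Q}=\bm{D}_1^{-1/2}\bm{M}\bm{\Lambda}_1^{1/2}$, show that $\bm{Q}$ orthogonally diagonalizes every $\bm{R}_e$ (your $\bm{\Lambda}_e\bm{\Lambda}_1^{-1}$ equals the paper's $\bm{\Lambda}_1^{-1/2}\bm{\Lambda}_e\bm{\Lambda}_1^{-1/2}$ because diagonal matrices commute), use joint profile separation to get $\bm{Q}=\bm{P}\bm{S}$, and unwind to $\bm{M}=\bm{P}\bm{D}$ with the same $\bm{D}$; your entrywise identity $Q_{jm}\{(\bm{\rho}_m)_{e-1}-(\bm{\Delta}_e)_{jj}\}=0$ just makes explicit the paper's one-line claim that the joint eigenspaces are one-dimensional coordinate axes.
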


\begin{Lem}[Uniform Concentration of Empirical Hessian]\label{lem:hessian_concentration}
	Let $d_\beta=(C-1)d_c$. Under Assumption~\ref{app:ass:glm_regularity_full}, the empirical Hessian converges uniformly to the population Hessian over $\mathcal{B}$ as an operator on $\mathcal{S}_\beta$. In particular, there exists a constant $C>0$, depending only on the uniform single-sample Hessian bound and hence on $R_c$, such that for any $\delta>0$, if
	\[
	n_e \ge \frac{C}{\mu^2}\left[d_\beta\log\left(1+\frac{R_\beta L_H}{\mu}\right)+\log\frac{d_\beta}{\delta}\right],
	\]
	then
	\[
	\sup_{\bm{\beta}\in\mathcal{B}}
	\|\nabla^2\widehat{\mathcal{R}}_e(\bm{\beta})-\nabla^2\mathcal{R}_e(\bm{\beta})\|_2
	\le \mu/2
	\]
	with probability at least $1-\delta$.
\end{Lem}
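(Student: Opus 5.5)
The plan is to prove uniform Hessian concentration by a standard three-step route: pointwise matrix concentration on a finite net of $\mathcal{B}$, a union bound over the net, and extension to all of $\mathcal{B}$ using the $L_H$-Lipschitz property of single-sample Hessians.

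Throughout, view each single-sample Hessian $H_i(\bm\beta)=\nabla^2\ell_{\mathrm{CE}}(\bm\beta\bm z_{c,i}^{(e)},y_i^{(e)})$ as a symmetric operator on the $d_\beta$-dimensional space $\mathcal{S}_\beta$. The cross-entropy Hessian in $\bm\beta$ equals $(\mathrm{diag}(p)-pp^\top)\otimes \bm z\bm z^\top$. Its operator norm is therefore at most $R_c^2$ up to an absolute constant, which gives the uniform bound $\|H_i(\bm\beta)\|_2\le B\lesssim R_c^2$. By Assumption~\ref{app:ass:glm_regularity_full}, $\bm\beta\mapsto H_i(\bm\beta)$ is $L_H$-Lipschitz, and the population Hessian $\nabla^2\mathcal{R}_e$ inherits the same bound and Lipschitz constant.

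\textbf{Step 1 (pointwise).} Fix $\bm\beta\in\mathcal{B}$. The matrices $H_i(\bm\beta)-\nabla^2\mathcal{R}_e(\bm\beta)$ are i.i.d., mean zero, and bounded by $2B$ in operator norm, with variance proxy at most $B^2$. Matrix Bernstein (or matrix Hoeffding) in dimension $d_\beta$ then gives
\[
\mathbb{P}\Bigl(\|\nabla^2\widehat{\mathcal{R}}_e(\bm\beta)-\nabla^2\mathcal{R}_e(\bm\beta)\|_2>t\Bigr)\le 2d_\beta\exp\Bigl(-\frac{c\,n_e t^2}{B^2+Bt}\Bigr).
\]

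\textbf{Step 2 (net and union bound).} Let $\mathcal{N}$ be an $r$-net of $\mathcal{B}$ in Frobenius norm with $r=\mu/(8L_H)$. Since $\mathcal{B}$ lies in a $d_\beta$-dimensional ball of radius $R_\beta$, we can take $|\mathcal{N}|\le(1+2R_\beta/r)^{d_\beta}=(1+16R_\beta L_H/\mu)^{d_\beta}$. Apply Step 1 with $t=\mu/4$ and take a union bound over $\mathcal{N}$. The failure probability is at most $2d_\beta|\mathcal{N}|\exp(-c\,n_e\mu^2/B^2)$, using $t\le B$ since $\mu\le B$ (strong convexity is bounded above by the curvature bound). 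This is at most $\delta$ once
\[
n_e\ge \frac{C}{\mu^2}\Bigl[d_\beta\log\bigl(1+R_\beta L_H/\mu\bigr)+\log(d_\beta/\delta)\Bigr],
\]
where $C$ absorbs $B^2\lesssim R_c^4$ and numerical constants; the constant $16$ inside the logarithm is absorbed into $C$ as well.

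\textbf{Step 3 (extension).} For arbitrary $\bm\beta\in\mathcal{B}$, pick $\bm\beta'\in\mathcal{N}$ with $\|\bm\beta-\bm\beta'\|_F\le r$. The Lipschitz bounds on the empirical and population Hessians each contribute at most $L_H r=\mu/8$, so the triangle inequality gives a total deviation of at most $\mu/4+\mu/8+\mu/8=\mu/2$ on the good event.

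The main obstacle is bookkeeping rather than a conceptual difficulty. First, one must check that the operator-norm bound $B$ and the Lipschitz property hold on $\mathcal{S}_\beta$, meaning the Hessian is restricted to the centered gauge, so that the effective dimension is $d_\beta=(C-1)d_c$ rather than $Cd_c$. Second, the ratio $B/\mu$ must be hidden in the constant $C$ as the statement permits. I would handle the first point by composing with an orthonormal basis of $\mathcal{S}_\beta$, which preserves both the operator norm and the Lipschitz constant.
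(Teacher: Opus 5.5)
Your proposal is correct and follows the paper's proof essentially step for step. Both use the Kronecker-form bound $\|H_i\|_2\lesssim R_c^2$, an $\epsilon$-net of $\mathcal{B}$ at scale $\mu/(8L_H)$ with cardinality $(1+16R_\beta L_H/\mu)^{d_\beta}$, pointwise matrix Hoeffding at threshold $\mu/4$ with a union bound (you also allow Bernstein, which the paper does not), and Lipschitz extension giving $\mu/4+\mu/8+\mu/8=\mu/2$. Your extra bookkeeping, namely $\mu\le B$ (needed only for Bernstein) and absorbing the factor $16$ into $C$ (valid since $\log(1+16x)\le16\log(1+x)$), is harmless; the paper's Hoeffding route does not need the first, and the paper leaves the second implicit in its ``up to a universal constant'' step.
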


\begin{Lem}[Finite-Sample Source Estimation]\label{lem:source_estimation}
	Under Assumption~\ref{app:ass:glm_regularity_full}, for any source domain $e$ and sufficiently large $n_e$,
	\[
	\mathbb{E}\|\widehat{\bm{\beta}}^{(e)}-\bm{\beta}_*^{(e)}\|_F
	\le
	\mathcal{O}\!\left(\frac{R_c}{\mu\sqrt{n_e}}\right).
	\]
\end{Lem}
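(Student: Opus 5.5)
The plan is the standard M-estimation argument: localize the empirical minimizer using strong convexity, then bound the distance through the gradient at the oracle point. Fix a source domain $e$ and write $\Delta=\widehat{\bm{\beta}}^{(e)}-\bm{\beta}_*^{(e)}\in\mathcal{S}_\beta$; this difference lies in the centered subspace because both points are in $\mathcal{B}\subset\mathcal{S}_\beta$.

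\textbf{Step 1 (curvature).} Invoke Lemma~\ref{lem:hessian_concentration} with $\delta=n_e^{-1}$, or any polynomially small choice. For sufficiently large $n_e$ this gives an event $\mathcal{A}$ with $\mathbb{P}(\mathcal{A})\ge1-\delta$ on which
\[
\nabla^2\widehat{\mathcal{R}}_e(\bm{\beta})\succeq(\mu/2)I \quad\text{on }\mathcal{S}_\beta,\ \text{uniformly over }\bm{\beta}\in\mathcal{B}.
\]
This uses the $\mu$-strong convexity of $\mathcal{R}_e$ from Assumption~\ref{app:ass:glm_regularity_full}.

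\textbf{Step 2 (basic inequality).} On $\mathcal{A}$, combine optimality of $\widehat{\bm{\beta}}^{(e)}$ over the convex set $\mathcal{B}$ with $\mu/2$-strong convexity along the segment from $\bm{\beta}_*^{(e)}$ to $\widehat{\bm{\beta}}^{(e)}$. This yields
\[
\tfrac{\mu}{2}\|\Delta\|_F^2\le\langle\nabla_{\mathcal{S}_\beta}\widehat{\mathcal{R}}_e(\bm{\beta}_*^{(e)}),-\Delta\rangle ,
\]
and hence
\[
\|\Delta\|_F\le \tfrac{2}{\mu}\,\|\bm{G}_e\|_F,\qquad \bm{G}_e:=\nabla_{\mathcal{S}_\beta}\widehat{\mathcal{R}}_e(\bm{\beta}_*^{(e)}).
\]

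\textbf{Step 3 (score bound).} The single-sample gradient is $\bm{\Pi}_C(\mathrm{Softmax}(\bm{\beta}_*\bm z_i)-\bm e_{y_i})\bm z_i^\top$. By the population first-order condition $\nabla_{\mathcal{S}_\beta}\mathcal{R}_e(\bm{\beta}_*^{(e)})=\bm 0$ (relative-interior optimum), it has mean zero. Its Frobenius norm is at most $\sqrt2\,R_c$, since the probability difference has $\ell_2$ norm at most $\sqrt2$ and $\|\bm z_i\|_2\le R_c$. The samples are i.i.d., so
\[
\mathbb{E}\|\bm{G}_e\|_F^2\le 2R_c^2/n_e,
\]
and by Jensen $\mathbb{E}\|\bm{G}_e\|_F\le\sqrt2R_c/\sqrt{n_e}$. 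On $\mathcal{A}$ this gives a contribution of $\mathcal{O}(R_c/(\mu\sqrt{n_e}))$.

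\textbf{Step 4 (bad event).} On $\mathcal{A}^c$, use the trivial bound $\|\Delta\|_F\le2R_\beta$. This contributes $2R_\beta\delta$. Choosing $\delta\le R_c/(\mu R_\beta\sqrt{n_e})$ makes this term of the same order. The choice is admissible because it only adds a $\log n_e$ term in the sample-size requirement, which "sufficiently large $n_e$" absorbs.

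\textbf{Main obstacle.} The delicate step is Step 2 on the gauge subspace. Strong convexity and the Hessian bound hold only as operators on $\mathcal{S}_\beta$, so every gradient and inner product must be projected consistently. One must also check that the constrained optimality condition over $\mathcal{B}$ still yields the basic inequality even if $\widehat{\bm{\beta}}^{(e)}$ lands on the boundary of $\mathcal{B}$; the variational inequality handles this directly. A secondary subtlety is that the constant in Lemma~\ref{lem:hessian_concentration} depends on $R_c$, which is hidden in the $\mathcal{O}$.
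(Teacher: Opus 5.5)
Your proposal is correct and follows essentially the same route as the paper: Hessian concentration at a polynomially small failure level makes $\widehat{\mathcal{R}}_e$ $(\mu/2)$-strongly convex on $\mathcal{B}$, a constrained basic inequality gives $\|\Delta\|_F\lesssim\mu^{-1}\|\nabla\widehat{\mathcal{R}}_e(\bm{\beta}_*^{(e)})\|_F$, the i.i.d.\ mean-zero score bound gives $\sqrt{2}R_c/\sqrt{n_e}$, and the bad event is handled by the trivial $2R_\beta$ bound. The differences are cosmetic: the paper compares function values and gets the constant $4/\mu$ instead of your variational-inequality constant $2/\mu$, and it takes $\delta=n_e^{-k}$ with $k\ge2$ so the bad-event term is lower order, rather than tuning $\delta$ to match the main rate.
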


\begin{Lem}[Uniform Source-Parameter Bound]\label{lem:estimator_tail}
	Let $d_\beta=(C-1)d_c$. Under Assumption~\ref{app:ass:glm_regularity_full}, for every $\delta\in(0,1)$, if
	\[
	n_{\min}\ge \frac{C}{\mu^2}\left[d_\beta\log\left(1+\frac{R_\beta L_H}{\mu}\right)+\log\frac{d_\beta E}{\delta}\right],
	\]
	then, with probability at least $1-\delta$,
	\[
	\max_{1\le e\le E}\|\widehat{\bm{\beta}}^{(e)}-\bm{\beta}_*^{(e)}\|_F
	\le
	C\frac{R_c}{\mu}\sqrt{\frac{d_\beta+\log(E/\delta)}{n_{\min}}}.
	\]
	Consequently, in the fixed-complexity regime and for sufficiently large $n_{\min}$,
	\[
	\mathbb{E}\max_{1\le e\le E}\|\widehat{\bm{\beta}}^{(e)}-\bm{\beta}_*^{(e)}\|_F
	\le
	\mathcal{O}\!\left(\frac{R_c}{\mu}\sqrt{\frac{d_\beta+\log E}{n_{\min}}}\right),
	\]
	and the corresponding squared maximum is bounded by
	$\mathcal{O}(R_c^2(d_\beta+\log E)/(\mu^2 n_{\min}))$.
\end{Lem}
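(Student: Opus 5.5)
The plan is the standard two-step route. First get a high-probability uniform bound over all $E$ sources by a union bound over per-source deviation bounds, using Lemma~\ref{lem:hessian_concentration} with $\delta/E$ in place of $\delta$. Then integrate the tail to obtain the expectation statements.

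\textbf{Step 1 (local strong convexity of the empirical risks).} Apply Lemma~\ref{lem:hessian_concentration} to each source $e$ with confidence $\delta/(2E)$. The stated lower bound on $n_{\min}$, whose logarithmic term is $\log(d_\beta E/\delta)$, matches the hypothesis of that lemma after the substitution. So with probability at least $1-\delta/2$, simultaneously for all $e$,
\[
\sup_{\bm\beta\in\mathcal B}\|\nabla^2\widehat{\mathcal R}_e(\bm\beta)-\nabla^2\mathcal R_e(\bm\beta)\|_2\le\mu/2 .
\]
Hence every $\widehat{\mathcal R}_e$ is $\mu/2$-strongly convex on $\mathcal B$ along $\mathcal S_\beta$. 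Strong convexity combined with first-order optimality of $\widehat{\bm\beta}^{(e)}$ over the convex set $\mathcal B$ gives
\[
\tfrac{\mu}{2}\|\widehat{\bm\beta}^{(e)}-\bm\beta_*^{(e)}\|_F\le\|\nabla_{\mathcal S_\beta}\widehat{\mathcal R}_e(\bm\beta_*^{(e)})\|_F .
\]
This uses that $\bm\beta_*^{(e)}$ is an interior point with zero population gradient, so the variational inequality reduces to the gradient norm.

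\textbf{Step 2 (gradient concentration).} The empirical gradient at $\bm\beta_*^{(e)}$ is a mean of $n_e$ i.i.d.\ mean-zero matrices $(\mathrm{Softmax}(\bm\beta_*^{(e)}\bm z)-\bm e_y)\bm z^\top$, projected onto $\mathcal S_\beta$. Each has Frobenius norm at most $\sqrt2 R_c$ and lives in a $d_\beta$-dimensional space. A vector Bernstein/Hoeffding bound, or an $\epsilon$-net over the unit sphere of $\mathcal S_\beta$ with a scalar Hoeffding bound and a union over $e$ at level $\delta/(2E)$, gives with probability at least $1-\delta/2$
\[
\max_e\|\nabla\widehat{\mathcal R}_e(\bm\beta_*^{(e)})\|_F\le C R_c\sqrt{(d_\beta+\log(E/\delta))/n_{\min}} .
\]
Combining Steps 1 and 2 on the intersection event yields the high-probability claim with constant $2C$.

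\textbf{Step 3 (expectations).} Write $X=\max_e\|\widehat{\bm\beta}^{(e)}-\bm\beta_*^{(e)}\|_F$. It is always at most $2R_\beta$ because both parameters lie in $\mathcal B$. Set $t=\log(1/\delta)$. The high-probability bound gives
\[
\mathbb P\bigl(X> a+b\sqrt t\bigr)\le e^{-t}\quad\text{for } t\le t_{\max},
\]
where
\[
a=\frac{CR_c}{\mu}\sqrt{\frac{d_\beta+\log E}{n_{\min}}},\qquad b=\frac{CR_c}{\mu\sqrt{n_{\min}}},
\]
and $t_{\max}\asymp\mu^2n_{\min}$ is the range allowed by the sample-size condition. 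Integrating the tail gives $\mathbb E X\le a+\mathcal O(b)$ plus a remainder of order $2R_\beta e^{-t_{\max}}$ coming from $t>t_{\max}$. In the fixed-complexity regime with $n_{\min}$ large, this remainder is $o(a)$. The squared bound follows the same way from $\mathbb E X^2=\int_0^\infty 2s\,\mathbb P(X>s)\,ds$.

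\textbf{Main obstacle.} The delicate point is the coupling between $\delta$ and the sample-size requirement. Since the Hessian lemma needs $n_{\min}\gtrsim\log(1/\delta)/\mu^2$, the tail bound is only valid up to $t_{\max}$. One must check that the bounded-parameter fallback $X\le2R_\beta$ makes the leftover contribution negligible; this is exactly why the statement says ``sufficiently large $n_{\min}$''. A secondary care point is doing all norms, gradients and the net on the centered subspace $\mathcal S_\beta$ so that the effective dimension is $d_\beta=(C-1)d_c$ rather than $Cd_c$.
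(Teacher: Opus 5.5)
Your proposal is correct and follows the paper's proof essentially step for step. The shared steps are Hessian concentration at level $\delta/(2E)$ with a union bound, reduction of each $\|\widehat{\bm\beta}^{(e)}-\bm\beta_*^{(e)}\|_F$ to the empirical gradient norm at $\bm\beta_*^{(e)}$, a vector Hoeffding bound with a second union bound, and tail integration with $\delta=e^{-t}$ truncated at $t_{\max}$ using the fallback $X\le 2R_\beta$. The only difference is cosmetic: your variational-inequality step gives the constant $2/\mu$ where the paper's function-value comparison gives $4/\mu$, and neither step actually needs $\bm\beta_*^{(e)}$ to be interior, since the zero population gradient is used only for the mean-zero property in your Step 2.
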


For Wasserstein rates, write
\[
\gamma(n,d)=n^{-1/(d\vee2)}\ell_d(n),\qquad
\Gamma(n,E,d)=((1+\log E)/n)^{1/(d\vee2)}\widetilde{\ell}_d(n),
\]
where $\ell_2(n)=\widetilde{\ell}_2(n)=\log(1+n)$ and $\ell_d(n)=\widetilde{\ell}_d(n)=1$ for $d\ne2$.

\begin{Lem}[Empirical Wasserstein Concentration]\label{lem:wasserstein_concentration}
	Under Assumption~\ref{app:ass:smooth_geometry_full}, for any domain style distribution $\nu$ supported on $\mathcal{Z}_s$ and empirical measure $\widehat{\nu}$ from $n$ samples,
	\[
	\mathbb{E}\mathcal{W}_1(\nu,\widehat{\nu})
	\le C\gamma(n,d_s).
	\]
	Moreover, with $\Delta_W=\gamma(n_{\min},d_s)+\Gamma(n_{\min},E,d_s)$,
	\[
	\mathbb{E}\max_{1\le e\le E}\mathcal{W}_1(\nu_e,\widehat{\nu}_e)
	\le \mathcal{O}(\Delta_W),\qquad
	\mathbb{E}\max_{1\le e\le E}\mathcal{W}_1^2(\nu_e,\widehat{\nu}_e)
	\le \mathcal{O}(\Delta_W^2).
	\]
\end{Lem}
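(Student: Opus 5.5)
The first bound is the standard mean-rate bound of \citet{fournier2015rate}. The maximal bounds will follow from a concentration-plus-union-bound argument over the $E$ sources.

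For the single-measure bound, $\nu$ is supported on the compact set $\mathcal{Z}_s$ of diameter at most $D_s$ (Assumption~\ref{app:ass:smooth_geometry_full}), so all moments are finite. Fournier--Guillin's Theorem~1 then gives
\[
\mathbb{E}\mathcal{W}_1(\nu,\widehat\nu)\le C D_s\,\gamma(n,d_s),
\]
with the $\log(1+n)$ factor appearing exactly in the critical case $d_s=2$, and $C$ depending only on $d_s$. If one prefers a self-contained route, the dyadic-partition argument does the job. Bound $\mathcal{W}_1$ by $\sum_k 2^{-k}D_s\sum_{F\in\mathcal P_k}|\nu(F)-\widehat\nu(F)|$. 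Then use $\mathbb{E}|\nu(F)-\widehat\nu(F)|\le\min\{2\nu(F),\sqrt{\nu(F)/n}\}$ and Cauchy--Schwarz over the $2^{kd_s}$ cells. Finally, optimize the truncation level.

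For the maximum, I would first concentrate each $\mathcal{W}_1(\nu_e,\widehat\nu_e)$ around its mean. The map $(s_1,\dots,s_{n})\mapsto\mathcal{W}_1(\nu,n^{-1}\sum\delta_{s_i})$ has bounded differences $D_s/n$, so McDiarmid gives
\[
\mathbb{P}\bigl(\mathcal{W}_1(\nu_e,\widehat\nu_e)\ge \mathbb{E}\mathcal{W}_1(\nu_e,\widehat\nu_e)+t\bigr)\le e^{-2n_et^2/D_s^2}.
\]
Conditioning on the source laws $\nu_1,\ldots,\nu_E$ keeps the samples independent, so the same bound holds conditionally. A union bound over $e$, integrated in $t$, yields
\[
\mathbb{E}\max_e\mathcal{W}_1(\nu_e,\widehat\nu_e)\le C\gamma(n_{\min},d_s)+CD_s\sqrt{(1+\log E)/n_{\min}}.
\]
Here I use that $\gamma(n_e,d_s)\le\gamma(n_{\min},d_s)$, since $\gamma$ is decreasing in $n$ for $n\ge1$ up to constants. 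Finally, $\sqrt{x}\le x^{1/(d_s\vee2)}$ for $x=(1+\log E)/n_{\min}\le1$ and $\widetilde\ell_{d_s}\ge$ const. Hence the sub-Gaussian term is dominated by $\Gamma(n_{\min},E,d_s)$, and the case $x>1$ is trivial because $\mathcal{W}_1\le D_s$. This gives the $\mathcal{O}(\Delta_W)$ bound.

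The squared bound follows from the same tail estimate. Write $\mathcal{W}_1(\nu_e,\widehat\nu_e)\le m_e+(\mathcal{W}_1(\nu_e,\widehat\nu_e)-m_e)_+$, where $m_e$ is the mean. Then
\[
\mathbb{E}\max_e\mathcal{W}_1^2\le 2\max_e m_e^2+2\,\mathbb{E}\max_e(\cdot)_+^2,
\]
and integrating $2t\,\mathbb{P}(\max_e(\cdot)_+>t)\le 2t\min\{1,Ee^{-2n_{\min}t^2/D_s^2}\}$ gives $\mathcal{O}(D_s^2(1+\log E)/n_{\min})$. Both pieces are therefore $\mathcal{O}(\Delta_W^2)$.

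The main obstacle is bookkeeping rather than depth. One must check that the logarithmic factor and the $(1+\log E)$ enter exactly in the form $\Gamma$ when $d_s=2$. It also matters that the source laws are themselves random, which the conditioning argument handles because the constants are uniform over laws supported on $\mathcal{Z}_s$.
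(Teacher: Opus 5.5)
Your proof is correct. For the maximal bounds it takes a genuinely different route from the paper's.

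For the single-measure bound you and the paper both cite the Fournier--Guillin mean rate; you also sketch the dyadic-partition argument, which the paper omits.

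For the maximum, the paper uses Fournier--Guillin's dimension-dependent deviation inequalities. Their tails are $\exp(-cn_et^2)$, $\exp(-cn_et^2/\log^2(1+n_e))$ and $\exp(-cn_et^{d_s})$ in the cases $d_s<2$, $d_s=2$, $d_s>2$. The paper union-bounds these over the $E$ sources and notes that the level at which the union bound becomes order one is exactly $\Gamma(n_{\min},E,d_s)$. Both moment bounds then follow by layer-cake integration.

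You instead use McDiarmid's bounded-difference inequality, since the map from samples to $\mathcal{W}_1(\nu,\widehat\nu)$ has differences $D_s/n$. This gives a dimension-free sub-Gaussian deviation around the mean, so the extreme-value penalty is $D_s\sqrt{(1+\log E)/n_{\min}}$. You then need one extra step to fold this into $\Gamma$: $\sqrt{x}\le x^{1/(d_s\vee2)}$ for $x=(1+\log E)/n_{\min}\le1$, and the case $x>1$ is trivial because $\mathcal{W}_1\le D_s$.

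What your route buys:
\begin{itemize}
\item It is more elementary and uses only the compact support assumed in the lemma.
\item It is strictly sharper for $d_s>2$, since the $E$-dependent term does not degrade with dimension. So $\Gamma$ is a valid but loose envelope.
\item It handles the randomness of the source laws explicitly by conditioning, which the paper leaves implicit.
\item It correctly says $\gamma(\cdot,d_s)$ is decreasing only up to constants. The paper's claim of exact monotonicity fails for small $n$ when $d_s=2$.
\end{itemize}

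What the paper's route buys: it explains where the specific $1/(d_s\vee2)$ exponent in $\Gamma$ comes from. It would also extend to non-compact style laws with exponential moments, where bounded differences fail. Under the paper's compactness assumption, that generality is not needed.
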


\begin{Lem}[Weighted Wasserstein Approximation]\label{lem:wasserstein_approx}
	With these empirical Wasserstein rates, under Assumption~\ref{app:ass:smooth_geometry_full} and Lemma~\ref{lem:wasserstein_concentration},
	\[
	\mathbb{E}\!\left[
	L_\beta\sum_{e\in\widehat{\mathcal{N}}_K(0)}
	w_e\{\mathcal{W}_1(\nu_e,\widehat{\nu}_e)+\mathcal{W}_1(\widehat{\nu}_0,\nu_0)\}
	\right]
	\le
	\mathcal{O}\{\gamma(n_0,d_s)+\Gamma(n_{\min},E,d_s)\}.
	\]
\end{Lem}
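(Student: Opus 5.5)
The statement to prove is Lemma~\ref{lem:wasserstein_approx}. The weights are nonnegative and sum to one over $\widehat{\mathcal{N}}_K(0)$, so I will avoid any analysis of the data-dependent neighbor set or the softmax weights. The plan is to bound the weighted sum pointwise by a maximum, then take expectations and apply Lemma~\ref{lem:wasserstein_concentration} to each piece.

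First, for every realization,
\[
\sum_{e\in\widehat{\mathcal{N}}_K(0)} w_e\,\mathcal{W}_1(\nu_e,\widehat{\nu}_e)\le \max_{1\le e\le E}\mathcal{W}_1(\nu_e,\widehat{\nu}_e).
\]
This holds because $w_e\ge0$, $\sum_{e\in\widehat{\mathcal{N}}_K(0)}w_e=1$, and $\widehat{\mathcal{N}}_K(0)\subseteq\{1,\ldots,E\}$. By the same normalization,
\[
\sum_{e\in\widehat{\mathcal{N}}_K(0)} w_e\,\mathcal{W}_1(\widehat{\nu}_0,\nu_0)=\mathcal{W}_1(\widehat{\nu}_0,\nu_0).
\]
Both bounds hold deterministically, so the randomness of the selected set and of the weights, which depend on all the empirical measures, no longer matters.

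Second, I take expectations and use linearity. The first term is controlled by Lemma~\ref{lem:wasserstein_concentration}:
\[
\mathbb{E}\max_{e}\mathcal{W}_1(\nu_e,\widehat{\nu}_e)\le\mathcal{O}\{\gamma(n_{\min},d_s)+\Gamma(n_{\min},E,d_s)\}.
\]
The target term is bounded by the single-measure statement of the same lemma applied to $\nu_0$ with $n_0$ samples:
\[
\mathbb{E}\mathcal{W}_1(\widehat{\nu}_0,\nu_0)\le C\gamma(n_0,d_s).
\]
Here $\nu_0$ is supported on $\mathcal{Z}_s$ by Assumption~\ref{app:ass:smooth_geometry_full}. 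If $\nu_0$ is random, I would condition on it first and then integrate, since the constant in the lemma is uniform over laws supported on the compact set of diameter $D_s$. Multiplying by the constant $L_\beta$ is absorbed into $\mathcal{O}$.

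Third, I absorb $\gamma(n_{\min},d_s)$ into $\Gamma(n_{\min},E,d_s)$. Since $1+\log E\ge1$ and $\widetilde{\ell}_d=\ell_d$, we have $\gamma(n,d)\le\Gamma(n,E,d)$. Combining the three steps gives the claimed rate $\mathcal{O}\{\gamma(n_0,d_s)+\Gamma(n_{\min},E,d_s)\}$.

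I expect the main obstacle to be the first step: a direct approach would try to control $w_e$ or the law of $\widehat{\mathcal{N}}_K(0)$, which are correlated with the errors $\mathcal{W}_1(\nu_e,\widehat{\nu}_e)$. The convex-combination bound by the maximum is what sidesteps this dependence. The only remaining care point is checking that Lemma~\ref{lem:wasserstein_concentration}'s bound on the maximum is uniform in the random draw of the source laws, which follows from the compact-support constant depending only on $D_s$ and $d_s$.
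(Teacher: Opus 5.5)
Your proposal is correct and matches the paper's proof: use $\sum_{e\in\widehat{\mathcal{N}}_K(0)}w_e=1$ to pull out the target term, bound the source convex combination by $\max_{1\le e\le E}\mathcal{W}_1(\nu_e,\widehat{\nu}_e)$, and apply Lemma~\ref{lem:wasserstein_concentration} to each piece. Your reason for absorbing $\gamma(n_{\min},d_s)$ into $\Gamma(n_{\min},E,d_s)$ is cleaner than the paper's remark about $n_{\min}\le n_e$: since $(1+\log E)^{1/(d_s\vee2)}\ge1$ and $\widetilde{\ell}_{d}=\ell_{d}$, you get $\gamma\le\Gamma$ directly.
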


\begin{Lem}[Manifold Interpolation via Empirical KNN]\label{lem:knn_interpolation}
	Under Assumption~\ref{app:ass:smooth_geometry_full},
	\[
	\mathbb{E}\!\left[
	L_\beta\sum_{e\in\widehat{\mathcal{N}}_K(0)}
	w_e\mathcal{W}_1(\widehat{\nu}_e,\widehat{\nu}_0)
	\right]
	\le
	\mathcal{O}\!\left((K/E)^{1/d_{\mathcal{M}}}\right)
	+
	\mathcal{O}\{\gamma(n_0,d_s)+\Gamma(n_{\min},E,d_s)\}.
	\]
\end{Lem}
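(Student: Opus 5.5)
The plan is to control the weighted empirical distance to the selected neighbours by a triangle inequality through the population laws. This reduces it to a population nearest-neighbour radius plus empirical-measure errors. Because the weights $w_e$ are nonnegative and sum to one over $\widehat{\mathcal{N}}_K(0)$, we have
\[
\sum_{e\in\widehat{\mathcal{N}}_K(0)} w_e\mathcal{W}_1(\widehat{\nu}_e,\widehat{\nu}_0)\le \max_{e\in\widehat{\mathcal{N}}_K(0)}\mathcal{W}_1(\widehat{\nu}_e,\widehat{\nu}_0).
\]
So it suffices to bound the $K$-th smallest empirical distance $\widehat d_{(K)}$ among $\{\mathcal{W}_1(\widehat\nu_e,\widehat\nu_0)\}_{e=1}^E$. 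By the triangle inequality, for every $e$,
\[
|\mathcal{W}_1(\widehat{\nu}_e,\widehat{\nu}_0)-\mathcal{W}_1(\nu_e,\nu_0)|\le \Delta:=\max_{1\le e\le E}\mathcal{W}_1(\nu_e,\widehat{\nu}_e)+\mathcal{W}_1(\nu_0,\widehat{\nu}_0).
\]
Order statistics are $1$-Lipschitz in sup-norm, so $\widehat d_{(K)}\le d_{(K)}+\Delta$, where $d_{(K)}$ is the $K$-th smallest population distance $\mathcal{W}_1(\nu_e,\nu_0)$. Lemma~\ref{lem:wasserstein_concentration} then gives $\mathbb{E}\Delta\le\mathcal{O}\{\gamma(n_0,d_s)+\Gamma(n_{\min},E,d_s)\}$. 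Multiplying by $L_\beta$ only changes constants.

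It remains to show $\mathbb{E}d_{(K)}\le\mathcal{O}((K/E)^{1/d_{\mathcal M}})$, which is a standard $K$-NN radius bound for i.i.d. points from an Ahlfors-regular measure. Condition on $\nu_0$. This is legitimate because $\nu_0$ is independent of the sources and the regularity bounds hold uniformly around it. The variables $\mathcal{W}_1(\nu_e,\nu_0)$ are then i.i.d. with distribution function $F_{\nu_0}$. Ahlfors regularity gives $F_{\nu_0}(r)\ge c\,r^{d_{\mathcal M}}$ for $r\le\operatorname{diam}(\mathcal M)$, and compactness of $\mathcal Z_s$ bounds all distances by $D_s$. Since $F_{\nu_0}$ is continuous, the variables $U_e=F_{\nu_0}(\mathcal{W}_1(\nu_e,\nu_0))$ are i.i.d. uniform on $[0,1]$. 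Moreover, $d_{(K)}\le F_{\nu_0}^{-1}(U_{(K)})\le (U_{(K)}/c)^{1/d_{\mathcal M}}$, where $U_{(K)}\sim\mathrm{Beta}(K,E-K+1)$. Since $x\mapsto x^{1/d_{\mathcal M}}$ is concave, Jensen's inequality gives
\[
\mathbb{E}d_{(K)}\le c^{-1/d_{\mathcal M}}\bigl(K/(E+1)\bigr)^{1/d_{\mathcal M}}.
\]
Integrating over $\nu_0$ preserves the bound. As a fallback, if quantile inversion is awkward, I would use the Chernoff/binomial tail $\mathbb{P}(d_{(K)}>r)=\mathbb{P}(\mathrm{Bin}(E,F(r))<K)$, set $r\asymp (K/E)^{1/d_{\mathcal M}}$, and integrate the tail using the cap $D_s$.

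The main obstacle is that $\widehat{\mathcal N}_K(0)$ and the weights are data-dependent. One might try to bound the weighted sum by conditioning on which sources were selected, but the selection is coupled to the empirical errors. The order-statistic sup-norm argument above avoids this coupling: the data-dependent selection is absorbed into $\Delta$, a maximum over all $E$ sources. This is exactly why the source term appears as $\Gamma(n_{\min},E,d_s)$, with its $\log E$ factor, rather than as a single-source rate. A second point needs care: the $K$-th smallest empirical distance bounds distances only inside the selected set. This is enough, since $\widehat{\mathcal N}_K(0)$ consists exactly of the $K$ smallest empirical distances, so no selected source exceeds $\widehat d_{(K)}$.
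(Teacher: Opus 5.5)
Your proof is correct and follows the paper's route: reduce the weighted sum to the empirical $K$-th neighbour radius, then compare it to the population radius $R_K^*$ up to $\max_{e}\mathcal{W}_1(\nu_e,\widehat\nu_e)+\mathcal{W}_1(\nu_0,\widehat\nu_0)$ (the paper passes through the true neighbour set $\mathcal{N}^*_K(0)$, which is equivalent to your sup-norm Lipschitz property of order statistics), and finally control $\mathbb{E}R_K^*$ through $F_{\nu_0}(R_K^*)\sim U_{(K)}$ with $\mathbb{E}U_{(K)}=K/(E+1)$. The paper compresses the last step into ``which implies''; you spell it out explicitly with the Ahlfors lower bound $U_{(K)}=F_{\nu_0}(d_{(K)})\ge c\,d_{(K)}^{d_{\mathcal M}}$ and Jensen's inequality for the concave map $x\mapsto x^{1/d_{\mathcal M}}$.
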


\begin{Lem}[Centered Logit Risk Bounds]\label{lem:centered_logit_excess}
	Under Assumption~\ref{app:ass:glm_regularity_full}, write $\bm Z=(\bm Z_c,\bm Z_s)$ and define
	\[
	\mathcal{D}_{\ell}(f,f^*)
	=
	\mathbb{E}\left\|
	\bm{\Pi}_C\{f(\bm{Z})-f^*(\bm{Z}_c,c)\}
	\right\|_2^2 .
	\]
	Let $\Delta_\ell(f,f^*)$ denote the corresponding expected cross-entropy excess risk. If the centered logits of $f$ and $f^*$ lie in $\mathcal{U}_R$ almost surely, then
	\[
	\Delta_{\ell}(f,f^*)
	\ge
	\frac{\mu_{\ell}}{2}\mathcal{D}_{\ell}(f,f^*).
	\]
	For arbitrary finite centered logits,
	\[
	\Delta_{\ell}(f,f^*)
	\le
	\frac{L_{\ell}}{2}\mathcal{D}_{\ell}(f,f^*).
	\]
\end{Lem}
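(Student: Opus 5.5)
The plan is to reduce both inequalities to pointwise (conditional) statements about a single convex function on the centered-logit subspace $\mathcal{S}_C$, and then integrate.

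\textbf{Conditioning.} Fix a realization of $\bm Z=(\bm Z_c,\bm Z_s)$ and of the coordinate $c$. Write $\bm u=f(\bm Z)$ and $\bm u^*=f^*(\bm Z_c,c)$, and let $\bm p^*=\mathrm{Softmax}(\bm u^*)$ be the conditional law of $Y$. Define the conditional risk
\[
r(\bm v)=\mathbb E[\ell_{\mathrm{CE}}(\bm v,Y)\mid \bm Z,c]=-\sum_y p^*_y\log \mathrm{Softmax}(\bm v)_y .
\]
Then $\Delta_\ell(f,f^*)=\mathbb E[r(\bm u)-r(\bm u^*)]$ and $\mathcal D_\ell(f,f^*)=\mathbb E\|\bm\Pi_C(\bm u-\bm u^*)\|_2^2$. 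Since both sides of each claimed inequality are expectations of pointwise quantities, it suffices to prove, for every realization,
\[
\tfrac{\mu_\ell}{2}\|\bm\Pi_C(\bm u-\bm u^*)\|_2^2\le r(\bm u)-r(\bm u^*)\le \tfrac{L_\ell}{2}\|\bm\Pi_C(\bm u-\bm u^*)\|_2^2,
\]
with the left inequality needed only when both centered logits lie in $\mathcal U_R$. Integrability is automatic, because bounded centered logits give bounded cross-entropy.

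\textbf{Gauge reduction and stationarity.} Softmax is invariant under $\bm v\mapsto \bm v+t\bm 1$, so $r(\bm v)=r(\bm\Pi_C\bm v)$. Hence I may replace $\bm u,\bm u^*$ by their centered versions $\bar{\bm u}=\bm\Pi_C\bm u$ and $\bar{\bm u}^*=\bm\Pi_C\bm u^*$, both in $\mathcal S_C$, and work with $r$ restricted to $\mathcal S_C$. The gradient is $\nabla r(\bm v)=\mathrm{Softmax}(\bm v)-\bm p^*$, which vanishes at $\bar{\bm u}^*$ because $\mathrm{Softmax}(\bar{\bm u}^*)=\bm p^*$. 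So $\bar{\bm u}^*$ is a stationary point of $r|_{\mathcal S_C}$; in fact $r(\bar{\bm u})-r(\bar{\bm u}^*)=\mathrm{KL}(\bm p^*\,\|\,\mathrm{Softmax}(\bar{\bm u}))$. The projected gradient $\bm\Pi_C\nabla r$ also vanishes there, which is the form needed for the restricted strong-convexity and smoothness inequalities.

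\textbf{Quadratic bounds.} For the upper bound, I use global $L_\ell$-smoothness of $r$ on $\mathcal S_C$ from Assumption~\ref{app:ass:glm_regularity_full} (indeed the Hessian $\mathrm{diag}(\bm p)-\bm p\bm p^\top$ has operator norm at most $1/2$). The descent lemma combined with the zero gradient at $\bar{\bm u}^*$ gives
\[
r(\bar{\bm u})-r(\bar{\bm u}^*)\le \tfrac{L_\ell}{2}\|\bar{\bm u}-\bar{\bm u}^*\|_2^2 .
\]
This holds for arbitrary finite logits. For the lower bound, $\mathcal U_R$ is convex, being a Euclidean ball intersected with a subspace. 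So if $\bar{\bm u},\bar{\bm u}^*\in\mathcal U_R$, the whole segment between them lies in $\mathcal U_R$, where $r$ is $\mu_\ell$-strongly convex. The strong-convexity inequality together with the zero gradient then yields the $\tfrac{\mu_\ell}{2}$ lower bound. Taking expectations of both pointwise bounds finishes the proof.

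\textbf{Main obstacle.} The delicate point is the gauge. Strong convexity fails along $\bm 1$, so both the curvature statement and the first-order optimality must be phrased on $\mathcal S_C$, and the reduction $r(\bm v)=r(\bm\Pi_C\bm v)$ must be checked before either inequality is invoked. This reduction is also exactly why $\mathcal D_\ell$ is defined with $\bm\Pi_C$ rather than with raw logit differences.
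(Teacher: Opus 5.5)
Your proposal is correct and follows essentially the same route as the paper's proof. Both arguments pass to the conditional cross-entropy risk as a function of the centered logit via softmax shift-invariance, use that the restricted first-order term vanishes at the Bayes logit $\bm\Pi_C f^*(\bm Z_c,c)$, and apply $\mu_\ell$-strong convexity on $\mathcal U_R$ and global $L_\ell$-smoothness on $\mathcal S_C$ pointwise before taking expectations.

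Your explicit gradient $\mathrm{Softmax}(\bm v)-\bm p^*$ and your use of the convexity of $\mathcal U_R$ spell out two steps the paper only asserts. The one loose remark is that integrability is not automatic for unbounded logits in the upper bound. This does not matter, since the pointwise gap is a nonnegative KL divergence, so the expectations are well defined in $[0,\infty]$.
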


\begin{Lem}[Best Additive Approximation to a Multiplicative Shift]\label{lem:pooled_projection}
	Under Assumptions~\ref{app:ass:comparison_shift_full} and \ref{app:ass:glm_regularity_full}, let $\mathcal{E}_{\mathrm{pool}}$ be the infimum, over $f_{\mathrm{pool}}\in\mathcal{H}_{\mathrm{pool}}(R_W)$, of the expected centered-logit squared error relative to $f^*(\bm{Z}_c,c)$. Then
	\[
	\mathcal{E}_{\mathrm{pool}}
	=
	\mathrm{Var}(c)\,
	\|\bm{\Pi}_C\bm{\beta}_{\mathrm{shift}}\bm{\Sigma}_c^{1/2}\|_F^2 .
	\]
\end{Lem}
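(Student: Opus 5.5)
The statement to prove is Lemma~\ref{lem:pooled_projection}. The plan is to compute the unrestricted $L^2$ projection of the centered target logit onto the additive class, then check that $R_W$ is large enough to contain the optimizer, which Assumption~\ref{app:ass:comparison_shift_full} grants.

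\textbf{Step 1: reduce to one matrix problem.} Set $\bar c=\mathbb{E}c$ and write
\[
\bm\Pi_C f^*(\bm Z_c,c)=\bm\Pi_C(\bm\beta_{\mathrm{base}}+\bar c\bm\beta_{\mathrm{shift}})\bm Z_c+(c-\bar c)\bm\Pi_C\bm\beta_{\mathrm{shift}}\bm Z_c .
\]
For $f=\bm W_c\bm Z_c+\bm W_s\bm Z_s$ the residual is
\[
\bm r=\bm\Pi_C(\bm W_c-\bm\beta_{\mathrm{base}}-\bar c\bm\beta_{\mathrm{shift}})\bm Z_c-(c-\bar c)\bm\Pi_C\bm\beta_{\mathrm{shift}}\bm Z_c+\bm\Pi_C\bm W_s\bm Z_s .
\]

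\textbf{Step 2: expand $\mathbb{E}\|\bm r\|_2^2$ and kill the cross terms.} We have $\bm Z_c\perp(c,\bm Z_s)$ and $\mathbb{E}\bm Z_c=0$.
\begin{itemize}
\item Every cross term between a $\bm Z_c$-linear piece and $\bm\Pi_C\bm W_s\bm Z_s$ has the form $\mathbb{E}[g(c,\bm Z_s)^\top \bm M\bm Z_c]=\mathbb{E}[g]^\top\bm M\,\mathbb{E}\bm Z_c=0$.
\item The cross term between the first and second pieces is $\mathbb{E}[c-\bar c]\cdot\mathrm{tr}(\cdots\bm\Sigma_c)=0$.
\end{itemize}
Hence
\[
\mathbb{E}\|\bm r\|^2=\|\bm\Pi_C(\bm W_c-\bm\beta_{\mathrm{base}}-\bar c\bm\beta_{\mathrm{shift}})\bm\Sigma_c^{1/2}\|_F^2+\mathrm{Var}(c)\|\bm\Pi_C\bm\beta_{\mathrm{shift}}\bm\Sigma_c^{1/2}\|_F^2+\mathbb{E}\|\bm\Pi_C\bm W_s\bm Z_s\|^2 .
\]
The second term uses $\mathbb{E}[(c-\bar c)^2\|\bm M\bm Z_c\|^2]=\mathrm{Var}(c)\,\mathrm{tr}(\bm M\bm\Sigma_c\bm M^\top)$, again by independence.

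\textbf{Step 3: conclude.} All three terms are nonnegative, and the middle term does not depend on $(\bm W_c,\bm W_s)$. This gives the lower bound $\mathcal E_{\mathrm{pool}}\ge\mathrm{Var}(c)\|\bm\Pi_C\bm\beta_{\mathrm{shift}}\bm\Sigma_c^{1/2}\|_F^2$. Equality is attained at $\bm W_c=\bm\beta_{\mathrm{base}}+\bar c\bm\beta_{\mathrm{shift}}$ and $\bm W_s=0$, which lies in $\mathcal H_{\mathrm{pool}}(R_W)$ by the radius assumption, so the infimum is achieved and equals the stated value. Boundedness of $c$ and $\|\bm Z_c\|\le R_c$ make all expectations finite.

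\textbf{Main obstacle.} The delicate point is the cross-term bookkeeping in Step 2. It needs independence of $\bm Z_c$ from the joint pair $(c,\bm Z_s)$, not mere uncorrelatedness, because of the product $c\bm Z_c$. It also needs the centering $\mathbb{E}\bm Z_c=0$, so that $\bm W_s\bm Z_s$, which may have nonzero mean, cannot absorb any of the multiplicative term.
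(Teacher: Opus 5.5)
Your proof is correct and is essentially the paper's argument. Both use $\bm Z_c\perp(c,\bm Z_s)$ with $\mathbb{E}\bm Z_c=\bm 0$ to kill the cross terms, the identity $\mathbb{E}\|\bm M\bm Z_c\|_2^2=\|\bm M\bm\Sigma_c^{1/2}\|_F^2$, a bias--variance split of $c$ around $\bar c$, and the radius assumption to place an unrestricted minimizer in $\mathcal{H}_{\mathrm{pool}}(R_W)$. Strictly, that assumption guarantees only that \emph{some} minimizer lies in the ball, not necessarily your specific $(\bm W_c,\bm W_s)$; this suffices, since every unrestricted minimizer attains your lower bound. The only difference is ordering: you center $c$ at $\bar c$ first and get one three-term orthogonal decomposition, whereas the paper first splits off the $\bm\Pi_C\bm W_s\bm Z_s$ term and then does the bias--variance step on the remaining $\bm Z_c$ part.
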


\section{Proofs of Theoretical Statements}\label{app:proofs}
	
	This section proves the statements from the main paper and the auxiliary statements stated above. We first establish representation identifiability, then source-parameter and Wasserstein navigation bounds, target parameter and risk bounds, baseline comparisons, and finally the generic-representation perturbation bound. The arguments use standard tools from GLMs, M-estimation, concentration, optimal transport, and nearest-neighbor theory \citep{supp-mccullagh1989generalized,supp-vanderVaart1998asymptotic,supp-wainwright2019high,supp-tropp2012userfriendly,supp-boucheron2013concentration,supp-vershynin2018highdim,supp-fournier2015rate,supp-devroye1996probabilistic,supp-biau2015nearest}.
	
	\subsection{Proof of Proposition~\ref{pro:diagonal}}
	
	\begin{proof}
Let $\bm{D}_e=\bm{M}\bm{\Lambda}_e\bm{M}^\top$. By assumption, each $\bm{D}_e$ is diagonal. Since $\bm{\Lambda}_1\succ0$ and $\bm{M}$ is invertible, $\bm{D}_1\succ0$. Define
\begin{equation*}
\bm{Q}=\bm{D}_1^{-1/2}\bm{M}\bm{\Lambda}_1^{1/2}.
\end{equation*}
Then $\bm{Q}\bm{Q}^\top=\bm{I}$, so $\bm{Q}$ is orthogonal. For $e=2,\ldots,E$, let
\begin{equation*}
\bm{R}_e=\bm{\Lambda}_1^{-1/2}\bm{\Lambda}_e\bm{\Lambda}_1^{-1/2}.
\end{equation*}
Each $\bm{R}_e$ is diagonal, and
\begin{equation*}
\bm{Q}\bm{R}_e\bm{Q}^\top=\bm{D}_1^{-1/2}\bm{D}_e\bm{D}_1^{-1/2}
\end{equation*}
is diagonal. Equivalently, the rows of $\bm{Q}$ form a common orthonormal eigenbasis of the family $\{\bm{R}_e\}_{e=2}^E$.

The joint eigenvalue profile of coordinate $m$ is $\bm{\rho}_m=(\lambda_{2m}/\lambda_{1m},\ldots,\lambda_{Em}/\lambda_{1m})$. By Assumption~\ref{app:ass:linear_full}, these profiles are pairwise distinct, so every joint eigenspace of $\{\bm{R}_e\}_{e=2}^E$ is one-dimensional and coincides with a coordinate axis. Hence each row of $\bm{Q}$ has exactly one nonzero entry, and $\bm{Q}=\bm{P}\bm{S}$ for a permutation matrix $\bm{P}$ and a diagonal sign matrix $\bm{S}$. Consequently,
\begin{equation*}
\bm{M}=\bm{D}_1^{1/2}\bm{P}\bm{S}\bm{\Lambda}_1^{-1/2}
=\bm{P}\widetilde{\bm{D}},
\end{equation*}
where $\widetilde{\bm{D}}=(\bm{P}^\top\bm{D}_1^{1/2}\bm{P})\bm{S}\bm{\Lambda}_1^{-1/2}$ is nonsingular and diagonal. Renaming $\widetilde{\bm{D}}$ as $\bm{D}$ gives the result.
\end{proof}
	
	\subsection{Proof of Theorem~\ref{thm:identifiability}}
\label{app:proof:identifiability}
\begin{proof}
From Assumption~\ref{app:ass:linear_full}, the covariance of $\bm{X}^{(e)}$ decomposes as
\begin{equation*}
\bm{\Sigma}_{\bm{X}}^{(e)}=\bm{A}_c\bm{\Sigma}_c\bm{A}_c^\top+\bm{A}_s\bm{\Lambda}_e\bm{A}_s^\top+\bm{\Sigma}_{\epsilon}.
\end{equation*}
The causal covariance and noise covariance are domain-invariant, while the domain-varying style covariance is $\bm{A}_s\bm{\Lambda}_e\bm{A}_s^\top$. By the oracle diagonalization condition in Theorem~\ref{thm:identifiability}, $\bm{M}\bm{\Lambda}_e\bm{M}^\top$ is diagonal for every source domain, where $\bm{M}=\bm{W}_s\bm{A}_s$. Proposition~\ref{pro:diagonal} gives $\bm{M}=\bm{P}\bm{D}$ for a permutation $\bm{P}$ and nonsingular diagonal $\bm{D}$.

Expanding the extracted representation gives
\begin{equation*}
\widehat{\bm{Z}}_s^{(e)}=\bm{W}_s\bm{X}^{(e)}
=\bm{W}_s\bm{A}_c\bm{Z}_c^{(e)}+\bm{W}_s\bm{A}_s\bm{Z}_s^{(e)}+\bm{W}_s\bm{\epsilon}^{(e)}
=\bm{B}\bm{Z}_c^{(e)}+\bm{P}\bm{D}\bm{Z}_s^{(e)}+\widetilde{\bm{\epsilon}}^{(e)},
\end{equation*}
where $\bm{B}=\bm{W}_s\bm{A}_c$ and $\widetilde{\bm{\epsilon}}^{(e)}=\bm{W}_s\bm{\epsilon}^{(e)}$. This identifies the style mixing component up to permutation and scaling while allowing the extracted representation to contain a causal leakage term $\bm{B}\bm{Z}_c^{(e)}$.

For the leakage-removal statement, assume Condition~\ref{app:cond:mean_separation}. Taking conditional expectations and using the zero-mean independent noise gives, for any $y\in\mathrm{supp}(Y^{(e)})$,
\begin{equation*}
\widehat{\bm{m}}_s^{(e)}=\bm{B}\bm{\mu}_{e,y}+\bm{P}\bm{D}\bm{m}_s^{(e)}.
\end{equation*}
Subtracting the same equation at the reference label $y_e^0$ yields $\bm{B}(\bm{\mu}_{e,y}-\bm{\mu}_{e,y_e^0})=\bm{0}$ for every source domain and supported label. The spanning condition in Condition~\ref{app:cond:mean_separation} therefore gives $\bm{B}=\bm{0}$. When $\bm{B}=\bm{0}$, the extracted style representation itself is recovered up to permutation, coordinate-wise scaling, and projected noise.
\end{proof}
\begin{Rem}[Effect of permutation and scaling on style geometry]
Let $T=\bm{P}\bm{D}$. In the leakage-free and noise-free case, or when considering only the deterministic coordinate transform of the oracle style laws, for any two style distributions $\nu$ and $\nu'$,
\begin{equation*}
\sigma_{\min}(\bm{D})\mathcal{W}_1(\nu,\nu')
\le
\mathcal{W}_1(T_{\#}\nu,T_{\#}\nu')
\le
\|\bm{D}\|_{\mathrm{op}}\mathcal{W}_1(\nu,\nu').
\end{equation*}
Hence the oracle and transformed-oracle style geometries are bi-Lipschitz equivalent. For transformed laws $\bar\nu=T_{\#}\nu$, the map $\bar\Psi^*(\bar\nu)=\Psi^*(T_{\#}^{-1}\bar\nu)$ remains Lipschitz with constant $L_\beta/\sigma_{\min}(\bm{D})$. This statement concerns the deterministic transform $T$ alone. In the presence of projected representation noise, exact bi-Lipschitz equivalence need not hold; such residual error is instead handled by the perturbation interface in Assumption~\ref{app:ass:generic_representation_error_full}.
\end{Rem}
	
	\subsection{Proof of Lemma~\ref{lem:hessian_concentration}}
	
	\begin{proof}
		We aim to bound the maximum spectral norm of the empirical Hessian $\nabla^2 \widehat{\mathcal{R}}_e(\bm{\beta})$ deviating from the population Hessian $\nabla^2 \mathcal{R}_e(\bm{\beta})$ over the entire compact parameter space $\mathcal{B}$. The argument combines a finite covering of $\mathcal{B}$, pointwise matrix concentration on the cover, and Lipschitz extension to the full parameter space \citep{supp-tropp2012userfriendly,supp-vershynin2018highdim}.
		
		\textbf{Covering argument.}
		The parameter space $\mathcal{B} \subset \mathbb{R}^{C \times d_c}$ is a bounded compact convex set, and for any $\bm{\beta} \in \mathcal{B}$, $\|\bm{\beta}\|_F \le R_\beta$. The effective dimension of this centered-gauge space is $d_\beta=(C-1)d_c$. Standard covering number bounds in high-dimensional geometry \citep{supp-vershynin2018highdim} imply that, for any precision $\epsilon > 0$, there exists a finite subset $\mathcal{B}_\epsilon \subset \mathcal{B}$ (an $\epsilon$-net) such that for any $\bm{\beta} \in \mathcal{B}$, there exists at least one $\bm{\beta}' \in \mathcal{B}_\epsilon$ satisfying $\|\bm{\beta} - \bm{\beta}'\|_F \le \epsilon$. The cardinality of this $\epsilon$-net satisfies:
		\begin{equation*}
			|\mathcal{B}_\epsilon| \le \left( 1 + \frac{2 R_\beta}{\epsilon} \right)^{d_\beta}
		\end{equation*}
		
		\textbf{Pointwise concentration and union bound.}
		Fix an arbitrary point $\bm{\beta}' \in \mathcal{B}_\epsilon$. The empirical Hessian is the average of $n_e$ independent random matrices. For the multi-class cross-entropy loss, its single-sample Hessian has the Kronecker-product form: $\nabla^2 \ell = (\mathrm{diag}(\bm{p}) - \bm{p}\bm{p}^\top) \otimes (\bm{Z}_c \bm{Z}_c^\top)$, where $\bm{p}$ is the predicted probability vector.
		
		By the spectral norm property of the Kronecker product, $\|\nabla^2 \ell\|_2 = \|\mathrm{diag}(\bm{p}) - \bm{p}\bm{p}^\top\|_2 \cdot \|\bm{Z}_c \bm{Z}_c^\top\|_2$. The simplex covariance has maximum eigenvalue at most $\frac{1}{2}$ (achieved in binary classification with uniform probabilities), and $\|\bm{Z}_c\|_2 \le R_c$; hence the single-sample Hessian has spectral norm at most $M = \frac{1}{2}R_c^2$, independent of the number of classes $C$. 
		
		However, the effective matrix dimension $d_\beta=(C-1)d_c$ still reflects the identifiable complexity of the multi-class task. The matrix Hoeffding inequality \citep{supp-tropp2012userfriendly} gives, for a single point $\bm{\beta}'$, a deviation bound that depends on $d_\beta$:
		\begin{equation*}
			\mathbb{P} \left( \left\| \nabla^2 \widehat{\mathcal{R}}_e(\bm{\beta}') - \nabla^2 \mathcal{R}_e(\bm{\beta}') \right\|_2 \ge \tau \right) \le 2d_\beta \exp \left( - \frac{n_e \tau^2}{2 M^2} \right)
		\end{equation*}
		Applying the union bound scales this probability across the entire $\epsilon$-net $\mathcal{B}_\epsilon$:
		\begin{equation*}
			\mathbb{P} \left( \sup_{\bm{\beta}' \in \mathcal{B}_\epsilon} \left\| \nabla^2 \widehat{\mathcal{R}}_e(\bm{\beta}') - \nabla^2 \mathcal{R}_e(\bm{\beta}') \right\|_2 \ge \tau \right) \le 2d_\beta \left( 1 + \frac{2 R_\beta}{\epsilon} \right)^{d_\beta} \exp \left( - \frac{n_e \tau^2}{2 M^2} \right)
		\end{equation*}
		
		\textbf{Extension to the full parameter space.}
		For any point $\bm{\beta} \in \mathcal{B}$ in the full space, since $\mathcal{B}_\epsilon$ is an $\epsilon$-net, there must exist a $\bm{\beta}' \in \mathcal{B}_\epsilon$ such that $\|\bm{\beta} - \bm{\beta}'\|_F \le \epsilon$. Using the triangle inequality to decompose the deviation:
		\begin{equation*}
			\begin{aligned}
				\left\| \nabla^2 \widehat{\mathcal{R}}_e(\bm{\beta}) - \nabla^2 \mathcal{R}_e(\bm{\beta}) \right\|_2 \le & \left\| \nabla^2 \widehat{\mathcal{R}}_e(\bm{\beta}) - \nabla^2 \widehat{\mathcal{R}}_e(\bm{\beta}') \right\|_2 \\
				& + \left\| \nabla^2 \widehat{\mathcal{R}}_e(\bm{\beta}') - \nabla^2 \mathcal{R}_e(\bm{\beta}') \right\|_2 + \left\| \nabla^2 \mathcal{R}_e(\bm{\beta}') - \nabla^2 \mathcal{R}_e(\bm{\beta}) \right\|_2 .
			\end{aligned}
		\end{equation*}
		By Assumption~\ref{app:ass:glm_regularity_full}, the Hessian matrix of the multi-class cross-entropy is $L_H$-Lipschitz continuous. Thus, both the empirical and population deviations are controlled:
		\begin{equation*}
			\left\| \nabla^2 \widehat{\mathcal{R}}_e(\bm{\beta}) - \nabla^2 \widehat{\mathcal{R}}_e(\bm{\beta}') \right\|_2 \le L_H \|\bm{\beta} - \bm{\beta}'\|_F \le L_H \epsilon
		\end{equation*}
		\begin{equation*}
			\left\| \nabla^2 \mathcal{R}_e(\bm{\beta}) - \nabla^2 \mathcal{R}_e(\bm{\beta}') \right\|_2 \le L_H \|\bm{\beta} - \bm{\beta}'\|_F \le L_H \epsilon
		\end{equation*}
		Substituting these into the triangle inequality and taking the supremum yields:
		\begin{equation*}
			\sup_{\bm{\beta} \in \mathcal{B}} \left\| \nabla^2 \widehat{\mathcal{R}}_e(\bm{\beta}) - \nabla^2 \mathcal{R}_e(\bm{\beta}) \right\|_2 \le \sup_{\bm{\beta}' \in \mathcal{B}_\epsilon} \left\| \nabla^2 \widehat{\mathcal{R}}_e(\bm{\beta}') - \nabla^2 \mathcal{R}_e(\bm{\beta}') \right\|_2 + 2 L_H \epsilon
		\end{equation*}
		
		\textbf{Choice of parameters.}
		To bound the error over the entire space by $\frac{\mu}{2}$, we set $2 L_H \epsilon = \frac{\mu}{4}$, yielding the grid precision $\epsilon = \frac{\mu}{8 L_H}$. Accordingly, the deviation threshold at the grid points is required to be $\tau = \frac{\mu}{4}$. Substituting this into the pointwise concentration bound and requiring the failure probability to be at most $\delta$ gives:
		\begin{equation*}
			2d_\beta \left( 1 + \frac{16 R_\beta L_H}{\mu} \right)^{d_\beta} \exp \left( - \frac{n_e (\mu/4)^2}{2 M^2} \right) \le \delta
		\end{equation*}
		Taking the natural logarithm on both sides and rearranging to solve for the lower bound of the sample size $n_e$:
		\begin{equation*}
			n_e \ge \frac{32 M^2}{\mu^2} \left( d_\beta \log\left( 1 + \frac{16 R_\beta L_H}{\mu} \right) + \log(2d_\beta) + \log\left(\frac{1}{\delta}\right) \right)
		\end{equation*}
		Equivalently, up to a universal constant depending only on the uniform Hessian bound,
		\begin{equation*}
			n_e \ge \frac{C}{\mu^2}\left[d_\beta\log\left(1+\frac{R_\beta L_H}{\mu}\right)+\log\frac{d_\beta}{\delta}\right].
		\end{equation*}
		Under this sample size condition, the event $\mathcal{E}$ holds with a probability of at least $1-\delta$.
	\end{proof}
	
	\subsection{Proof of Lemma~\ref{lem:source_estimation}}
	
	\begin{proof}
		\textbf{Constrained quadratic control.}
		Set
		\[
		\Delta_e=\widehat{\bm{\beta}}^{(e)}-\bm{\beta}_*^{(e)}.
		\]
		We set the probability decay level to $\delta=n_e^{-k}$ for some $k\ge2$. Under the stated coverage-complexity condition, Lemma~\ref{lem:hessian_concentration} holds for sufficiently large $n_e$. On the event $\mathcal{E}$, the empirical risk is $\mu/2$-strongly convex on $\mathcal{B}$. Therefore,
		\[
		\widehat{\mathcal{R}}_e(\widehat{\bm{\beta}}^{(e)})
		\ge
		\widehat{\mathcal{R}}_e(\bm{\beta}_*^{(e)})
		+
		\left\langle
		\nabla\widehat{\mathcal{R}}_e(\bm{\beta}_*^{(e)}),
		\Delta_e
		\right\rangle
		+
		\frac{\mu}{4}\|\Delta_e\|_F^2 .
		\]
		Because $\widehat{\bm{\beta}}^{(e)}$ minimizes $\widehat{\mathcal{R}}_e$ over the convex set $\mathcal{B}$ and $\bm{\beta}_*^{(e)}\in\mathcal{B}$,
		\[
		\widehat{\mathcal{R}}_e(\widehat{\bm{\beta}}^{(e)})
		\le
		\widehat{\mathcal{R}}_e(\bm{\beta}_*^{(e)}).
		\]
		Thus, on $\mathcal{E}$,
		\[
		\frac{\mu}{4}\|\Delta_e\|_F^2
		\le
		-\left\langle
		\nabla\widehat{\mathcal{R}}_e(\bm{\beta}_*^{(e)}),
		\Delta_e
		\right\rangle
		\le
		\|\nabla\widehat{\mathcal{R}}_e(\bm{\beta}_*^{(e)})\|_F\|\Delta_e\|_F,
		\]
		and hence
		\[
		\|\widehat{\bm{\beta}}^{(e)}-\bm{\beta}_*^{(e)}\|_F
		\le
		\frac{4}{\mu}\|\nabla\widehat{\mathcal{R}}_e(\bm{\beta}_*^{(e)})\|_F .
		\]
		This argument uses only constrained optimality and does not require the empirical minimizer to be an interior point.
		
		\textbf{Expected empirical gradient size.}
		Let
		\[
			\bm{G}_i
			=
			\nabla_{\bm{\beta}}
			\ell(\bm{\beta}_*^{(e)};\bm{Z}_{c,i}^{(e)},Y_i^{(e)}),
			\qquad
			\bar{\bm{G}}_e
			=
			\frac{1}{n_e}\sum_{i=1}^{n_e}\bm{G}_i
			=
			\nabla \widehat{\mathcal{R}}_e(\bm{\beta}_*^{(e)}).
		\]
		Assumption~\ref{app:ass:glm_regularity_full} gives $\mathbb{E}\bm{G}_i=\nabla\mathcal{R}_e(\bm{\beta}_*^{(e)})=\bm{0}$.
		Independence across samples removes all cross terms:
		\begin{align*}
			\mathbb{E}\|\bar{\bm{G}}_e\|_F^2
			&=
			\frac{1}{n_e}
			\mathbb{E}
			\left[
			\left\|
			\nabla_{\bm{\beta}}
			\ell(\bm{\beta}_*^{(e)};\bm{Z}_{c},Y)
			\right\|_F^2
			\right].
		\end{align*}
		For multi-class cross-entropy,
		\[
			\nabla_{\bm{\beta}}\ell(\bm{\beta};\bm{Z}_c,Y)
			=
			(\bm{p}_{\bm{\beta}}(\bm{Z}_c)-\bm{e}_Y)\bm{Z}_c^\top,
		\]
		so $\|\nabla_{\bm{\beta}}\ell(\bm{\beta};\bm{Z}_c,Y)\|_F\le\sqrt{2}R_c$. Therefore,
		\[
		\mathbb{E}\|\nabla\widehat{\mathcal{R}}_e(\bm{\beta}_*^{(e)})\|_F
		\le
		\frac{\sqrt{2}R_c}{\sqrt{n_e}}.
		\]
		
		\textbf{Expectation over the high-probability event.}
		Let $X_e=\|\widehat{\bm{\beta}}^{(e)}-\bm{\beta}_*^{(e)}\|_F$. On $\mathcal{E}$,
		\[
		\mathbb{E}[X_e\mathbf{1}_{\mathcal{E}}]
		\le
		\frac{4}{\mu}\mathbb{E}\|\nabla\widehat{\mathcal{R}}_e(\bm{\beta}_*^{(e)})\|_F
		\le
		\frac{4\sqrt{2}R_c}{\mu\sqrt{n_e}}.
		\]
		On $\mathcal{E}^c$, compactness gives $X_e\le2R_\beta$ and $\mathbb{P}(\mathcal{E}^c)\le n_e^{-k}$, so
		\[
		\mathbb{E}[X_e\mathbf{1}_{\mathcal{E}^c}]
		\le
		2R_\beta n_e^{-k}.
		\]
		The polynomial tail is lower order for $k\ge2$, which gives
		\[
		\mathbb{E}\|\widehat{\bm{\beta}}^{(e)}-\bm{\beta}_*^{(e)}\|_F
		\le
		\mathcal{O}\!\left(\frac{R_c}{\mu\sqrt{n_e}}\right).
		\]
	\end{proof}
	
	\subsection{Proof of Lemma~\ref{lem:estimator_tail}}
	\begin{proof}
		On the event $\mathcal{E}_e$ from Lemma~\ref{lem:hessian_concentration}, the empirical risk in source domain $e$ is $\mu/2$-strongly convex on $\mathcal{B}$. The constrained-optimality argument in Lemma~\ref{lem:source_estimation} gives
		\begin{equation*}
		\|\widehat{\bm{\beta}}^{(e)}-\bm{\beta}_*^{(e)}\|_F
		\le
		\frac{4}{\mu}\|\nabla\widehat{\mathcal{R}}_e(\bm{\beta}_*^{(e)})\|_F .
		\end{equation*}
		Choose the Hessian failure level $\delta/(2E)$ in Lemma~\ref{lem:hessian_concentration} and union bound over $e=1,\ldots,E$; the stated lower bound on $n_{\min}$ ensures $\mathbb{P}(\cap_e\mathcal{E}_e)\ge1-\delta/2$.
		
		It remains to control the empirical gradients. Let
		\[
		\bm{g}_i^{(e)}=\mathrm{vec}\!\left(\nabla_{\bm{\beta}}\ell(\bm{\beta}_*^{(e)};\bm{Z}_{c,i}^{(e)},Y_i^{(e)})\right)\in\mathbb{R}^{d_\beta}.
		\]
		Assumption~\ref{app:ass:glm_regularity_full} gives $\mathbb{E}\bm{g}_i^{(e)}=\bm{0}$, and the bounded-feature assumption gives $\|\bm{g}_i^{(e)}\|_2\le \sqrt{2}R_c$ for multi-class cross-entropy. A vector-valued Hoeffding bound therefore implies, for each source domain,
		\[
		\mathbb{P}\!\left(
		\left\|\frac{1}{n_e}\sum_{i=1}^{n_e}\bm{g}_i^{(e)}\right\|_2
		>
		C R_c\sqrt{\frac{d_\beta+\log(2E/\delta)}{n_e}}
		\right)
		\le \frac{\delta}{2E}.
		\]
		A second union bound shows that, with probability at least $1-\delta$, both the Hessian event and the gradient event hold for all sources. Since $n_e\ge n_{\min}$, on this event
		\[
		Y:=\max_{1\le e\le E}\|\widehat{\bm{\beta}}^{(e)}-\bm{\beta}_*^{(e)}\|_F
		\le
		C\frac{R_c}{\mu}\sqrt{\frac{d_\beta+\log(E/\delta)}{n_{\min}}},
		\]
		which proves the high-probability statement after absorbing constants.
		
        For the expectation bounds, set $\delta=e^{-t}$. The high-probability sample-size condition then has the form
        \[
        n_{\min}
        \ge
        \frac{C}{\mu^2}
        \left[
        d_\beta\log\left(1+\frac{R_\beta L_H}{\mu}\right)
        +
        \log(d_\beta E)
        +
        t
        \right].
        \]
        Equivalently, after adjusting constants, this condition is satisfied whenever
        \[
        0\le t\le t_{\max},
        \qquad
        t_{\max}
        =
        c_0\mu^2n_{\min}
        -
        d_\beta\log\left(1+\frac{R_\beta L_H}{\mu}\right)
        -
        \log(c_1d_\beta E).
        \]
        Therefore,
        \[
        \mathbb{P}\!\left(
        Y>
        C\frac{R_c}{\mu}
        \sqrt{\frac{d_\beta+\log E+t}{n_{\min}}}
        \right)
        \le e^{-t},
        \qquad 0\le t\le t_{\max}.
        \]
        Since compactness gives $Y\le2R_\beta$, tail integration yields
        \[
        \mathbb{E}Y
        \le
        C\frac{R_c}{\mu}
        \sqrt{\frac{d_\beta+\log E}{n_{\min}}}
        +
        2R_\beta e^{-t_{\max}},
        \]
        and
        \[
        \mathbb{E}Y^2
        \le
        C\frac{R_c^2(d_\beta+\log E)}{\mu^2n_{\min}}
        +
        4R_\beta^2e^{-t_{\max}}.
        \]
        Moreover,
        \[
        e^{-t_{\max}}
        \le
        c_1d_\beta E
        \left(1+\frac{R_\beta L_H}{\mu}\right)^{d_\beta}
        e^{-c_0\mu^2n_{\min}},
        \]
        In the fixed-complexity regime, this exponential remainder is absorbed into the displayed rates for sufficiently large $n_{\min}$.
	\end{proof}
	
	\subsection{Proof of Lemma~\ref{lem:wasserstein_concentration}}
	
	\begin{proof}
		Let
		\[
			X_e
			=
			\mathcal{W}_1(\nu_e,\widehat{\nu}_e).
		\]
		For compactly supported measures in $\mathbb{R}^{d_s}$, empirical optimal-transport theory gives constants $C,c>0$, independent of $n_e$, such that
		\begin{equation}
			\mathbb{E}X_e
			\le
			C\gamma(n_e,d_s),
			\label{eq:w1_mean_single}
		\end{equation}
		where $\gamma(\cdot,\cdot)$ is the rate shorthand defined above
		\citep{supp-fournier2015rate,supp-villani2009optimal}. The same result gives the following centered tails:
		\begin{equation}
			\mathbb{P}\!\left(
			X_e>C\gamma(n_e,d_s)+t
			\right)
			\le
			\begin{cases}
				C\exp(-c n_e t^2), & d_s<2,\\
				C\exp\!\left(-c n_e t^2/(\log(1+n_e))^2\right), & d_s=2,\\
				C\exp(-c n_e t^{d_s}), & d_s>2.
			\end{cases}
			\label{eq:w1_tail_single}
		\end{equation}
		Because $n_e\ge n_{\min}$ and $\gamma(n,d_s)$ decreases in $n$, for any $t>0$,
		\begin{align}
			&\mathbb{P}\left(
			\max_{1\le e\le E}X_e
			>
			C\gamma(n_{\min},d_s)+t
			\right) \notag\\
			&\quad\le
			\sum_{e=1}^E
			\mathbb{P}\left(
			X_e
			>
			C\gamma(n_e,d_s)+t
			\right).
			\label{eq:w1_union_bound}
		\end{align}
		Substituting \eqref{eq:w1_tail_single} into \eqref{eq:w1_union_bound}, the threshold at which the union-bound tail becomes order one is
		\begin{equation}
			T_E(n_{\min},d_s)
			=
			\begin{cases}
                C\sqrt{\frac{1+\log E}{n_{\min}}}, & d_s<2,\\[3pt]
                C\log(1+n_{\min})\sqrt{\frac{1+\log E}{n_{\min}}}, & d_s=2,\\[3pt]
                C\left(\frac{1+\log E}{n_{\min}}\right)^{1/d_s}, & d_s>2.
			\end{cases}
			\label{eq:w1_threshold}
		\end{equation}
		This threshold is exactly the source-side extreme penalty
		$\Gamma(n_{\min},E,d_s)$ up to constants.
		
		We now integrate the tail explicitly. Set
		\[
			M_E
			=
			\max_{1\le e\le E}X_e,
			\qquad
			a_E
			=
			C\gamma(n_{\min},d_s)+T_E(n_{\min},d_s).
		\]
		By the layer-cake identity,
		\begin{align*}
			\mathbb{E}M_E
			&=
			\int_0^\infty
			\mathbb{P}(M_E>s)\,ds\\
			&\le
			a_E
			+
			\int_{a_E}^{\infty}
			\mathbb{P}(M_E>s)\,ds.
		\end{align*}
		For example, in the case $d_s>2$, write
		$s=C\gamma(n_{\min},d_s)+r$ and use \eqref{eq:w1_union_bound}:
		\begin{align*}
			\int_{a_E}^{\infty}
			\mathbb{P}(M_E>s)\,ds
			&\le
			CE
			\int_{T_E}^{\infty}
			\exp(-c n_{\min} r^{d_s})\,dr\\
			&=
			CE
			\int_{1+\log E}^{\infty}
			\exp(-cu)\,
			\frac{1}{d_s}
			n_{\min}^{-1/d_s}
			u^{1/d_s-1}
			\,du\\
			&\le
			C
			\left(
			\frac{1+\log E}{n_{\min}}
			\right)^{1/d_s}.
		\end{align*}
		The cases $d_s<2$ and $d_s=2$ are identical with the changes of variables
		$u=n_{\min}r^2$ and
		$u=n_{\min}r^2/(\log(1+n_{\min}))^2$, respectively. Hence
		\begin{equation}
			\mathbb{E}M_E
			\le
			\mathcal{O}\!\left(
			\gamma(n_{\min},d_s)
			+
			\Gamma(n_{\min},E,d_s)
			\right).
			\label{eq:w1_max_first_moment}
		\end{equation}
		
		The second moment follows from the same calculation. Indeed,
		\begin{align*}
			\mathbb{E}M_E^2
			&=
			\int_0^\infty
			\mathbb{P}(M_E^2>s)\,ds
			=
			\int_0^\infty
			2r\,\mathbb{P}(M_E>r)\,dr\\
			&\le
			Ca_E^2
			+
			C
			\left[
			\Gamma(n_{\min},E,d_s)
			\right]^2,
		\end{align*}
		which gives
		\begin{equation}
			\mathbb{E}M_E^2
			\le
			\mathcal{O}\!\left(
			\gamma(n_{\min},d_s)^2
			+
			\Gamma(n_{\min},E,d_s)^2
			\right).
			\label{eq:w1_max_second_moment}
		\end{equation}
		Equations \eqref{eq:w1_mean_single}, \eqref{eq:w1_max_first_moment}, and
		\eqref{eq:w1_max_second_moment} prove the lemma.
	\end{proof}
	
	\subsection{Proof of Lemma~\ref{lem:wasserstein_approx}}
	
	\begin{proof}
	We decouple the quantity in the lemma into target-domain empirical error and source-domain empirical errors:
    \begin{equation*}
    \begin{aligned}
    &L_\beta\sum_{e \in \widehat{\mathcal{N}}_K(0)}
    w_e\{\mathcal{W}_1(\nu_e,\widehat{\nu}_e)+\mathcal{W}_1(\widehat{\nu}_0,\nu_0)\}\\
    &=L_\beta\sum_{e \in \widehat{\mathcal{N}}_K(0)}w_e\,\mathcal{W}_1(\widehat{\nu}_0,\nu_0) + L_\beta\sum_{e \in \widehat{\mathcal{N}}_K(0)}w_e\,\mathcal{W}_1(\nu_e,\widehat{\nu}_e).
    \end{aligned}
    \end{equation*}
	
	Since the target domain's true measure $\nu_0$ and empirical measure $\widehat{\nu}_0$ are the same for every source index $e$, we extract $\mathcal{W}_1(\widehat{\nu}_0, \nu_0)$ from the summation. Due to the normalization property of the reweighted estimator ($\sum w_e = 1$), taking the mathematical expectation yields:
	\begin{equation*}
	\sum_{e \in \widehat{\mathcal{N}}_K(0)} w_e \mathcal{W}_1(\widehat{\nu}_0, \nu_0) = \mathcal{W}_1(\widehat{\nu}_0, \nu_0) \sum_{e \in \widehat{\mathcal{N}}_K(0)} w_e
	\end{equation*}
	\begin{equation*}
	\mathbb{E} \left[ \sum_{e \in \widehat{\mathcal{N}}_K(0)} w_e \mathcal{W}_1(\widehat{\nu}_0, \nu_0) \right]
	= \mathbb{E} \left[ \mathcal{W}_1(\widehat{\nu}_0, \nu_0) \right]
	\le \mathcal{O}\left( \gamma(n_0, d_s) \right),
	\end{equation*}
	where the last inequality follows from Lemma~\ref{lem:wasserstein_concentration}.
	
	For the source part, non-negativity and normalization of the weights imply the pointwise bound
	\begin{equation*}
	\sum_{e \in \widehat{\mathcal{N}}_K(0)} w_e \mathcal{W}_1(\nu_e, \widehat{\nu}_e) \le \max_{e \in \widehat{\mathcal{N}}_K(0)} \mathcal{W}_1(\nu_e, \widehat{\nu}_e) \le \max_{1 \le e \le E} \mathcal{W}_1(\nu_e, \widehat{\nu}_e)
	\end{equation*}
	and Lemma~\ref{lem:wasserstein_concentration} gives
	\begin{equation*}
	\mathbb{E} \left[ \sum_{e \in \widehat{\mathcal{N}}_K(0)} w_e \mathcal{W}_1(\nu_e, \widehat{\nu}_e) \right]
	\le
	\mathcal{O}\left(\gamma(n_{\min},d_s)+\Gamma(n_{\min},E,d_s)\right).
	\end{equation*}
	Because $n_{\min}\le n_e$ for all source domains, $\gamma(n_{\min},d_s)$ is dominated by the source-side extreme term in the asymptotic notation used in the theorem. Adding the target and source parts and absorbing the fixed constant $L_\beta$ yields
	\begin{equation*}
	\mathbb{E}\!\left[
	L_\beta\sum_{e\in\widehat{\mathcal{N}}_K(0)}
	w_e\{\mathcal{W}_1(\nu_e,\widehat{\nu}_e)+\mathcal{W}_1(\widehat{\nu}_0,\nu_0)\}
	\right]
	\le \mathcal{O}\Big( \gamma(n_0, d_s) + \Gamma(n_{\min}, E, d_s) \Big).
	\end{equation*}
	\end{proof}
	
	\subsection{Proof of Lemma~\ref{lem:knn_interpolation}}
	
	\begin{proof}
	\textbf{Weighted radius reduction.}
	Since $w_e \ge 0$ and $\sum w_e = 1$, the weighted convex combination is bounded by the maximum distance within the selected set. Let $\widehat{R}_K$ be the distance of the $K$-th nearest neighbor found by the algorithm in the empirical space (i.e., the empirical KNN radius):
	\begin{equation*}
	L_\beta \sum_{e \in \widehat{\mathcal{N}}_K(0)} w_e \mathcal{W}_1(\widehat{\nu}_e, \widehat{\nu}_0)
	\le L_\beta \max_{e \in \widehat{\mathcal{N}}_K(0)} \mathcal{W}_1(\widehat{\nu}_e, \widehat{\nu}_0) := L_\beta \widehat{R}_K .
	\end{equation*}
	
	\textbf{From empirical neighbors to true neighbors.}
	We define the set $\mathcal{N}^*_K(0)$ as the $K$ true nearest neighbors of the target domain $\nu_0$ on the true measure manifold $\mathcal{M}$. By construction, $\widehat{\mathcal{N}}_K(0)$ consists of the $K$ domains with the smallest empirical distances among all $E$ source domains. Thus, the maximum empirical distance among these $K$ domains, $\widehat{R}_K$, is the $K$-th order statistic of empirical distances across the global set.
	Conversely, under the empirical metric, the true nearest-neighbor set $\mathcal{N}^*_K(0)$ is another subset of size $K$ among the $E$ domains. The maximum over any subset of size $K$ is at least the maximum over the $K$ empirically closest domains. Hence, the following inequality holds deterministically:
	\begin{equation*}
	\widehat{R}_K = \max_{e \in \widehat{\mathcal{N}}_K(0)} \mathcal{W}_1(\widehat{\nu}_e, \widehat{\nu}_0) \le \max_{j \in \mathcal{N}^*_K(0)} \mathcal{W}_1(\widehat{\nu}_j, \widehat{\nu}_0)
	\end{equation*}
	Applying the Wasserstein distance triangle inequality to the true neighbor $j \in \mathcal{N}^*_K(0)$ and leveraging the sub-additivity of the maximum operator:
	\begin{equation*}
	\mathcal{W}_1(\widehat{\nu}_j, \widehat{\nu}_0) \le \mathcal{W}_1(\widehat{\nu}_j, \nu_j) + \mathcal{W}_1(\nu_j, \nu_0) + \mathcal{W}_1(\nu_0, \widehat{\nu}_0)
	\end{equation*}
	\begin{equation*}
	\widehat{R}_K \le R_K^* + \max_{j \in \mathcal{N}^*_K(0)} \mathcal{W}_1(\widehat{\nu}_j, \nu_j) + \mathcal{W}_1(\nu_0, \widehat{\nu}_0),
	\end{equation*}
	where $R_K^*:=\max_{j \in \mathcal{N}^*_K(0)}\mathcal{W}_1(\nu_j,\nu_0)$. Taking the mathematical expectation on both sides, the latter two terms are bounded by $\mathcal{O}\left( \gamma(n_0, d_s) + \Gamma(n_{\min}, E, d_s) \right)$ as established in Lemma~\ref{lem:wasserstein_approx}. The remaining task is to evaluate $\mathbb{E}[R_K^*]$.
	
	\textbf{Order statistic of the true KNN radius.}
	For the $e$-th source domain $\nu_e \sim \Pi_{\mathcal{M}}$ sampled independently from the manifold, let its scalar distance to the target domain be the random variable $R_e = \mathcal{W}_1(\nu_e, \nu_0)$. Its cumulative distribution function (CDF) is:
	\begin{equation*}
	F_R(r) = \mathbb{P}(R_e \le r) = \pi(B_{\mathcal{W}_1}(\nu_0, r))
	\end{equation*}
	where $B_{\mathcal{W}_1}(\nu_0, r)$ denotes the ball of radius $r$ centered at $\nu_0$ under the Wasserstein metric. By the Ahlfors regularity assumption on the manifold $\mathcal{M}$, for sufficiently small $r$, $F_R(r) \asymp r^{d_{\mathcal{M}}}$. Let $R_K^*$ denote the distance to the $K$-th nearest neighbor among $E$ i.i.d. samples. Then $F_R(R_K^*)$ follows the $K$-th order statistic of $E$ i.i.d. Uniform$(0,1)$ random variables. Therefore,
	\begin{equation*}
	\mathbb{E}[F_R(R_K^*)] = \frac{K}{E+1} = \mathcal{O}\left(\frac{K}{E}\right)
	\end{equation*}
	which implies
	\begin{equation*}
	\mathbb{E}[R_K^*] = \mathcal{O}\left( \left(\frac{K}{E}\right)^{\frac{1}{d_{\mathcal{M}}}} \right).
	\end{equation*}
	
	Combining the expected true radius with the empirical measure approximation error:
	\begin{equation*}
	\mathbb{E}\!\left[
	L_\beta \sum_{e \in \widehat{\mathcal{N}}_K(0)} w_e \mathcal{W}_1(\widehat{\nu}_e, \widehat{\nu}_0)
	\right]
	\le L_\beta \mathcal{O}\left( \left(\frac{K}{E}\right)^{\frac{1}{d_{\mathcal{M}}}} \right) + L_\beta \mathcal{O}\left( \gamma(n_0, d_s) + \Gamma(n_{\min}, E, d_s) \right).
	\end{equation*}
		External Lipschitz constants are absorbed, completing the proof.
	\end{proof}
	
	\subsection{Proof of Theorem~\ref{thm:param_bound}}
	
	\begin{proof}
		To quantify the parameter estimation error $\|\widehat{\bm{\beta}}^{(0)} - \bm{\beta}_*^{(0)}\|_F$, we use the truncated reweighted estimator actually computed by the algorithm,
		\begin{equation*}
			\widehat{\bm{\beta}}^{(0)}
			=
			\sum_{e \in \widehat{\mathcal{N}}_K(0)} w_e \widehat{\bm{\beta}}^{(e)},
			\qquad
			w_e\ge 0,\quad
			\sum_{e \in \widehat{\mathcal{N}}_K(0)}w_e=1.
		\end{equation*}
		The normalization of the weights allows us to write
		\begin{equation*}
			\bm{\beta}_*^{(0)}
			=
			\sum_{e \in \widehat{\mathcal{N}}_K(0)}w_e\bm{\beta}_*^{(0)}.
		\end{equation*}
		Adding and subtracting $\bm{\beta}_*^{(e)}$ inside each summand gives the exact identity
		\begin{align*}
			\widehat{\bm{\beta}}^{(0)}-\bm{\beta}_*^{(0)}
			&=
			\sum_{e \in \widehat{\mathcal{N}}_K(0)}w_e
			\left(\widehat{\bm{\beta}}^{(e)}-\bm{\beta}_*^{(0)}\right)\\
			&=
			\sum_{e \in \widehat{\mathcal{N}}_K(0)}w_e
			\left[
			\left(\widehat{\bm{\beta}}^{(e)}-\bm{\beta}_*^{(e)}\right)
			+
			\left(\bm{\beta}_*^{(e)}-\bm{\beta}_*^{(0)}\right)
			\right].
		\end{align*}
		Taking Frobenius norms, applying the triangle inequality, and using $w_e\ge 0$ yields
        \begin{align*}
            \|\widehat{\bm{\beta}}^{(0)}-\bm{\beta}_*^{(0)}\|_F
            \le&
            \sum_{e \in \widehat{\mathcal{N}}_K(0)}w_e\|\widehat{\bm{\beta}}^{(e)}-\bm{\beta}_*^{(e)}\|_F + \sum_{e \in \widehat{\mathcal{N}}_K(0)}w_e\|\bm{\beta}_*^{(e)}-\bm{\beta}_*^{(0)}\|_F.
        \end{align*}
		Assumption~\ref{app:ass:smooth_geometry_full} gives $\bm{\beta}_*^{(e)}=\Psi^*(\nu_e)$ and $\bm{\beta}_*^{(0)}=\Psi^*(\nu_0)$, so
		\begin{equation*}
			\|\bm{\beta}_*^{(e)}-\bm{\beta}_*^{(0)}\|_F
			=
			\|\Psi^*(\nu_e)-\Psi^*(\nu_0)\|_F
			\le
			L_\beta\mathcal{W}_1(\nu_e,\nu_0).
		\end{equation*}
		For each selected source domain, the Wasserstein triangle inequality gives
		\begin{align*}
			\mathcal{W}_1(\nu_e,\nu_0)
			&\le
			\mathcal{W}_1(\nu_e,\widehat{\nu}_e)
			+
			\mathcal{W}_1(\widehat{\nu}_e,\widehat{\nu}_0)
			+
			\mathcal{W}_1(\widehat{\nu}_0,\nu_0).
		\end{align*}
		Substituting the preceding two displays into the norm bound yields the tri-fold decomposition:
		\begin{align*}
			\|\widehat{\bm{\beta}}^{(0)} - \bm{\beta}_*^{(0)}\|_F
			\le&
			\underbrace{\sum_{e \in \widehat{\mathcal{N}}_K(0)} w_e \|\widehat{\bm{\beta}}^{(e)} - \bm{\beta}_*^{(e)}\|_F}_{\text{Term A}} \\
			&+
			\underbrace{L_\beta \sum_{e \in \widehat{\mathcal{N}}_K(0)} w_e \Big( \mathcal{W}_1(\nu_e, \widehat{\nu}_e) + \mathcal{W}_1(\widehat{\nu}_0, \nu_0) \Big)}_{\text{Term B}} \\
			&+
			\underbrace{L_\beta \sum_{e \in \widehat{\mathcal{N}}_K(0)} w_e \mathcal{W}_1(\widehat{\nu}_e, \widehat{\nu}_0)}_{\text{Term C}}.
		\end{align*}
		Taking the unconditional mathematical expectation $\mathbb{E}[\cdot]$ on both sides, we bound the three terms separately.
		
		For \textbf{Term A}, the possible dependence between the empirically computed neighbor set $\widehat{\mathcal{N}}_K(0)$ and the parameter estimators is handled by the deterministic maximum bound
		\begin{equation*}
			\sum_{e \in \widehat{\mathcal{N}}_K(0)} w_e \|\widehat{\bm{\beta}}^{(e)} - \bm{\beta}_*^{(e)}\|_F \le \max_{e \in \widehat{\mathcal{N}}_K(0)} \|\widehat{\bm{\beta}}^{(e)} - \bm{\beta}_*^{(e)}\|_F \le \max_{1 \le e \le E} \|\widehat{\bm{\beta}}^{(e)} - \bm{\beta}_*^{(e)}\|_F
		\end{equation*}
		Lemma~\ref{lem:estimator_tail} then directly implies
		\begin{equation*}
			\mathbb{E}[\text{Term A}]
			\le
			\mathcal{O}\left(\frac{R_c}{\mu}\sqrt{\frac{d_\beta+\log E}{n_{\min}}}\right).
		\end{equation*}

		For \textbf{Term B}, Lemma~\ref{lem:wasserstein_approx} gives the explicit empirical-measure calculation
		\begin{align*}
			\mathbb{E}[\text{Term B}]
			&=
			L_\beta\,
			\mathbb{E}
			\left[
			\sum_{e\in\widehat{\mathcal{N}}_K(0)}
			w_e
			\left\{
			\mathcal{W}_1(\nu_e,\widehat{\nu}_e)
			+
			\mathcal{W}_1(\widehat{\nu}_0,\nu_0)
			\right\}
			\right]\\
			&\le
			L_\beta
			\left[
			\mathbb{E}
			\max_{1\le e\le E}
			\mathcal{W}_1(\nu_e,\widehat{\nu}_e)
			+
			\mathbb{E}
			\mathcal{W}_1(\widehat{\nu}_0,\nu_0)
			\right]\\
			&\le
			\mathcal{O}\left(
			\Gamma(n_{\min},E,d_s)
			+
			\gamma(n_0,d_s)
			\right).
		\end{align*}
		The first inequality uses $w_e\ge0$ and $\sum_e w_e=1$; the second uses Lemma~\ref{lem:wasserstein_concentration}.
		
		For \textbf{Term C}, Lemma~\ref{lem:knn_interpolation} gives
		\begin{align*}
			\mathbb{E}[\text{Term C}]
			&=
			L_\beta\,
			\mathbb{E}
			\left[
			\sum_{e\in\widehat{\mathcal{N}}_K(0)}
			w_e
			\mathcal{W}_1(\widehat{\nu}_e,\widehat{\nu}_0)
			\right]\\
			&\le
			L_\beta\,
			\mathbb{E}\widehat R_K\\
			&\le
			L_\beta
			\mathbb{E}R_K^*
			+
			L_\beta
			\mathbb{E}
			\max_{1\le e\le E}
			\mathcal{W}_1(\widehat{\nu}_e,\nu_e)
			+
			L_\beta
			\mathbb{E}
			\mathcal{W}_1(\widehat{\nu}_0,\nu_0)\\
			&\le
			\mathcal{O}\left(
			\left(\frac{K}{E}\right)^{1/d_{\mathcal{M}}}
			+
			\Gamma(n_{\min},E,d_s)
			+
			\gamma(n_0,d_s)
			\right).
		\end{align*}
		Here $\widehat R_K$ is the empirical top-$K$ radius and $R_K^*$ is the oracle top-$K$ radius on the population space of style distributions; the last line uses the order-statistic calculation in Lemma~\ref{lem:knn_interpolation}.
		
		Summing the three expected bounds gives
        \begin{align*}
            \mathbb{E}
            \left[
            \left\|
            \widehat{\bm{\beta}}^{(0)}
            -
            \bm{\beta}_*^{(0)}
            \right\|_F
            \right]
            \le\;&
            \mathcal{O}\left(
            \frac{R_c}{\mu}\sqrt{\frac{d_\beta+\log E}{n_{\min}}}
            \right)
            +
            \mathcal{O}\left(
            \gamma(n_0,d_s)
            +
            \Gamma(n_{\min},E,d_s)
            \right)\\
            &+
            \mathcal{O}\left(
            \left(\frac{K}{E}\right)^{1/d_{\mathcal{M}}}
            \right),
        \end{align*}
		which is the claimed parameter-estimation bound.
	\end{proof}
	
	\subsection{Proof of Theorem~\ref{thm:excess_risk}}
	
	\begin{proof}
		The population risk $\mathcal{R}_0$ is conditioned on the unlabeled target summary used to choose source weights; in the benchmark, this summary is formed once from the evaluation split.
		\textbf{Smoothness reduction.}
        By Assumption~\ref{app:ass:glm_regularity_full}, $\bm{\beta}_*^{(0)}$ is a relative-interior population optimum and $\nabla_{\mathcal{S}_\beta}\mathcal{R}_0(\bm{\beta}_*^{(0)})=\bm{0}$. Set
        \[
        \Delta_0:=\widehat{\bm{\beta}}^{(0)}-\bm{\beta}_*^{(0)}.
        \]
        Since both $\widehat{\bm{\beta}}^{(0)}$ and $\bm{\beta}_*^{(0)}$ belong to $\mathcal{B}\subset\mathcal{S}_\beta$, we have $\Delta_0\in\mathcal{S}_\beta$. Hence the first-order term vanishes in the centered-gauge tangent space:
        \[
        \left\langle
        \nabla\mathcal{R}_0(\bm{\beta}_*^{(0)}),\Delta_0
        \right\rangle
        =
        \left\langle
        \nabla_{\mathcal{S}_\beta}\mathcal{R}_0(\bm{\beta}_*^{(0)}),\Delta_0
        \right\rangle
        =0.
        \]
        A second-order Taylor expansion of the target-domain expected risk at $\bm{\beta}_*^{(0)}$ gives
        \begin{equation*}
            \mathcal{R}_0(\widehat{\bm{\beta}}^{(0)}) - \mathcal{R}_0(\bm{\beta}_*^{(0)}) = \frac{1}{2}\operatorname{vec}(\Delta_0)^\top \nabla^2 \mathcal{R}_0(\tilde{\bm{\beta}}) \operatorname{vec}(\Delta_0)
        \end{equation*}
		where $\tilde{\bm{\beta}}$ lies between $\widehat{\bm{\beta}}^{(0)}$ and $\bm{\beta}_*^{(0)}$. Based on standard properties of matrix operator quadratic forms, the second-order expansion term is bounded by the product of the population Hessian matrix's spectral norm and the squared $L_2$ norm of the vector:
		\begin{equation*}
			\frac{1}{2} \text{vec}(\widehat{\bm{\beta}}^{(0)} - \bm{\beta}_*^{(0)})^\top \nabla^2 \mathcal{R}_0(\tilde{\bm{\beta}}) \text{vec}(\widehat{\bm{\beta}}^{(0)} - \bm{\beta}_*^{(0)}) \le \frac{1}{2} \|\nabla^2 \mathcal{R}_0(\tilde{\bm{\beta}})\|_2 \cdot \|\text{vec}(\widehat{\bm{\beta}}^{(0)} - \bm{\beta}_*^{(0)})\|_2^2
		\end{equation*}
		We note that the single-sample Hessian matrix structure for the multi-class cross-entropy loss is:
		\begin{equation*}
			\nabla^2_{\bm{\beta}} \ell(\bm{\beta}; \bm{Z}_c, Y) = (\mathrm{diag}(\bm{p}) - \bm{p} \bm{p}^\top) \otimes (\bm{Z}_c \bm{Z}_c^\top)
		\end{equation*}
		where $\otimes$ denotes the Kronecker product. Because the maximum eigenvalue of the covariance matrix on the simplex $\mathrm{diag}(\bm{p}) - \bm{p} \bm{p}^\top$ is bounded by $\frac{1}{2}$, and $\|\bm{Z}_c\|_2 \le R_c$, the spectral norm satisfies:
		\begin{equation*}
			\|\nabla^2_{\bm{\beta}} \ell(\bm{\beta}; \bm{Z}_c, Y)\|_2 = \|\mathrm{diag}(\bm{p}) - \bm{p} \bm{p}^\top\|_2 \cdot \|\bm{Z}_c \bm{Z}_c^\top\|_2 \le \frac{1}{2} \|\bm{Z}_c\|_2^2 \le \frac{1}{2} R_c^2
		\end{equation*}
		By the sub-additivity and monotonicity of the expectation operator, the spectral norm of the population Hessian matrix is bounded over the compact convex set:
		\begin{equation*}
			\|\nabla^2 \mathcal{R}_0(\tilde{\bm{\beta}})\|_2 \le \mathbb{E}_{(\bm{Z}_c,Y) \sim \mathcal{D}^{(0)}} \left[ \|\nabla^2_{\bm{\beta}} \ell(\tilde{\bm{\beta}}; \bm{Z}_c, Y)\|_2 \right] \le \frac{1}{2} R_c^2
		\end{equation*}
		Also, the $L_2$ norm of the vectorized matrix exactly equals the Frobenius norm of the original matrix:
		\begin{equation*}
			\|\text{vec}(\widehat{\bm{\beta}}^{(0)} - \bm{\beta}_*^{(0)})\|_2^2 = \left\| \widehat{\bm{\beta}}^{(0)} - \bm{\beta}_*^{(0)} \right\|_F^2
		\end{equation*}
		Substituting this back establishes that the expected risk function is $L_s$-smooth (where $L_s = \frac{1}{2}R_c^2$) over the compact domain, which gives the smoothness bound:
		\begin{equation*}
			\mathcal{R}_0(\widehat{\bm{\beta}}^{(0)}) - \mathcal{R}_0(\bm{\beta}_*^{(0)}) \le \frac{R_c^2}{4} \left\| \widehat{\bm{\beta}}^{(0)} - \bm{\beta}_*^{(0)} \right\|_F^2
		\end{equation*}
		Taking the external expectation gives the dominant inequality:
		\begin{equation*}
			\mathbb{E}\left[ \mathcal{R}_0(\widehat{\bm{\beta}}^{(0)}) - \mathcal{R}_0(\bm{\beta}_*^{(0)}) \right] \le \frac{R_c^2}{4} \mathbb{E}\left[ \left\| \widehat{\bm{\beta}}^{(0)} - \bm{\beta}_*^{(0)} \right\|_F^2 \right]
		\end{equation*}
		
		\textbf{Second moment of the decomposed parameter error.}
		We recall the triangle inequality of the error decoupling from Theorem~\ref{thm:param_bound}:
		\begin{equation*}
			\left\| \widehat{\bm{\beta}}^{(0)} - \bm{\beta}_*^{(0)} \right\|_F \le \text{Term A} + \text{Term B} + \text{Term C}
		\end{equation*}
		Using the algebraic inequality $(a+b+c)^2 \le 3(a^2 + b^2 + c^2)$, we scale the square:
		\begin{equation*}
			\left\| \widehat{\bm{\beta}}^{(0)} - \bm{\beta}_*^{(0)} \right\|_F^2 \le 3 \left( (\text{Term A})^2 + (\text{Term B})^2 + (\text{Term C})^2 \right)
		\end{equation*}
		
		For \textbf{$(\text{Term A})^2$}, Lemma~\ref{lem:estimator_tail} gives the squared maximal estimator error:
		\begin{equation*}
			\mathbb{E}\left[ (\text{Term A})^2 \right]
			\le \mathbb{E}\left[ \max_{1 \le e \le E} \|\widehat{\bm{\beta}}^{(e)} - \bm{\beta}_*^{(e)}\|_F^2 \right]
			\le \mathcal{O}\left( \frac{R_c^2(d_\beta+\log E)}{\mu^2 n_{\min}} \right)
		\end{equation*}

		For \textbf{$(\text{Term B})^2$}, use the same deterministic decomposition as in Lemma~\ref{lem:wasserstein_approx}:
		\begin{equation*}
			\text{Term B}
			\le
			L_\beta\left(
			\mathcal{W}_1(\widehat{\nu}_0,\nu_0)
			+\max_{1\le e\le E}\mathcal{W}_1(\nu_e,\widehat{\nu}_e)
			\right).
		\end{equation*}
		Therefore, by $(a+b)^2\le 2(a^2+b^2)$ and Lemma~\ref{lem:wasserstein_concentration},
		\begin{equation*}
			\mathbb{E}\left[ (\text{Term B})^2 \right]
			\le
			\mathcal{O}\left( \gamma(n_0, d_s)^2 + \Gamma(n_{\min}, E, d_s)^2 \right).
		\end{equation*}
		
		For \textbf{$(\text{Term C})^2$}, the proof of Lemma~\ref{lem:knn_interpolation} gives
		\begin{equation*}
			\text{Term C}
			\le
			L_\beta\left(
			R_K^*
			+\max_{1\le e\le E}\mathcal{W}_1(\widehat{\nu}_e,\nu_e)
			+\mathcal{W}_1(\widehat{\nu}_0,\nu_0)
			\right).
		\end{equation*}
		The empirical terms are bounded by the same squared empirical-measure rate used for Term B. It remains to bound $\mathbb{E}[(R_K^*)^2]$. For $d_{\mathcal{M}}=1$, the second raw moment of the Beta order statistic gives
		\begin{equation*}
			\mathbb{E}[U_{(K)}^2] = \frac{K(K+1)}{(E+1)(E+2)} = \mathcal{O}\left( \left(\frac{K}{E}\right)^2 \right)
		\end{equation*}
		and therefore $\mathbb{E}[(R_K^*)^2]\le \mathcal{O}((K/E)^2)$. For $d_{\mathcal{M}}\ge 2$, the function $x^{2/d_{\mathcal{M}}}$ is concave, so Jensen's inequality yields
		\begin{equation*}
			\mathbb{E}[(R_K^*)^2]
			\le
			C\mathbb{E}[U_{(K)}^{2/d_{\mathcal{M}}}]
			\le
			C\left(\mathbb{E}[U_{(K)}]\right)^{2/d_{\mathcal{M}}}
			=
			\mathcal{O}\left( \left(\frac{K}{E}\right)^{\frac{2}{d_{\mathcal{M}}}} \right).
		\end{equation*}
		Combining this with the empirical terms and absorbing the repeated empirical contribution into the Term B rate,
		\begin{equation*}
			\mathbb{E}\left[ (\text{Term C})^2 \right]
			\le
			\mathcal{O}\left( \left(\frac{K}{E}\right)^{\frac{2}{d_{\mathcal{M}}}} \right)
			+
			\mathcal{O}\left( \gamma(n_0,d_s)^2+\Gamma(n_{\min},E,d_s)^2 \right).
		\end{equation*}
		
		Summing these three expected squared bounds and substituting them back into the Taylor expansion framework establishes the final expected excess risk:
		\begin{equation*}
			\begin{aligned}
				\mathbb{E} \left[ \mathcal{R}_0(\widehat{\bm{\beta}}^{(0)}) - \mathcal{R}_0(\bm{\beta}_*^{(0)}) \right]
				\le \mathcal{O}\Bigg(
				& \frac{R_c^4(d_\beta+\log E)}{\mu^2 n_{\min}}
				+ R_c^2\big(\gamma(n_0, d_s)^2 + \Gamma(n_{\min}, E, d_s)^2\big) \\
				& + R_c^2\left(\frac{K}{E}\right)^{\frac{2}{d_{\mathcal{M}}}}
				\Bigg).
			\end{aligned}
		\end{equation*}
	\end{proof}

	\subsection{Proof of Lemma~\ref{lem:centered_logit_excess}}

	\begin{proof}
		For a fixed pair $(\bm{Z}_c,c)$, define the conditional cross-entropy risk as a function of the logit vector $\bm{u}\in\mathbb{R}^C$:
		\begin{equation*}
			\rho_{\bm{Z}_c,c}(\bm{u})
			=
			\mathbb{E}\!\left[
			\ell_{\mathrm{CE}}(\bm{u},Y)
			\mid
			\bm{Z}_c,c
			\right].
		\end{equation*}
		Because the softmax probabilities are invariant under common logit translations,
		\begin{equation*}
			\mathrm{Softmax}(\bm{u}+a\bm{1})
			=
			\mathrm{Softmax}(\bm{u})
			\qquad\text{for every }a\in\mathbb{R},
		\end{equation*}
		the conditional risk depends only on the centered component of the logits. Hence
		\begin{equation*}
			\rho_{\bm{Z}_c,c}(\bm{u})
			=
			\rho_{\bm{Z}_c,c}(\bm{\Pi}_C\bm{u}).
		\end{equation*}
		Let $\bm{u}=\bm{\Pi}_C f(\bm{Z})$ and $\bm{u}^*=\bm{\Pi}_C f^*(\bm{Z}_c,c)$. For the lower bound, assume $\bm{u},\bm{u}^*\in\mathcal{U}_R$ almost surely. By Assumption~\ref{app:ass:glm_regularity_full}, $\rho_{\bm{Z}_c,c}$ is $\mu_{\ell}$-strongly convex on $\mathcal{U}_R$. Therefore,
		\begin{equation*}
			\rho_{\bm{Z}_c,c}(\bm{u})
			\ge
			\rho_{\bm{Z}_c,c}(\bm{u}^*)
			+
			\left\langle
			\nabla_{\mathcal{S}_C}\rho_{\bm{Z}_c,c}(\bm{u}^*),
			\bm{u}-\bm{u}^*
			\right\rangle
			+
			\frac{\mu_{\ell}}{2}\|\bm{u}-\bm{u}^*\|_2^2,
		\end{equation*}
		where $\nabla_{\mathcal{S}_C}$ denotes the gradient restricted to $\mathcal{S}_C$. At the Bayes-optimal centered logit $\bm{u}^*$, the restricted first-order term vanishes:
		\begin{equation*}
			\left\langle
			\nabla_{\mathcal{S}_C}\rho_{\bm{Z}_c,c}(\bm{u}^*),
			\bm{u}-\bm{u}^*
			\right\rangle
			=
			0.
		\end{equation*}
		Substituting $\bm{u}=\bm{\Pi}_C f(\bm{Z})$ and $\bm{u}^*=\bm{\Pi}_C f^*(\bm{Z}_c,c)$ gives the pointwise inequality
		\begin{equation*}
			\rho_{\bm{Z}_c,c}(f(\bm{Z}))
			-
			\rho_{\bm{Z}_c,c}(f^*(\bm{Z}_c,c))
			\ge
			\frac{\mu_{\ell}}{2}
			\left\|
			\bm{\Pi}_C\!\left(f(\bm{Z})-f^*(\bm{Z}_c,c)\right)
			\right\|_2^2.
		\end{equation*}
		This proves the lower bound after taking expectation over $(\bm{Z}_c,\bm{Z}_s,c)$.

		For the upper bound, no bounded-logit condition is needed. Assumption~\ref{app:ass:glm_regularity_full} gives global $L_{\ell}$-smoothness on $\mathcal{S}_C$, so for arbitrary finite centered logits the standard smoothness inequality at $\bm{u}^*$ yields
		\begin{equation*}
			\rho_{\bm{Z}_c,c}(\bm{u})
			\le
			\rho_{\bm{Z}_c,c}(\bm{u}^*)
			+
			\left\langle
			\nabla_{\mathcal{S}_C}\rho_{\bm{Z}_c,c}(\bm{u}^*),
			\bm{u}-\bm{u}^*
			\right\rangle
			+
			\frac{L_{\ell}}{2}\|\bm{u}-\bm{u}^*\|_2^2.
		\end{equation*}
		The first-order term again vanishes because $\bm{u}^*$ is the conditional Bayes optimum on $\mathcal{S}_C$. Hence
		\begin{equation*}
			\rho_{\bm{Z}_c,c}(f(\bm{Z}))
			-
			\rho_{\bm{Z}_c,c}(f^*(\bm{Z}_c,c))
			\le
			\frac{L_{\ell}}{2}
			\left\|
			\bm{\Pi}_C\!\left(f(\bm{Z})-f^*(\bm{Z}_c,c)\right)
			\right\|_2^2.
		\end{equation*}
		Taking expectation over $(\bm{Z}_c,\bm{Z}_s,c)$ completes the proof.
	\end{proof}

	\subsection{Proof of Lemma~\ref{lem:pooled_projection}}

	\begin{proof}
		For $\bm{W}_c$ and $\bm{W}_s$, define the centered approximation objective
		\begin{equation*}
			J(\bm{W}_c,\bm{W}_s)
			=
			\mathbb{E}
			\left[
			\left\|
			\bm{\Pi}_C
			\left(
			\bm{W}_c\bm{Z}_c+\bm{W}_s\bm{Z}_s
			-
			(\bm{\beta}_{\mathrm{base}}+c\bm{\beta}_{\mathrm{shift}})\bm{Z}_c
			\right)
			\right\|_2^2
			\right].
		\end{equation*}
		Set
		\begin{equation*}
			\bm{U}
			=
			\bm{\Pi}_C(\bm{W}_c-\bm{\beta}_{\mathrm{base}}),
			\qquad
			\bm{V}
			=
			\bm{\Pi}_C\bm{W}_s,
			\qquad
			\bm{B}
			=
			\bm{\Pi}_C\bm{\beta}_{\mathrm{shift}}.
		\end{equation*}
		Using the linearity of $\bm{\Pi}_C$, the objective becomes
		\begin{equation*}
			J(\bm{W}_c,\bm{W}_s)
			=
			\mathbb{E}
			\left[
			\left\|
			(\bm{U}-c\bm{B})\bm{Z}_c+\bm{V}\bm{Z}_s
			\right\|_2^2
			\right].
		\end{equation*}
		Expanding the squared norm gives
		\begin{equation*}
			\begin{aligned}
			J(\bm{W}_c,\bm{W}_s)
			&=
			\mathbb{E}
			\left[
			\left\|
			(\bm{U}-c\bm{B})\bm{Z}_c
			\right\|_2^2
			\right]
			+
			\mathbb{E}
			\left[
			\left\|
			\bm{V}\bm{Z}_s
			\right\|_2^2
			\right] \\
			&\quad
			+
			2\,
			\mathbb{E}
			\left[
			\left((\bm{U}-c\bm{B})\bm{Z}_c\right)^\top
			\bm{V}\bm{Z}_s
			\right].
			\end{aligned}
		\end{equation*}
		The cross term is zero. Indeed, conditioning on $(c,\bm{Z}_s)$ and using Assumption~\ref{app:ass:comparison_shift_full},
		\begin{equation*}
			\begin{aligned}
			&\mathbb{E}
			\left[
			\left((\bm{U}-c\bm{B})\bm{Z}_c\right)^\top
			\bm{V}\bm{Z}_s
			\mid c,\bm{Z}_s
			\right] \\
			&\quad =
			\mathbb{E}
			\left[
			\bm{Z}_c^\top
			(\bm{U}-c\bm{B})^\top
			\bm{V}\bm{Z}_s
			\mid c,\bm{Z}_s
			\right] \\
			&\quad =
			\mathbb{E}[\bm{Z}_c]^\top
			(\bm{U}-c\bm{B})^\top
			\bm{V}\bm{Z}_s
			=
			0.
			\end{aligned}
		\end{equation*}
		Thus the objective decomposes as
		\begin{equation*}
			J(\bm{W}_c,\bm{W}_s)
			=
			J_1(\bm{U})+J_2(\bm{V}),
		\end{equation*}
		where
		\begin{equation*}
			J_1(\bm{U})
			=
			\mathbb{E}_{c,\bm{Z}_c}
			\left[
			\left\|
			(\bm{U}-c\bm{B})\bm{Z}_c
			\right\|_2^2
			\right],
			\qquad
			J_2(\bm{V})
			=
			\mathbb{E}_{\bm{Z}_s}
			\left[
			\|\bm{V}\bm{Z}_s\|_2^2
			\right].
		\end{equation*}
		Since $J_2(\bm{V})\ge0$ and $J_2(\bm{0})=0$, the style-additive part cannot reduce the multiplicative approximation error. Hence it remains to minimize $J_1(\bm{U})$.

		For any fixed matrix $\bm{M}\in\mathbb{R}^{C\times d_c}$, the second moment identity for $\bm{Z}_c$ gives
		\begin{equation*}
			\begin{aligned}
			\mathbb{E}_{\bm{Z}_c}
			\left[
			\|\bm{M}\bm{Z}_c\|_2^2
			\right]
			&=
			\mathbb{E}_{\bm{Z}_c}
			\left[
			\mathrm{Tr}\!\left(
			\bm{M}\bm{Z}_c\bm{Z}_c^\top\bm{M}^\top
			\right)
			\right] \\
			&=
			\mathrm{Tr}\!\left(
			\bm{M}\,
			\mathbb{E}[\bm{Z}_c\bm{Z}_c^\top]\,
			\bm{M}^\top
			\right) \\
			&=
			\mathrm{Tr}\!\left(
			\bm{M}\bm{\Sigma}_c\bm{M}^\top
			\right)
			=
			\|\bm{M}\bm{\Sigma}_c^{1/2}\|_F^2.
			\end{aligned}
		\end{equation*}
		Applying this identity with $\bm{M}=\bm{U}-c\bm{B}$ yields
		\begin{equation*}
			J_1(\bm{U})
			=
			\mathbb{E}_{c}
			\left[
			\left\|
			(\bm{U}-c\bm{B})\bm{\Sigma}_c^{1/2}
			\right\|_F^2
			\right].
		\end{equation*}
		Let $\widetilde{\bm{U}}=\bm{U}\bm{\Sigma}_c^{1/2}$ and $\widetilde{\bm{B}}=\bm{B}\bm{\Sigma}_c^{1/2}$. Then
		\begin{equation*}
			J_1(\bm{U})
			=
			\mathbb{E}_{c}
			\left[
			\|\widetilde{\bm{U}}-c\widetilde{\bm{B}}\|_F^2
			\right].
		\end{equation*}
		Writing $\bar c=\mathbb{E}[c]$ and adding/subtracting $\bar c\,\widetilde{\bm{B}}$ gives the bias-variance decomposition
		\begin{equation*}
			\begin{aligned}
			\mathbb{E}_{c}
			\left[
			\|\widetilde{\bm{U}}-c\widetilde{\bm{B}}\|_F^2
			\right]
			&=
			\mathbb{E}_{c}
			\left[
			\|(\widetilde{\bm{U}}-\bar c\,\widetilde{\bm{B}})
			-
			(c-\bar c)\widetilde{\bm{B}}\|_F^2
			\right] \\
			&=
			\|\widetilde{\bm{U}}-\bar c\,\widetilde{\bm{B}}\|_F^2
			+
			\mathbb{E}\!\left[(c-\bar c)^2\right]
			\|\widetilde{\bm{B}}\|_F^2 \\
			&\quad
			-
			2\,
			\mathbb{E}[c-\bar c]\,
			\left\langle
			\widetilde{\bm{U}}-\bar c\,\widetilde{\bm{B}},
			\widetilde{\bm{B}}
			\right\rangle_F \\
			&=
			\|\widetilde{\bm{U}}-\bar c\,\widetilde{\bm{B}}\|_F^2
			+
			\mathrm{Var}(c)\|\widetilde{\bm{B}}\|_F^2.
			\end{aligned}
		\end{equation*}
		The first term is minimized at $\widetilde{\bm{U}}=\bar c\,\widetilde{\bm{B}}$. Because $\bm{\Sigma}_c\succ\bm{0}$, this choice is attainable by $\bm{U}=\bar c\,\bm{B}$, equivalently by any $\bm{W}_c$ satisfying
		\begin{equation*}
			\bm{\Pi}_C\bm{W}_c
			=
			\bm{\Pi}_C\bm{\beta}_{\mathrm{base}}
			+
			\bar c\,\bm{\Pi}_C\bm{\beta}_{\mathrm{shift}}.
		\end{equation*}
		Combining the minimized value of $J_1$ with the minimized value of $J_2$ gives
		\begin{equation*}
			\min_{\bm{W}_c,\bm{W}_s}J(\bm{W}_c,\bm{W}_s)
			=
			\mathrm{Var}(c)
			\left\|
			\bm{\Pi}_C\bm{\beta}_{\mathrm{shift}}\bm{\Sigma}_c^{1/2}
			\right\|_F^2.
		\end{equation*}
		By Assumption~\ref{app:ass:comparison_shift_full}, $R_W$ is chosen large enough that one minimizer described above belongs to $\mathcal{H}_{\mathrm{pool}}(R_W)$. Therefore, restricting the infimum to $\mathcal{H}_{\mathrm{pool}}(R_W)$ does not change the optimal value.
	\end{proof}

	\subsection{Proof of Corollary~\ref{cor:pooled_misspec}}

	\begin{proof}
		For any $f_{\mathrm{pool}}\in\mathcal{H}_{\mathrm{pool}}(R_W)$, the lower-bound half of Lemma~\ref{lem:centered_logit_excess} implies
		\begin{equation*}
			\mathbb{E}_{c\sim\pi_c}
			\left[
			\mathcal{R}_c(f_{\mathrm{pool}})
			-
			\mathcal{R}_c(f^*)
			\right]
			\ge
			\frac{\mu_{\ell}}{2}\,
			\mathbb{E}
			\left[
			\left\|
			\bm{\Pi}_C
			\left(
			f_{\mathrm{pool}}(\bm{Z}_c,\bm{Z}_s)
			-
			f^*(\bm{Z}_c,c)
			\right)
			\right\|_2^2
			\right].
		\end{equation*}
		Taking the infimum over $f_{\mathrm{pool}}\in\mathcal{H}_{\mathrm{pool}}(R_W)$ on both sides and substituting the exact additive approximation error from Lemma~\ref{lem:pooled_projection} yields
		\begin{equation*}
			\inf_{f_{\mathrm{pool}}\in\mathcal{H}_{\mathrm{pool}}(R_W)}
			\mathbb{E}_{c\sim\pi_c}
			\left[
			\mathcal{R}_c(f_{\mathrm{pool}})
			-
			\mathcal{R}_c(f^*)
			\right]
			\ge
			\frac{\mu_{\ell}}{2}\,
			\mathrm{Var}(c)
			\left\|
			\bm{\Pi}_C\bm{\beta}_{\mathrm{shift}}\bm{\Sigma}_c^{1/2}
			\right\|_F^2.
		\end{equation*}
		The right-hand side is positive whenever $\mathrm{Var}(c)>0$ and $\|\bm{\Pi}_C\bm{\beta}_{\mathrm{shift}}\bm{\Sigma}_c^{1/2}\|_F>0$. The first condition states that the domain population contains distinct domains, and the second states that the domain shift changes the identifiable centered logits. Under these non-degeneracy conditions, the pooled-joint additive class has a positive structural misspecification error.
	\end{proof}

	\subsection{Proof of Corollary~\ref{cor:ladder_irm_comparison}}

	\begin{proof}
		Fix the target domain coordinate $c_0$. Under Assumption~\ref{app:ass:comparison_shift_full}, the target-domain Bayes-optimal parameter is
		\begin{equation*}
			\bm{\beta}_*^{(c_0)}
			=
			\bm{\beta}_{\mathrm{base}}+c_0\bm{\beta}_{\mathrm{shift}}.
		\end{equation*}
		By Assumption~\ref{app:ass:comparison_shift_full}, the oracle/asymptotic \method{} parameter and the fixed invariant parameter have identifiable centered differences
		\begin{equation*}
			\begin{aligned}
			\bm{\Pi}_C(\bm{\beta}_{\mathrm{L}}-\bm{\beta}_*^{(c_0)})
			&=
			(\widehat c_{\mathrm{L}}-c_0)\bm{\Pi}_C\bm{\beta}_{\mathrm{shift}},\\
			\bm{\Pi}_C(\bm{\beta}_{\mathrm{inv}}-\bm{\beta}_*^{(c_0)})
			&=
			(c_{\mathrm{inv}}-c_0)\bm{\Pi}_C\bm{\beta}_{\mathrm{shift}}.
			\end{aligned}
		\end{equation*}
		Let
		\begin{equation*}
			\bm{B}
			=
			\bm{\Pi}_C\bm{\beta}_{\mathrm{shift}}\bm{\Sigma}_c^{1/2}.
		\end{equation*}
		Using the second-moment identity from the proof of Lemma~\ref{lem:pooled_projection}, for any scalar $a$,
		\begin{equation*}
			\mathbb{E}_{\bm{Z}_c}
			\left[
			\|a\,\bm{\Pi}_C\bm{\beta}_{\mathrm{shift}}\bm{Z}_c\|_2^2
			\right]
			=
			a^2\|\bm{B}\|_F^2.
		\end{equation*}
		Applying the global upper-smoothness half of Lemma~\ref{lem:centered_logit_excess} to $f_{\mathrm{L}}$ on the fixed target domain $c_0$ gives
		\begin{equation*}
			\mathcal{R}_{c_0}(f_{\mathrm{L}})
			-
			\mathcal{R}_{c_0}(f^*)
			\le
			\frac{L_{\ell}}{2}
			(c_0-\widehat c_{\mathrm{L}})^2
			\|\bm{B}\|_F^2.
		\end{equation*}
		Applying the lower half of the same lemma to $f_{\mathrm{inv}}$ gives
		\begin{equation*}
			\mathcal{R}_{c_0}(f_{\mathrm{inv}})
			-
			\mathcal{R}_{c_0}(f^*)
			\ge
			\frac{\mu_{\ell}}{2}
			(c_0-c_{\mathrm{inv}})^2
			\|\bm{B}\|_F^2.
		\end{equation*}
		By the non-degeneracy assumption in the corollary, $\|\bm{B}\|_F^2>0$. Since $c_{\mathrm{inv}}\neq c_0$ by Assumption~\ref{app:ass:comparison_shift_full}, the denominator is strictly positive. Dividing the two inequalities yields
		\begin{equation*}
			\frac{
			\mathcal{R}_{c_0}(f_{\mathrm{L}})-\mathcal{R}_{c_0}(f^*)
			}{
			\mathcal{R}_{c_0}(f_{\mathrm{inv}})-\mathcal{R}_{c_0}(f^*)
			}
			\le
			\frac{L_{\ell}}{\mu_{\ell}}\,
			\frac{(c_0-\widehat c_{\mathrm{L}})^2}{(c_0-c_{\mathrm{inv}})^2}.
		\end{equation*}
		The strict improvement condition follows immediately: if
		\begin{equation*}
			|c_0-\widehat c_{\mathrm{L}}|
			<
			\sqrt{\frac{\mu_{\ell}}{L_{\ell}}}\,
			|c_0-c_{\mathrm{inv}}|,
		\end{equation*}
		then the right-hand side is strictly smaller than $1$, and hence
		\begin{equation*}
			\mathcal{R}_{c_0}(f_{\mathrm{L}})-\mathcal{R}_{c_0}(f^*)
			<
			\mathcal{R}_{c_0}(f_{\mathrm{inv}})-\mathcal{R}_{c_0}(f^*).
		\end{equation*}
	\end{proof}
	
	\subsection{Proof of Corollary~\ref{cor:generic_representation}}
	
	\begin{proof}
		For a fixed number of source domains $E$, this corollary is conditional on Assumption~\ref{app:ass:generic_representation_error_full}: it analyzes any learned representation satisfying the perturbation model, not a consequence of Proposition~\ref{pro:diagonal}.
        For each source domain, define the perturbed source estimator by
        \[
        \widetilde{\bm{\beta}}^{(e)}
        \in
        \arg\min_{\bm{\beta}\in\mathcal{B}}
        \widetilde{\mathcal{R}}^{\mathrm{emp}}_e(\bm{\beta}),
        \]
        where $\widetilde{\mathcal{R}}^{\mathrm{emp}}_e(\bm{\beta})=n_e^{-1}\sum_{i=1}^{n_e}\ell_{\mathrm{CE}}(\bm{\beta}\widetilde{\bm z}_{c,i}^{(e)},y_i^{(e)})$ is the empirical cross-entropy risk built from perturbed causal features. For this representation-bound corollary, use the same target-risk notation for the cross-entropy risk conditional on the target summary:
        \[
        \mathcal{R}_0(\bm{\beta})
        =
        \mathbb{E}\!\left[
        \ell_{\mathrm{CE}}\bigl(\bm{\beta}(\bm{Z}_c^{(0)}+\bm{\xi}_c^{(0)}),Y^{(0)}\bigr)
        \mid \widehat{\nu}_0
        \right].
        \]
		Let $\widehat{\nu}_e=\frac{1}{n_e}\sum_{i=1}^{n_e}\delta_{\bm{z}_{s,i}^{(e)}}$ denote the oracle empirical style measure used in Theorem~\ref{thm:param_bound}, and let $\widehat{\bm{\beta}}^{(e)}$ denote the source estimator trained with the oracle causal representation. We first record two deterministic perturbation inequalities.
		
		\textbf{Empirical-measure perturbation.}
		For each domain $e$, couple the $i$-th oracle style point $\bm{z}_{s,i}^{(e)}$ with its generic representation counterpart $\widetilde{\bm{z}}_{s,i}^{(e)}$. This admissible coupling gives
		\begin{align*}
			\mathcal{W}_1(\widetilde{\nu}_e,\widehat{\nu}_e)
			&\le
			\int
			\|\bm{u}-\bm{v}\|_2\,
			d\pi_e(\bm{u},\bm{v})\\
			&=
			\frac{1}{n_e}
			\sum_{i=1}^{n_e}
			\left\|
			\widetilde{\bm{z}}_{s,i}^{(e)}
			-
			\bm{z}_{s,i}^{(e)}
			\right\|_2
			=
			\frac{1}{n_e}
			\sum_{i=1}^{n_e}
			\|\bm{\xi}_{s,i}^{(e)}\|_2.
		\end{align*}
		Here $\pi_e$ is the deterministic empirical coupling that assigns mass $1/n_e$ to each pair
		$(\widetilde{\bm{z}}_{s,i}^{(e)},\bm{z}_{s,i}^{(e)})$. Assumption~\ref{app:ass:generic_representation_error_full} controls the displacement moments:
		\[
			\mathbb{E}\|\bm{\xi}_{s,i}^{(e)}\|_2
			\le
			C\epsilon_{\mathrm{rep}},
			\qquad
			\mathbb{E}\|\bm{\xi}_{s,i}^{(e)}\|_2^2
			\le
			C\epsilon_{\mathrm{rep}}^2 .
		\]
		Consequently,
		\begin{align*}
			\mathbb{E}
			\mathcal{W}_1(\widetilde{\nu}_e,\widehat{\nu}_e)
			&\le
			\frac{1}{n_e}
			\sum_{i=1}^{n_e}
			\mathbb{E}
			\|\bm{\xi}_{s,i}^{(e)}\|_2
			\le
			C\epsilon_{\mathrm{rep}},
			\\
			\mathbb{E}
			\mathcal{W}_1(\widetilde{\nu}_e,\widehat{\nu}_e)^2
			&\le
			\mathbb{E}
			\left[
			\left(
			\frac{1}{n_e}
			\sum_{i=1}^{n_e}
			\|\bm{\xi}_{s,i}^{(e)}\|_2
			\right)^2
			\right]\\
			&\le
			\frac{1}{n_e}
			\sum_{i=1}^{n_e}
			\mathbb{E}
			\|\bm{\xi}_{s,i}^{(e)}\|_2^2
			\le
			C\epsilon_{\mathrm{rep}}^2,
		\end{align*}
		where the penultimate inequality is Jensen's inequality applied to the empirical average.
		
		\textbf{Source-estimator perturbation.}
		Let $\widehat{\mathcal{R}}_e$ be the empirical cross-entropy risk formed with the oracle causal representation, and let $\widetilde{\mathcal{R}}_e(\bm{\beta})=n_e^{-1}\sum_i\ell_{\mathrm{CE}}(\bm{\beta}\widetilde{\bm z}_{c,i}^{(e)},y_i^{(e)})$ be the empirical risk formed with the generic causal representation. Both minimizers are taken over the same convex set $\mathcal{B}$, so we avoid unconstrained first-order equalities. Let
		\[
		\Delta_e=\widetilde{\bm{\beta}}^{(e)}-\widehat{\bm{\beta}}^{(e)}.
		\]
		By the $\mu_{\mathrm{rep}}$-strong convexity of $\widetilde{\mathcal{R}}_e$ and constrained optimality of $\widetilde{\bm{\beta}}^{(e)}$,
		\[
		\frac{\mu_{\mathrm{rep}}}{2}\|\Delta_e\|_F^2
		\le
		\widetilde{\mathcal{R}}_e(\widehat{\bm{\beta}}^{(e)})-
		\widetilde{\mathcal{R}}_e(\widetilde{\bm{\beta}}^{(e)}).
		\]
		Since $\widehat{\bm{\beta}}^{(e)}$ minimizes $\widehat{\mathcal{R}}_e$ over $\mathcal{B}$,
		\[
		\widehat{\mathcal{R}}_e(\widehat{\bm{\beta}}^{(e)})-
		\widehat{\mathcal{R}}_e(\widetilde{\bm{\beta}}^{(e)})
		\le0.
		\]
		Thus
		\[
		\frac{\mu_{\mathrm{rep}}}{2}\|\Delta_e\|_F^2
		\le
		\int_0^1
		\left\langle
		\nabla(\widetilde{\mathcal{R}}_e-\widehat{\mathcal{R}}_e)(\widehat{\bm{\beta}}^{(e)}+t\Delta_e),
		\widehat{\bm{\beta}}^{(e)}-\widetilde{\bm{\beta}}^{(e)}
		\right\rangle_F dt.
		\]
		Taking absolute values, the representation-Lipschitz gradient condition gives, uniformly over $\bm{\beta}\in\mathcal{B}$,
        \begin{equation*}
        \left\|
        \nabla\widetilde{\mathcal{R}}_e(\bm{\beta})
        -
        \nabla\widehat{\mathcal{R}}_e(\bm{\beta})
        \right\|_F
        \le
        \frac{1}{n_e}\sum_{i=1}^{n_e}L_{\nabla z}\|\bm{\xi}_{c,i}^{(e)}\|_2 .
        \end{equation*}
		Consequently, after absorbing constants,
		\[
			\left\|
			\widetilde{\bm{\beta}}^{(e)}
			-
			\widehat{\bm{\beta}}^{(e)}
			\right\|_F
			\le
			C\frac{L_{\nabla z}}{\mu_{\mathrm{rep}}}
			\frac{1}{n_e}
			\sum_{i=1}^{n_e}
			\|\bm{\xi}_{c,i}^{(e)}\|_2 .
		\]
		Hence
		\begin{equation*}
			\mathbb{E}
			\left\|
			\widetilde{\bm{\beta}}^{(e)}
			-
			\widehat{\bm{\beta}}^{(e)}
			\right\|_F
			\le
			\mathcal{O}\left(
			\frac{L_{\nabla z}}{\mu_{\mathrm{rep}}}
			\epsilon_{\mathrm{rep}}
			\right),
			\qquad
			\mathbb{E}
			\left\|
			\widetilde{\bm{\beta}}^{(e)}
			-
			\widehat{\bm{\beta}}^{(e)}
			\right\|_F^2
			\le
			\mathcal{O}\left(
			\frac{L_{\nabla z}^2}{\mu_{\mathrm{rep}}^2}
			\epsilon_{\mathrm{rep}}^2
			\right).
		\end{equation*}
		
        Repeat the deterministic decomposition of Theorem~\ref{thm:param_bound} with generic measures; the generic weights are non-negative and sum to one:
        \begin{align*}
            \left\|
            \widetilde{\bm{\beta}}^{(0)}
            -
            \bm{\beta}_*^{(0)}
            \right\|_F
            \le&
            \sum_{e\in\widetilde{\mathcal{N}}_K(0)}
            \widetilde w_e
            \left\|
            \widetilde{\bm{\beta}}^{(e)}
            -
            \bm{\beta}_*^{(e)}
            \right\|_F
            +
            L_\beta
            \sum_{e\in\widetilde{\mathcal{N}}_K(0)}
            \widetilde w_e
            \mathcal{W}_1(\nu_e,\nu_0).
        \end{align*}
        The first term is bounded by
        \begin{align*}
            &\sum_{e\in\widetilde{\mathcal{N}}_K(0)}
            \widetilde w_e
            \left\|
            \widetilde{\bm{\beta}}^{(e)}
            -
            \bm{\beta}_*^{(e)}
            \right\|_F\\
            &\quad\le
            \sum_{e\in\widetilde{\mathcal{N}}_K(0)}
            \widetilde w_e
            \left\|
            \widehat{\bm{\beta}}^{(e)}
            -
            \bm{\beta}_*^{(e)}
            \right\|_F
            +
            \sum_{e\in\widetilde{\mathcal{N}}_K(0)}
            \widetilde w_e
            \left\|
            \widetilde{\bm{\beta}}^{(e)}
            -
            \widehat{\bm{\beta}}^{(e)}
            \right\|_F\\
            &\quad\le
            \max_{1\le e\le E}
            \left\|
            \widehat{\bm{\beta}}^{(e)}
            -
            \bm{\beta}_*^{(e)}
            \right\|_F
            +
            \max_{1\le e\le E}
            \left\|
            \widetilde{\bm{\beta}}^{(e)}
            -
            \widehat{\bm{\beta}}^{(e)}
            \right\|_F .
        \end{align*}
		Taking expectations, the first maximum is controlled by Lemma~\ref{lem:estimator_tail}. The second maximum is controlled by the uniform perturbation bound above. In the fixed-source formulation stated in the main text, the maximum over $e=1,\ldots,E$ is absorbed into the constant; equivalently,
		\begin{align*}
			\mathbb{E}
			\max_{1\le e\le E}
			\left\|
			\widetilde{\bm{\beta}}^{(e)}
			-
			\widehat{\bm{\beta}}^{(e)}
			\right\|_F
			\le
			\mathcal{O}\left(
			\frac{L_{\nabla z}}{\mu_{\mathrm{rep}}}
			\epsilon_{\mathrm{rep}}
			\right).
		\end{align*}
		If the number of source domains is itself taken asymptotic, the same argument may retain the corresponding logarithmic maximal factor explicitly.
		
		For the second term, apply the Wasserstein triangle inequality through the generic empirical measures:
		\begin{equation*}
			\mathcal{W}_1(\nu_e,\nu_0)
			\le
			\mathcal{W}_1(\nu_e,\widetilde{\nu}_e)
			+
			\mathcal{W}_1(\widetilde{\nu}_e,\widetilde{\nu}_0)
			+
			\mathcal{W}_1(\widetilde{\nu}_0,\nu_0).
		\end{equation*}
		The outer two terms satisfy
		\begin{equation*}
			\mathcal{W}_1(\nu_e,\widetilde{\nu}_e)
			\le
			\mathcal{W}_1(\nu_e,\widehat{\nu}_e)
			+
			\mathcal{W}_1(\widehat{\nu}_e,\widetilde{\nu}_e),
			\qquad
			\mathcal{W}_1(\widetilde{\nu}_0,\nu_0)
			\le
			\mathcal{W}_1(\widehat{\nu}_0,\nu_0)
			+
			\mathcal{W}_1(\widetilde{\nu}_0,\widehat{\nu}_0).
		\end{equation*}
		Averaging these inequalities under the generic weights gives
		\begin{align*}
			&\mathbb{E}
			\left[
			\sum_{e\in\widetilde{\mathcal{N}}_K(0)}
			\widetilde w_e
			\mathcal{W}_1(\nu_e,\widetilde{\nu}_e)
			\right]\\
			&\quad\le
			\mathbb{E}
			\max_{1\le e\le E}
			\mathcal{W}_1(\nu_e,\widehat{\nu}_e)
			+
			\mathbb{E}
			\max_{1\le e\le E}
			\mathcal{W}_1(\widehat{\nu}_e,\widetilde{\nu}_e)\\
			&\quad\le
			\mathcal{O}\left(
			\Gamma(n_{\min},E,d_s)
			+
			\epsilon_{\mathrm{rep}}
			\right),
		\end{align*}
		and similarly
		\begin{align*}
			\mathbb{E}
			\mathcal{W}_1(\widetilde{\nu}_0,\nu_0)
			&\le
			\mathbb{E}
			\mathcal{W}_1(\widehat{\nu}_0,\nu_0)
			+
			\mathbb{E}
			\mathcal{W}_1(\widetilde{\nu}_0,\widehat{\nu}_0)\\
			&\le
			\mathcal{O}\left(
			\gamma(n_0,d_s)
			+
			\epsilon_{\mathrm{rep}}
			\right).
		\end{align*}
		
		It remains to control the generic KNN interpolation term. Let $\mathcal{N}^*_K(0)$ be the true $K$ nearest neighbors of $\nu_0$ on the oracle space of style distributions, as in the proof of Lemma~\ref{lem:knn_interpolation}. By the greedy definition of $\widetilde{\mathcal{N}}_K(0)$,
		\begin{align*}
			\max_{e\in\widetilde{\mathcal{N}}_K(0)}
			\mathcal{W}_1(\widetilde{\nu}_e,\widetilde{\nu}_0)
			&\le
			\max_{j\in\mathcal{N}^*_K(0)}
			\mathcal{W}_1(\widetilde{\nu}_j,\widetilde{\nu}_0)\\
			&\le
			R_K^*
			+
			\max_{1\le j\le E}
			\mathcal{W}_1(\nu_j,\widehat{\nu}_j)
			+
			\max_{1\le j\le E}
			\mathcal{W}_1(\widehat{\nu}_j,\widetilde{\nu}_j)\\
			&\quad+
			\mathcal{W}_1(\nu_0,\widehat{\nu}_0)
			+
			\mathcal{W}_1(\widehat{\nu}_0,\widetilde{\nu}_0).
		\end{align*}
		Taking expectations term by term gives
		\begin{align*}
			&\mathbb{E}
			\max_{e\in\widetilde{\mathcal{N}}_K(0)}
			\mathcal{W}_1(\widetilde{\nu}_e,\widetilde{\nu}_0)\\
			&\quad\le
			\mathbb{E}R_K^*
			+
			\mathbb{E}
			\max_{1\le j\le E}
			\mathcal{W}_1(\nu_j,\widehat{\nu}_j)
			+
			\mathbb{E}
			\max_{1\le j\le E}
			\mathcal{W}_1(\widehat{\nu}_j,\widetilde{\nu}_j)\\
			&\qquad+
			\mathbb{E}
			\mathcal{W}_1(\nu_0,\widehat{\nu}_0)
			+
			\mathbb{E}
			\mathcal{W}_1(\widehat{\nu}_0,\widetilde{\nu}_0)\\
			&\quad\le
			\mathcal{O}\left(
			\left(\frac{K}{E}\right)^{1/d_{\mathcal{M}}}
			+
			\Gamma(n_{\min},E,d_s)
			+
			\gamma(n_0,d_s)
			+
			\epsilon_{\mathrm{rep}}
			\right).
		\end{align*}
		This argument does not require the top-$K$ set to be stable under perturbation; it only uses the order-statistic comparison with the true $K$ nearest oracle neighbors.
		
		Combining the preceding bounds gives
		\begin{equation*}
			\begin{aligned}
			\mathbb{E}
			\left[
			\left\|
			\widetilde{\bm{\beta}}^{(0)}
			-
			\bm{\beta}_*^{(0)}
			\right\|_F
			\right]
			\le\;&
			\mathcal{O}\left(
			\frac{R_c}{\mu}\sqrt{\frac{d_\beta+\log E}{n_{\min}}}
			+
			\gamma(n_0,d_s)
			+
			\Gamma(n_{\min},E,d_s)
			+
			\left(\frac{K}{E}\right)^{1/d_{\mathcal{M}}}
			\right)
			\\
			&+
			\mathcal{O}\left[
			\left(
			\frac{L_{\nabla z}}{\mu_{\mathrm{rep}}}
			+
			L_\beta
			\right)
			\epsilon_{\mathrm{rep}}
			\right],
			\end{aligned}
		\end{equation*}
		which proves the parameter bound.
		
		For the excess-risk statement we first record the squared parameter bound. Let
		\[
			\mathcal{E}_{\mathrm{orc}}
			=
			\frac{R_c}{\mu}\sqrt{\frac{d_\beta+\log E}{n_{\min}}}
			+
			\gamma(n_0,d_s)
			+
			\Gamma(n_{\min},E,d_s)
			+
			\left(\frac{K}{E}\right)^{1/d_{\mathcal{M}}},
			\qquad
			\mathcal{E}_{\mathrm{rep}}
			=
			\left(
			\frac{L_{\nabla z}}{\mu_{\mathrm{rep}}}
			+
			L_\beta
			\right)
			\epsilon_{\mathrm{rep}} .
		\]
		The deterministic decomposition above has the form
		\[
			\left\|
			\widetilde{\bm{\beta}}^{(0)}
			-
			\bm{\beta}_*^{(0)}
			\right\|_F
			\le
			A_{\mathrm{glm}}
			+
			A_{\mathrm{meas}}
			+
			A_{\mathrm{knn}}
			+
			A_{\mathrm{rep}},
		\]
		where the first three terms are the oracle GLM, empirical-measure, and KNN-radius terms, while $A_{\mathrm{rep}}$ collects the causal-representation estimator perturbation and the style-measure perturbation. Using
		$(a_1+a_2+a_3+a_4)^2\le4\sum_{j=1}^4a_j^2$ and the second-moment versions of Lemmas~\ref{lem:estimator_tail}, \ref{lem:wasserstein_concentration}, and \ref{lem:knn_interpolation},
		\begin{align*}
			\mathbb{E}
			\left[
			\left\|
			\widetilde{\bm{\beta}}^{(0)}
			-
			\bm{\beta}_*^{(0)}
			\right\|_F^2
			\right]
			&\le
			C
			\mathbb{E}
			\left[
			A_{\mathrm{glm}}^2
			+
			A_{\mathrm{meas}}^2
			+
			A_{\mathrm{knn}}^2
			+
			A_{\mathrm{rep}}^2
			\right]\\
			&\le
			\mathcal{O}\left(
			\mathcal{E}_{\mathrm{orc}}^2
			+
			\mathcal{E}_{\mathrm{rep}}^2
			\right).
		\end{align*}
		Equivalently, writing out the rates,
		\begin{align*}
			\mathbb{E}
			\left[
			\left\|
			\widetilde{\bm{\beta}}^{(0)}
			-
			\bm{\beta}_*^{(0)}
			\right\|_F^2
			\right]
			\le\;&
			\mathcal{O}\left(
			\frac{R_c^2(d_\beta+\log E)}{\mu^2 n_{\min}}
			+
			\gamma(n_0,d_s)^2
			+
			\Gamma(n_{\min},E,d_s)^2
			+
			\left(\frac{K}{E}\right)^{2/d_{\mathcal{M}}}
			\right)\\
			&+
			\mathcal{O}\left[
			\left(
			\frac{L_{\nabla z}}{\mu_{\mathrm{rep}}}
			+
			L_\beta
			\right)^2
			\epsilon_{\mathrm{rep}}^2
			\right].
		\end{align*}
		For a target example, the centered-logit discrepancy between the generic-representation predictor and the oracle predictor is bounded by
		\begin{align*}
			\left\|
			\bm{\Pi}_C
			\left[
			\widetilde{\bm{\beta}}^{(0)}
			(\bm{Z}_c+\bm{\xi}_c)
			-
			\bm{\beta}_*^{(0)}\bm{Z}_c
			\right]
			\right\|_2
			&\le
			\left\|
			\left(
			\widetilde{\bm{\beta}}^{(0)}
			-
			\bm{\beta}_*^{(0)}
			\right)
			\bm{Z}_c
			\right\|_2
			+
			\left\|
			\widetilde{\bm{\beta}}^{(0)}
			\bm{\xi}_c
			\right\|_2\\
			&\le
			R_c
			\left\|
			\widetilde{\bm{\beta}}^{(0)}
			-
			\bm{\beta}_*^{(0)}
			\right\|_F
			+
			R_\beta
			\|\bm{\xi}_c\|_2 .
		\end{align*}
		Squaring the preceding display and using $(a+b)^2\le2a^2+2b^2$ gives
		\begin{align*}
			&\left\|
			\bm{\Pi}_C
			\left[
			\widetilde{\bm{\beta}}^{(0)}
			(\bm{Z}_c+\bm{\xi}_c)
			-
			\bm{\beta}_*^{(0)}\bm{Z}_c
			\right]
			\right\|_2^2\\
			&\quad\le
			2R_c^2
			\left\|
			\widetilde{\bm{\beta}}^{(0)}
			-
			\bm{\beta}_*^{(0)}
			\right\|_F^2
			+
			2R_\beta^2
			\|\bm{\xi}_c\|_2^2 .
		\end{align*}
		Applying the global upper-smoothness half of Lemma~\ref{lem:centered_logit_excess} and then taking expectation,
		\begin{align*}
			&\mathbb{E}
			\left[
			\mathcal{R}_0(\widetilde{\bm{\beta}}^{(0)})
			-
			\mathcal{R}_0(\bm{\beta}_*^{(0)})
			\right]\\
			&\quad\le
			C
			\mathbb{E}
			\left[
			\left\|
			\bm{\Pi}_C
			\left[
			\widetilde{\bm{\beta}}^{(0)}
			(\bm{Z}_c+\bm{\xi}_c)
			-
			\bm{\beta}_*^{(0)}\bm{Z}_c
			\right]
			\right\|_2^2
			\right]\\
			&\quad\le
			C R_c^2
			\mathbb{E}
			\left[
			\left\|
			\widetilde{\bm{\beta}}^{(0)}
			-
			\bm{\beta}_*^{(0)}
			\right\|_F^2
			\right]
			+
			C R_\beta^2
			\mathbb{E}\|\bm{\xi}_c\|_2^2\\
			&\quad\le
			\mathcal{O}\left(
			R_c^2\mathcal{E}_{\mathrm{orc}}^2
			+
			R_c^2\mathcal{E}_{\mathrm{rep}}^2
			+
			R_\beta^2\epsilon_{\mathrm{rep}}^2
			\right).
		\end{align*}
		The last term is the direct logit perturbation caused by evaluating the final classifier on $\bm{Z}_c+\bm{\xi}_c$ rather than on $\bm{Z}_c$.
	\end{proof}

\end{document}